\newcites{S}{References}
\newtheorem{theorem}{Theorem}
\newtheorem{lemma}[theorem]{Lemma}
\newtheorem{proposition}[theorem]{Proposition}
\theoremstyle{definition}
\newenvironment{assumption*}[1]{\assumption}{\endassumption}
\newenvironment{remark}
  {\pushQED{\qed}\remarkx}
  {\popQED\endremarkx}
\numberwithin{equation}{section}
\newenvironment{equations}{\equation\aligned}{\endaligned\endequation}
\newcommand*{\rom}[1]{\expandafter\@slowromancap\romannumeral #1@}
\newcommand{\floor}[1]{\left\lfloor {#1} \right\rfloor}
\newcommand{\ceil}[1]{\left\lceil {#1} \right\rceil}
\newcommand{\fnorm}[1]{\left\| {#1} \right\|_{\textup{F}}}
\DeclareMathOperator*{\argmax}{\arg\!\max}
\def\iidsim{\stackrel{\textup{iid}}{\sim}}
\def\indsim{\stackrel{\textup{ind}}{\sim}}
\def\zero{\mathbf 0}
\def\one{\mathbf 1}
\def\ind{\mathbbm{1}}
\def\var{\textup{\textsc{Var}}}
\def\kl{\textup{\textsc{KL}}}
\def\supp{\textup{\textsc{supp}}}
\def\diag{\textup{\textsc{diag}}}
\def\tr{\textup{Tr}}
\def\Span{\textup{\textsc{span}}}
\def\rank{\textup{\textsc{rank}}}
\def\Ber{\texttt{\textup{Bernoulli}}}
\def\EXP{\texttt{\textup{Exp}}}
\def\GIG{\texttt{\textup{GIG}}}
\def\IG{\texttt{\textup{IG}}}
\def\Lap{\texttt{\textup{Lap}}}
\def\N{\texttt{\textup{N}}}
\def\d{\textup{d}}
\def\e{\textup{e}}
\def\E{\mathsf{E}}
\def\P{\mathsf{P}}
\def\R{\mathbb{R}}
\def\greekvectors#1{%
 \@for\next:=#1\do{%
    \def\X##1;{\expandafter\def\csname b##1\endcsname{\bm{\csname##1\endcsname}}}
    \expandafter\X\next;}
 \@for\next:=#1\do{%
    \def\X##1;{\expandafter\def\csname h##1\endcsname{\widehat{\csname##1\endcsname}}}
    \expandafter\X\next;}
 \@for\next:=#1\do{%
    \def\X##1;{\expandafter\def\csname c##1\endcsname{\check{\csname##1\endcsname}}}
    \expandafter\X\next;}
     \@for\next:=#1\do{%
    \def\X##1;{\expandafter\def\csname t##1\endcsname{\tilde{\csname##1\endcsname}}}
    \expandafter\X\next;}
 \@for\next:=#1\do{%
    \def\X##1;{\expandafter\def\csname hb##1\endcsname{\widehat{\bm{\csname##1\endcsname}}}}
    \expandafter\X\next;}
}
    \def\command@factory#1{\expandafter\def\csname #1\endcsname{\mathbf{#1}} }
    \def\command@factory#1{\expandafter\def\csname b#1\endcsname{\mathbbm{#1}} }
    \def\command@factory#1{\expandafter\def\csname c#1\endcsname{\mathcal{#1}} }
    \def\command@factory#1{\expandafter\def\csname s#1\endcsname{\mathsf{#1}} }
    \def\command@factory#1{\expandafter\def\csname f#1\endcsname{\mathfrak{#1}} }
\begin{document}

\begin{frontmatter}

\title{A Bayesian sparse factor model with adaptive posterior concentration\support{This work was supported by the National Research Foundation of Korea (NRF) grant funded by the Korea government (MSIT)
 (No. 2020R1A2C3A01003550 and No.  2022R1F1A1069695). Lizhen Lin would like to acknowledge the generous support of NSF grants DMS CAREER 1654579 and DMS 2113642.}}
\runtitle{Adaptive Bayesian factor model}

\begin{aug}
\author[A]{\fnms{Ilsang} \snm{Ohn}\ead[label=e1]{ilsang.ohn@inha.ac.kr}},
\author[B]{\fnms{Lizhen} \snm{Lin}\ead[label=e2]{lizhen.lin@nd.edu}}
\and
\author[C]{\fnms{Yongdai} \snm{Kim}\ead[label=e3]{ydkim0903@gmail.com}}

\address[A]{Department of Statistics, Inha University \printead{e1}}
\address[B]{Department of Applied and Computational Mathematics and Statistics, The University of Notre Dame \printead{e2}}
\address[C]{Department of Statistics, Seoul National University \printead{e3}}
\end{aug}

\runauthor{I. Ohn, L. Lin and Y. Kim}

\begin{abstract}
In this paper, we propose a new Bayesian inference method for a high-dimensional sparse factor model that allows both the factor dimensionality and the sparse structure of the loading matrix to be inferred. The novelty is to introduce a certain dependence between the sparsity level and the factor dimensionality, which leads to adaptive posterior concentration while keeping computational tractability. We show that the posterior distribution asymptotically concentrates on the true factor dimensionality, and more importantly, this posterior consistency is adaptive to the sparsity level of the true loading matrix and the noise variance. We also prove that the proposed Bayesian model attains the optimal detection rate of the factor dimensionality in a more general situation than those found in the literature. Moreover, we obtain a near-optimal posterior concentration rate of the covariance matrix. Numerical studies are conducted and show the superiority of the proposed method compared with other competitors. 
\end{abstract}

\begin{keyword}[class=MSC2020]
\kwd[Primary ]{62F15}
\kwd[; secondary ]{62G20}
\end{keyword}

\begin{keyword}
\kwd{Adaptive inference}
\kwd{sparse factor models}
\kwd{factor dimensionality}
\kwd{covariance matrix estimation}
\kwd{optimal posterior concentration rate}
\end{keyword}
\end{frontmatter}

\section{Introduction}

In this paper, we propose a novel Bayesian method for learning a high-dimensional sparse linear factor model and study asymptotic concentration properties of the posterior distribution. We consider the following linear factor model where  $p$-dimensional random vectors $\Y_1,\dots,\Y_n$ are distributed as
    \begin{equation}
    \label{eq:model}
    \Y_i|(\Z_i=\z_i) \indsim \N(\B\z_i, \psi\I), \quad \Z_i\iidsim \N(\zero, \I)
    \end{equation}
for $i\in[n]$ with $\B$ representing a $p\times \xi$ factor loading matrix, $\psi>0$ a noise variance and $\Z_i$ a $ \xi$-dimensional (latent) factor related to datum $\Y_i$, where $\xi\in\{1,\dots, p-1\}$. Under this model, the marginal distribution of the data $\Y_1,\dots, \Y_n$ is given by
    \begin{equation*}
        \Y_i\iidsim \N(\zero, \bSigma), \quad \bSigma := \B\B^\top + \psi\I.
    \end{equation*}
Therefore, correlations among the observed variables in each $\Y_i$ in the above factor model are explained by a low rank matrix $\B\B^\top$ which leads to  a substantial but efficient reduction of the model complexity. The factor model has been used in a broad range of high-dimensional inference tasks including covariance estimation \citep{fan2008high, fan2011high, fan2018large}, linear regression \citep{bai2006confidence, kneip2011factor, stock2002forecasting}, multiple testing under arbitrary dependence \citep{fan2012estimating, fan2018farmtest, leek2008general} and other supervised learning tasks \citep{fan2017sufficient,silva2011two}.

For high-dimensional data where the dimension $p$ is much large than the sample size $n,$ we need a low dimensional structure on $\B$ to have a consistent estimator and \textit{sparsity} is a popularly used condition, which assumes that true loading matrix $\B^\star$ is sparse in the sense that only few entries of $\B^\star$ is nonzero and the other entries are zero. There are various Bayesian models for sparse linear factor models including 
\cite{bhattacharya2011sparse, srivastava2017expandable,xie2018bayesian,ning2021spike}.

Along with considering the sparsity, determining the \textit{factor dimensionality} $\xi$ is an important and practical topic in factor modeling.  From a theoretical point of view,  an appropriate estimation of the factor dimensionality is required to optimize the bias-variance trade-off in a factor model. The factor dimensionality is also of practical interest, especially when it has a physical interpretation e.g.,  the number of interacting pathways in genomics \citep{carvalho2008high} and the number of personality traits in psychology \citep{caprara1993big}.

Frequentist approaches typically adopt a two-step procedure where the factor dimensionality is chosen or estimated before estimating the parameters in the model. Many consistent model selection methods have been proposed, which fit the factor models for different values of $ \xi$ and select the best $ \xi$ based on their choice of model selection criteria \citep{bai2002determining, bai2007determining}. Alternatively, the eigenvalues of the empirical covariance or correlation matrix can be used to estimate the factor dimensionality. Several procedures related to this approach have been proposed and yielded consistency \citep{ahn2013eigenvalue,fan2020estimating, lam2012factor, onatski2010determining}. The estimation of the factor dimensionality for high-dimensional sparse factor models has also been considered by \cite{cai2013sparse,cai2015optimal}.

On the Bayesian side, prior distributions which put prior mass on $\xi$ directly have been popularly used to infer the factor dimensionality. Examples are the spike and slab prior with the Poisson prior on $\xi$ \citep{pati2014posterior} and spike and slap priors with the Indian buffet process (IBP) \citep{chen2010bayesian, knowles2011nonparametric, rovckova2016fast, ohn2020posterior}.  Recently, a few works have provided theoretical results for the posterior distribution of the factor dimensionality under nonparametric priors. \citet{rovckova2016fast} considered a spike and slab prior with the one-parameter IBP and proved that the posterior probability of the factor dimensionality being upper bounded by a certain quantity converges to 1. But the upper bound is much larger than the true factor dimensionality. \citet{ohn2020posterior} derived posterior consistency of the factor dimensionality  under a spike and slab prior with the two-parameter IBP. However, their result is nonadaptive in the sense that the choice of hyperparameters of the prior distribution relies on the information or knowledge of the true sparsity of the loading matrix, which is unknown in practice. 

Another promising theoretical result was provided by \citet{gao2015rate}. The authors studied the Bayesian factor model in the context of sparse principle component analysis (PCA) in which the prior distribution concentrates its mass on the orthogonal loading matrix, i.e., $\B^\top\B$ is diagonal, and established the adaptive posterior consistency of the factor dimensionality. However, the orthogonal constraint makes it hard to compute posterior distribution and \citet{gao2015rate} only succeeded in implementing a posterior sampler for the factor model with a one-dimensional factor i.e., $\xi=1$. 

We propose a novel Bayesian model that overcomes the theoretical and practical limitations of the existing Bayesian approaches. A key feature of the proposed Bayesian model is that the sparsity and the factor dimensionality are negatively correlated under the prior distribution, and this a priori negative correlation between them helps to prevent overestimating the true factor dimensionality. This is a critical difference between the proposed prior and the widely used IBP-type priors. Yet, posterior computation can be carried out through a simple and  efficient Monte Carlo Markov chain (MCMC) algorithm. Our numerical studies show that the developed MCMC algorithm can apply to high-dimensional data without many hampers.
 
We thoroughly investigate the theoretical properties of the posterior distribution of the proposed Bayesian model.  We prove that the posterior distribution of the factor dimensionality converges to the true one. In particular, we prove that the proposed Bayesian model attains the optimal detection rate for the eigengap (i.e., the size of the smallest eigenvalue of the low rank part $\B^\star(\B^\star)^\top$ of the true covariance matrix $\bSigma^\star=\B^\star(\B^\star)^\top+\psi^\star\I$) for the consistency of the factor dimensionality.
We also show that the posterior distribution of the covariance matrix concentrates around the truth at a near optimal rate. 
The novelty of our results lies in that it does not require any prior knowledge of the true sparsity and noise variance and hence all the optimal theoretical properties of the posterior are \textit{adaptive} to the sparsity and noise variance. 

It should be noted  that the proposed Bayesian model has theoretical advantages over not only other Bayesian factor models but also existing frequentist's estimators of the factor dimensionality. The estimator of \citet{cai2013sparse} is adaptive to the true sparsity but requires a larger detection rate for the eigengap than the optimal one. On the other hand, the estimator of \citet{cai2015optimal} achieves the optimal detection rate but is not adaptive to the true sparsity. Moreover, both estimators assume the known noise variance, which limits their applicability.

The rest of the paper is organized as follows. In \cref{sec:method}, we introduce the proposed prior distribution and develop an efficient MCMC algorithm for sampling from the posterior distribution. In \cref{sec:theory},  asymptotic properties of the posterior distribution are derived. In \cref{sec:numerical}, we conduct simulation studies and real data analysis. \cref{sec:conclusion} concludes the paper. 

\subsection{Notation}

Let $\R$, $\R_+$ and $\bN$ be the sets of real numbers, positive numbers and natural numbers, respectively.  Let $\zero$ and $\one$  denote vectors of 0's and of 1's, respectively, where the dimensions of such vectors can differ according to the context. For a positive integer $p$, we let $[p]:=\{1,2,\dots,p\}$. For a real number $x$, $\floor{x}$ denote the largest integer less than or equal to $x$ and $\ceil{x}$ denote the smallest integer larger than or equal to $x$.  For two real numbers $a$ and $b$, we write $a\vee b:=\max\{a,b\}$ and $a\wedge b:=\min\{a,b\}$. For two positive sequences $\{a_n\}_{n\in \mathbb{N}}$ and $\{b_n\}_{n\in \mathbb{N}}$ we write $a_n\lesssim b_n$ or equivalently $b_n\gtrsim a_n$ if there exists a positive constant $C>0$ such that $a_n\le Cb_n$ for any $n\in \mathbb{N}$. Moreover, we write $a_n\asymp b_n$ if both $a_n\lesssim b_n$  and  $a_n\gtrsim b_n$ hold. We denote by $\ind(\cdot)$ the indicator function.

For a set $\cS$, $|\cS|$ denotes its cardinality. For a $p$-dimensional vector $\bbeta:=(\beta_j)_{j\in[p]}$, let $\|\bbeta\|_r:=\del[0]{\sum_{j=1}^p|\beta_j|^r}^{1/r}$ for  $r\ge1$ and $\|\bbeta\|_0:=\sum_{j=1}^p\ind(\beta_j\neq0)$. For a set $\cS\subset\{1,\dots,p\}$, define $\bbeta_{[\cS]}:=(\beta_j)_{j\in \cS}$.  For a $p\times q$-dimensional matrix  $\A:=(a_{jk})_{j\in [p], k\in[q]}$, we denote the spectral norm of the matrix $\A$ by $\norm{\A}$ and the Frobenius norm by $\fnorm{\A}$, that is, $\norm{\A}:=\sup_{\x\in\R^q:\|\x\|_2=1}\|\A\x\|_2$ and $\fnorm{\A}:=\sqrt{\tr(\A^\top\A)}$. Let $\norm{\A}_1$ be the vector $\ell_1$ norm of $\A$, i.e., $\norm{\A}_1:=\sum_{j=1}^p\sum_{k=1}^q|a_{jk}|$. For sets $\cS\subset[p]$ and $\cK\subset[q]$, we let $\A_{[\cS, \cK]}:=(a_{jk})_{j\in\cS, k\in\cK}$ which is the submatrix of $\A$ taking the rows in $\cS$ and columns in $\cK$. For notational simplicity, we write  $\A_{[:, \cK]}:=\A_{[[p], \cK]}$ and $\A_{[\cS, :]}:=\A_{[\cS, [q]]}$. Furthermore,  let $\A_{[j, :]}:=\A_{[\{j\}, :]}$ and $\A_{[:, k]}:=\A_{[:, \{k\}]}$, which denote the $j$-th row and $k$-th column of $\A$, respectively. We let $\lambda_1(\bSigma)\ge\lambda_2(\bSigma)\dots\ge\lambda_p(\bSigma)$ be the ordered eigenvalues and $|\bSigma|$ be the determinant of a $p\times p$-dimensional matrix $\bSigma$.  Let $\bS_{++}^p$ be a set of $p\times p$ symmetric positive definite matrices. 

For a given probability measure $G$, let $\P_G$ denote the probability or the expectation operator under the probability measure $G$. We denote by $\sp_G$ the probability density function of $G$ with respect to the Lebesgue measure if exists. For convenience, we write $\P_{\bSigma}:=\P_{\N(\zero, \bSigma)}$ and $\sp_{\bSigma}:=\sp_{\N(\zero, \bSigma)}$ for a normal distribution $\N(\zero, \bSigma)$. For $n\in\bN$, let $\P_G^{(n)}$ be the probability or the expectation under the product measure and, if exists,  $\sp_G^{(n)}$ its density function.

\section{Proposed prior and MCMC algorithm}
\label{sec:method}

In this section, we  design a novel prior tailored for the loading matrix in a factor model and develop a computationally efficient MCMC algorithm for sampling the posterior distribution. 

\subsection{Prior distribution}


The proposed prior on the loading matrix first samples the ``sparse structure'' of the loading matrix and then samples nonzero elements. Let  $\u:=(u_1,\dots, u_p)^\top\in\Delta_p:=\{0,1\}^p\setminus\{\zero\}$ and, for  a positive integer $q\in\bN$ we have chosen, $\v:=(v_1,\dots, v_{q})^\top\in\Delta_{q}:=\{0,1\}^q\setminus\{\zero\}$. They are latent indicator variables that determine nonzero rows and columns of the loading matrix $\B$,  respectively. Note that the sparsity of columns determines the factor dimensionality $\xi$, i.e., $\xi =\|\v\|_0$ and $q$ is a pre-specified upper bound of the factor dimensionality. We impose the prior distribution on $\u$ and $\v$ such that
        \begin{equations}
        \label{eq:prior_ind}
            \Pi(\u, \v)=& 
            \frac{Q_{A}(\|\u\|_0,\|\v\|_0)}{\sum_{\u'\in\Delta_{p}}\sum_{\v'\in\Delta_{q}}Q_{A}(\|\u'\|_0,\|\v'\|_0)}\ind(\u\in\Delta_p, \v\in\Delta_{q}) \\
            &\mbox{ with } Q_{A}(\omega,\xi):=Q_{A,p,q,n}(\omega,\xi):=\frac{1 }{\binom{p}{\omega}\binom{q}{\xi}}\exp({-A\omega\xi\log (p\vee n)})
        \end{equations}
for some $A>0$. Under this prior distribution, the non-sparsity $\omega=\|\u\|_0$ and the factor dimensionality $\xi=\|\v\|_0$ are negatively correlated in the sense that $\omega$ becomes smaller when $\xi$ is large and vice versa.  Note that $\|\u\|_0$ and $\|\v\|_0$ are negatively correlated in the proposed prior (\ref{eq:prior_ind}), which sharply contrasts with existing  IBP-type priors \citep{rovckova2016fast, ohn2020posterior} that assume the independence of $\|\u\|_0$ and $\|\v\|_0.$ The IBP-type priors, however, are known to have the posterior consistency only when the true sparsity is known. We devise the prior (\ref{eq:prior_ind}) to achieve an optimal posterior concentration rate even when the true sparsity is unknown.

Conditional on $\u$ and $\v,$ we then impose the prior distribution of the loading matrix $\B$ such that
    \begin{equation}
    \label{eq:prior_loading}
    \Pi(\B\in\cB|\u,\v)=\int_{\cB} \prod_{j=1}^p\prod_{k=1}^{q} \sbr{ \delta_0(\beta_{jk})}^{1-u_jv_k}\sbr{\sp_{\Lap(1)}(\beta_{jk})}^{u_jv_k}\d\beta_{jk}
    \end{equation}
for any measurable set $\cB\subset\R^{p\times q}$, where $\delta_0$ denotes the Dirac-delta function at $0$ and $\Lap(1)$ does  the Laplace distribution with scale 1. That is, independently for each loading $\beta_{jk}$, we consider a spike and slab type prior distribution with the Dirac spike and Laplace slab.  The use of the Laplace slab, which is more diffused than the normal distribution, is commonly used in Bayesian sparse factor models \citep{rovckova2016fast,xie2018bayesian,ning2021spike,ohn2020posterior} in order to reduce bias in the estimation of large loadings.

Lastly, we consider the inverse Gamma prior distribution for $\psi$ such that
    \begin{equation}
   \label{eq:prior_noise}
        \Pi(\psi\in\cB')=\P_{\IG(\a)}(\psi\in\cB').
   \end{equation}
for any measurable set $\cB'\subset\R$, where $\IG(\a)$ with $\a:=(a_1,a_2)\in\R_+^2$ denotes the inverse gamma distribution with shape $a_1$ and rate $a_2$.

The proposed prior is carefully designed to attain the posterior consistency of the factor dimensionality adaptively to the sparsity, therefore we call the prior defined through \labelcref{eq:prior_ind}, \labelcref{eq:prior_loading} and \labelcref{eq:prior_noise}  the \textit{adaptive spike and slab (AdaSS)} prior.

\subsection{Posterior computation}
\label{subsec:mcmc}

In this section, we develop an MCMC algorithm to compute the posterior distribution of the parameters $\u:=(u_j)_{j\in[p]}$, $\v:=(v_k)_{k\in[q]}$, $\B:=(\beta_{jk})_{j\in[p],k\in[q]}$, $\psi$ and the latent factors $\Z_1,\dots,\Z_n$. We first introduce additional notations. Let $Y_{ij}$ be the $j$-th element of $\Y_i$ and $Z_{ik}$ be the $k$-th element of $\Z_i$. Let $\cS:=\{j\in[p]:u_j=1\}$ and $\cK:=\{k\in[q]:v_k=1\}$. We use the notation $f(Y|X=x)$ to denote the conditional density of $Y$ given $X=x$. 

To make posterior sampling of the factor loading $\beta_{jk}$ easy, we employ the scale mixture representation of the Laplace distribution. Note that if $\beta_{jk}|\tau_{jk}\sim \N(0,\tau_{jk})$ and $\tau_{jk}\sim \EXP(1/2)$, then marginally we have $\beta_{jk}\sim \Lap(1),$ where $\EXP(1/2)$ stands for the exponential distribution with mean $2$. In the MCMC algorithm, we introduce auxiliary scale parameters $\tau_{jk}$ for $j\in[p]$ and $k\in[q]$. 

Then a single iteration of our proposed MCMC sampler goes as follows.

\null\noindent    
\textbf{Sample $\beta_{jk}$ for $j\in[p]$ and $k\in[q]$:}
We sample   $\beta_{jk}$ from 
    \begin{equation*}
        \beta_{jk}|- \sim
        \begin{cases}
        \N(\hat{\beta}_{jk}, \hat\tau_{jk}) & \mbox{if $u_j=1$ and $v_k=1$}\\
       \delta_0  & \mbox{otherwise },
        \end{cases}
    \end{equation*}
where
    \begin{align}
        \hbeta_{jk}&:= \htau_{jk}\cbr{\psi^{-1}\sum_{i=1}^nZ_{ik}\del[2]{Y_{ij}-\sum_{h\in[q]:h\neq k}Z_{ih}\beta_{jh}}} \label{eq:post_beta}\\
        \htau_{jk}&:=\del{\psi^{-1}\sum_{i=1}^nZ_{ik}^2+\tau_{jk}^{-1}}^{-1}. \label{eq:post_tau}
    \end{align}
  
\null\noindent
\textbf{Sample $\tau_{jk}$ for $j\in[p]$ and $k\in[q]$:}
We sample  $\tau_{jk}$ from 
    \begin{equation}
    \label{eq:sample_tau}
        \tau_{jk}|- \sim
        \begin{cases}
        \GIG(1, \beta_{jk}^2,\frac{1}{2}) & \mbox{if $u_j=1$},\\
        \EXP(\frac{1}{2}) & \mbox{otherwise,}
        \end{cases}
    \end{equation}
where $\GIG(a,b,c)$ denotes the generalized inverse Gaussian distribution with density $\sp_{\GIG(a,b,c)}(z) \propto z^{c-1} \e^{-(az + b/z)/2}\ind(z>0).$  
    
\null\noindent
\textbf{Sample $u_j$ for $j\in[p]$:} 
If $\cS\setminus\{j\}= \emptyset$, we set $u_j=1$.
Otherwise, we sample $u_j$ from $\Ber(\rho_j^{\text{row}}/(1+\rho_j^{\text{row}}))$ with
    \begin{equations}
        \rho_j^{\text{row}}
        &:=\frac{\Pi(u_j=1|-)}{\Pi(u_j=0|-)}\\
        &= \frac{Q_{A}(|\cS\setminus\{j\}|+1,|\cK|)}{Q_{A}(|\cS\setminus\{j\}|, |\cK|)}
        \frac{\prod_{i=1}^n f(\Y_{i}|u_j=1,\v,\T_{j,\cK}, \B_{[-j,:]},\psi, \Z_i)}{\prod_{i=1}^n f(\Y_{i}|u_j=0,\B_{[-j,:]},\psi, \Z_i)}   \\
        &= (p\vee n)^{-A|\cK|} \frac{\binom{p}{|\cS\setminus\{j\}|}}{\binom{p}{|\cS\setminus\{j\}|+1}}
        \frac{\int \prod_{i=1}^n f(\Y_{i}|\B,\psi,\Z_i) \prod_{k=1}^{q} \cbr{\sp_{\N(0,\tau_{jk})}(\beta_{jk})}^{v_k} \cbr{\delta_0 (\beta_{jk})}^{1-v_k}\d\beta_{jk}}{\prod_{i=1}^n f(\Y_{i}|\B_{[j,:]}=\zero, \B_{[-j,:]},\psi,\Z_i)}\\
        & =(p\vee n)^{-A|\cK|}\frac{|\cS\setminus\{j\}|+1}{p-|\cS\setminus\{j\}|}\sqrt{\frac{|\hat\T_{j,\cK}|}{|\T_{j,\cK}|}}\exp\del{\frac{1}{2}\hat\bbeta_{j,\cK}^\top \hat\T_{j,\cK}^{-1}\hat\bbeta_{j,\cK}},
    \end{equations}
where $ \B_{[-j,:]}:=\B_{[[p]\setminus\{j\},:]}$, $\T_{j,\cK}:=\diag((\tau_{jk})_{k\in\cK})$, $\hat\T_{j,\cK}:=\diag((\hat\tau_{jk})_{k\in\cK})$ and $\hat\bbeta_{j,\cK}:=(\hat\beta_{jk})_{k\in\cK}$.

\null\noindent
\textbf{Sample $v_k$ for $k\in[q]$.}
If $\cK\setminus\{k\}= \emptyset$,  we set $v_k=1$. 
Otherwise, we sample $v_k$ from $\Ber(\rho_k^{\text{col}}/(1+\rho_k^{\text{col}}))$ with
    \begin{equations}
        \rho_k^{\text{col}}
        &:=\frac{\Pi(v_k=1|-)}{\Pi(v_k=0| -)}\\
        &=  \frac{Q_{A}(|\cS|,|\cK\setminus\{k\}|+1)}{Q_{A}(|\cS|, |\cK\setminus\{k\}|)} 
        \frac{\prod_{i=1}^n f(\Y_{i}|v_k=1,\u,\T_{\cS,k}, \B_{[:,-k]},\psi, \Z_i)}{\prod_{i=1}^n f(\Y_{i}|v_k=0, \B_{[:,-k]},\psi, \Z_i)}\\
        &= (p\vee n)p^{-A|\cS|} \frac{\binom{q}{|\cK\setminus\{k\}|}} {\binom{q}{|\cK\setminus\{k\}|+1}}
            \frac{\int \prod_{i=1}^n f(\Y_{i}|\B,\psi,\Z_i)\prod_{j=1}^p \cbr{\sp_{\N(0,\tau_{jk})}(\beta_{jk})}^{u_j} \cbr{\delta_0 (\beta_{jk})}^{1-u_j}\d\beta_{jk}}{\prod_{i=1}^n f(\Y_{i}|\B_{[:,k]}=\zero, \B_{[:,-k]},\psi,\Z_i)}\\
        &= (p\vee n)^{-A|\cS|}   \frac{|\cK\setminus\{k\}|+1}{q-|\cK\setminus\{k\}|}
            \sqrt{\frac{|\hat\T_{\cS,k}|}{|\T_{\cS,k}|}}\exp\del{\frac{1}{2}\hat\bbeta_{\cS,k}^\top\hat\T_{\cS,k}^{-1}\hat\bbeta_{\cS,k}},
    \end{equations}
where $\B_{[:,-k]}:=\B_{[:,[q]\setminus\{k\}]}$, $\T_{\cS,k}:=\diag((\tau_{jk})_{j\in\cS})$, $\hat\T_{\cS,k}:=\diag((\hat\tau_{jk})_{j\in\cS})$ and $\hat\bbeta_{\cS,k}:=(\hat\beta_{jk})_{j\in\cS}$.

\null\noindent
\textbf{Sample $\Z_i$ for $i\in[n]$:}
We sample $\Z_i:=(Z_{ik})_{k\in[q]}$  from
    \begin{align*}
        (Z_{ik})_{k\in\cK}|-&\sim \N\del{ \hbalpha_{\cK}^\top\Y_i, \hbXi_{\cK}}\\
         (Z_{ik})_{k\in[q]\setminus\cK}|-&\sim \N\del{ \zero, \I},
    \end{align*}
where
    \begin{align}
        \hbalpha_{\cK}&:= \psi^{-1}\hbXi_{\cK}\B_{[:,\cK]}^\top \label{eq:post_alpha}\\
        \hbXi_{\cK}&:=\del{ \psi^{-1}\B_{[:,\cK]}^\top\B_{[:,\cK]}+\I}^{-1}. \label{eq:post_xi}
    \end{align}

\null\noindent
\textbf{Sample $\psi$:}   
We sample $\psi$  from 
    \begin{equation*}
     \psi|-\sim \IG\del{a_1+ \frac{np}{2}, a_2+\frac{1}{2}\sum_{i=1}^n\sum_{j=1}^p\del[2]{Y_{ij}-\sum_{k\in\cK}Z_{ik}\beta_{jk}}^2}.
    \end{equation*}

We provide several remarks on possible extensions of our Bayesian factor models.

\begin{remark}[Extension to heterogeneous noise variances]
\label{rmk:hetero_noise}
The AdaSS prior can be easily modified for a factor model with heterogeneous noise variances, under which the covariance matrix of the observed variable $\Y_i$ is decomposed as $\var(\Y_i)= \B\B^\top+\bPsi$ with  $\bPsi:=\diag(\psi_1,\dots,\psi_p)$. In this situation, a standard choice of the prior distribution on $\bPsi$ is  a product  of the inverse gamma distributions, that is,
    \begin{equation*}
    \label{eq:prior_noise0}
        \psi_j\indsim \IG(\a_j)
    \end{equation*}
for some $\a_j:=(a_{j1}, a_{j2})\in\R_+^2$ for $j\in[p]$. Then the  conditional posterior of $\psi_j$  is given by
    \begin{equation*}
       \psi_j|-\sim \IG\del{a_{j1}+ \frac{n}{2}, a_{j2}+\frac{1}{2}\sum_{i=1}^n\del{Y_{ij}-\sum_{k\in\cK}Z_{ik}\beta_{jk}}^2}.
    \end{equation*}
For posterior sampling of other parameters, calculations given in \labelcref{eq:post_beta}-\labelcref{eq:post_xi} are modified as  $\hbeta_{jk}:=\htau_{jk}\cbr{\psi_j^{-1}\sum_{i=1}^nZ_{ik}\del{Y_{ij}-\sum_{h\in[]:h\neq k}Z_{ih}\beta_{jh}}}$, $\htau_{jk}:=\del{\psi_j^{-1}\sum_{i=1}^nZ_{ik}^2+\tau_{jk}^{-1}}^{-1}$, $\hbalpha_{\cK}:= \hbXi_{\cK}\B_{[:,\cK]}^\top\bPsi^{-1}$ and $\hbXi_{\cK}:=\del{\B_{[:,\cK]}^\top\bPsi^{-1}\B_{[:,\cK]}+\I}^{-1},$ respectively. 
\end{remark}

\begin{remark}[Extension to correlated factors]
\label{rmk:nondiag}
The factor model we investigated assumes that all components of the latent factor are independent. A more general model would be the correlated factor model such that $\Y_i|(\Z_i=\z_i) \indsim \N(\B\z_i, \psi\I)$ and $\Z_i\iidsim \N(\zero, \bSigma_{\z})$ for some $\bSigma_{\z}\in\bS_{++}^{q},$ and  our AdaSS prior can be easily  modified for this model. If we impose the inverse Wishart prior $\texttt{IW}(\A, \nu)$ with scale matrix $\A\in\bS_{++}^{q}$ and the degrees of freedom $\nu>0$ on $\bSigma_{\z}$, then we can sample $\bSigma_{\z}$ from the conditional posterior
    \begin{align*}
        \bSigma_{\z}|-\sim \texttt{IW}\del[2]{\A+\sum_{i=1}^n\z_i\z_i^\top, \nu+n},
    \end{align*}
while the other posterior sampling schemes remain the same. Unfortunately, the theoretical results in this paper could not be applied directly to the correlated factor model since the correlated factor does not guarantee the required sparsity pattern of the loading matrix to achieve optimal posterior concentration rates.
\end{remark}

\begin{remark}[Post-processing for estimating loading matrices]
The loading matrix $\B$ is not identifiable since for any $q\times q$ orthogonal matrix $\Q$ with $\Q\Q^\top=\I$ the transformed loading matrix $\B\Q$ yields the exact same likelihood as that of $\B$.
Consequently, we need additional effort if we are interested in estimating the loading matrix.  A number of methods have been proposed to resolve this identifiability issue for Bayesian factor analysis. One approach is to impose a prior distribution on the loading matrix satisfying certain identifiability constraints such as the positive diagonal, lower triangular (PLT) constraint \citep{lopes2004bayesian,ghosh2009default,leung2016order,man2022mode}. But as pointed out by \citet{carvalho2008high,assmann2016bayesian}, the posterior distribution obtained under the PLT constraint may not be invariant to the ordering of the observed variables. An alternative approach is to post-process MCMC samples to make $\B$ identifiable \citep{assmann2016bayesian,papastamoulis2022identifiability},
which can be directly applicable to our Bayesian model. We 
illustrate the effectiveness of this approach by analyzing a toy example in \cref{sec:pp}.

\end{remark}

\section{Asymptotic properties of the posterior distribution}
\label{sec:theory}

We study frequentist properties of the posterior distribution induced by the proposed AdaSS prior distribution. Throughout this section, we assume that the number of columns $q$ of the loading matrix $\B$ of our Bayesian model is taken to be sufficiently large so that it is at least the true factor dimensionality. Given data $\Y_{1:n}:=(\Y_1,\dots, \Y_n)$, we denote by $\Pi(\cdot|\Y_{1:n})$ the posterior distribution under the AdaSS prior $\Pi$. Proofs of all results in this section are deferred to \cref{sec:proof} in the supplementary material.


\subsection{Class of covariance matrices}

We first define a  class of matrices to which the true covariance matrix belongs. We denote by $r$ the \textit{true factor dimensionality}. To deal with very high dimensional cases where the dimension $p$ is much larger than the sample size $n$, we impose sparsity on the loading matrix. Specifically, for a loading matrix $\B^\star\in\R^{p\times r}$, we define its (row) support by
    \begin{equation*}
        \supp(\B^\star):=\cbr{j\in[p]:\B^\star_{[j,:]}\neq\zero}.
    \end{equation*}
We say that the loading matrix $\B^\star$ is \textit{$s$-sparse} if $|\supp(\B^\star)|\le s$ and let
    \begin{equation*}
    \cB(p, r,s):=\cbr{\B^\star\in\R^{p\times r}: |\supp(\B^\star)|\le s}
    \end{equation*}
be a set of $p\times r$-dimensional $s$-sparse (loading) matrices.
The parameter space for the covariance matrix we consider throughout the paper is given by
    \begin{equations}
    \label{eq:cov_class}
    \cC(p,r,s,\lambda, \zeta):=\Big\{\bSigma^\star=\B^\star(\B^\star)^\top&+\psi^\star\I:
    \B^\star \in \cB(p, r,s),\lambda_1(\bSigma^\star)\le \lambda,\psi\ge\psi_0, \\
    &\min\cbr[2]{\lambda_r(\B^\star(\B^\star)^\top), \min_{j\in\supp(\B^\star)}\norm[1]{\B^\star\del[1]{\B^\star_{[j,:]}}^\top}_2}\ge\zeta\Big\}
    \end{equations}
for some arbitrarily small constant $\psi_0>0$. We discuss the implications of the conditions determining the class $ \cC(p,r,s,\lambda, \zeta).$
    \begin{itemize}
        \item As we will show in \cref{sec3.2} (\cref{thm:conv_cov}), the posterior concentration rate of the covariance matrix depends on the dimension $p$, sparsity $s$, factor dimensionality $r$ and the upper bound of the largest eigenvalue $\lambda$, but not on $\zeta$, which is needed for the consistency of the factor dimensionality, and $\psi_0$.
        
        \item Our parameter space \labelcref{eq:cov_class} includes loading matrices whose row support is sparse, which is also considered in \cite{cai2013sparse, cai2015optimal,xie2018bayesian,ning2021spike}. On the other hand, \citet{pati2014posterior, gao2015rate, rovckova2016fast, ohn2020posterior} consider the sparsity of the column support, which means that the nonzero entries of each column vector are less than or equal to $s.$ Note that the $s$ column support sparsity implies $sr$ row support sparsity and hence the column and row support sparsities have the same order of $s$ as $s\rightarrow\infty$ when $r$ is bounded.
       
        \item The lower bound $\psi\ge\psi_0,$ which is assumed in \cite{pati2014posterior, ohn2020posterior}, is introduced to avoid ill-conditioned covariance matrices. In contrast, \cite{cai2013sparse, cai2015optimal, gao2015rate,rovckova2016fast,xie2018bayesian,ning2021spike} assume that $\psi$ is fixed in their theories. 
        \item The condition $\lambda_r(\B^\star(\B^\star)^\top)\ge \zeta$ yields the eigengap between spikes and noises, which prevents underestimation of the factor dimensionality. The same condition is assumed by \cite{cai2013sparse, cai2015optimal, gao2015rate, ohn2020posterior}. 
        \item Note that $\norm[1]{\B^\star\del[1]{\B^\star_{[j,:]}}^\top}_2\ge \norm[1]{\B^\star_{[j,:]}}_2^2$, thus the condition $\min_{j\in\supp(\B)}\norm[1]{\B^\star\del[1]{\B^\star_{[j,:]}}^\top}_2\ge \zeta$ is met when the magnitudes of the nonzero rows do not vanish too quickly. This condition enables  accurate estimation of the sparsity level of the true loading matrix. This condition is similar to the beta-min condition (see Section 7.4 of \citet{buhlmann2011statistics})  in high-dimensional sparse regression models. We believe that this condition is indispensable unless the column vectors are assumed to be orthogonal  \citep{cai2013sparse, cai2015optimal, gao2015rate} or the true sparsity level $s$ is known \citep{ohn2020posterior}.
    \end{itemize}

Note that we allow the model architecture parameters  $p$, $s$, $r$ $\lambda$ and $\zeta$ to depend on $n$, but we do not specify the subscript $n$ to those quantities, e.g., keep using $p$ instead of $p_n$, for notational simplicity.

\subsection{Posterior consistency of the factor dimensionality}\label{sec3.2}

In this section, we explore asymptotic properties of the posterior distribution of the factor dimensionality in the sparse factor model. For a loading matrix $\B\in\R^{p\times q}$, we define the factor dimensionality corresponding to $\B$ as
   \begin{equation}
        \xi:=\xi(\B):=\abs{\cbr{k\in[q]:\B_{[:,k]}\neq\zero}},
    \end{equation}
that is, the factor dimensionality $\xi(\B)$ is equal to the number of nonzero columns of $\B$.
The following theorem  shows that the posterior distribution of the factor dimensionality behaves nicely. 

\begin{theorem}  
\label{thm:conv_nfac}
Assume that $r\le p/2$, $\lambda\lesssim s$ and $\epsilon:=\lambda\sqrt{sr\log (p\vee n)/n}=o(1)$. Moreover, assume that $\zeta \ge c_0\epsilon$ for sufficiently large $c_0>0$. Then for any $\delta>0$, there exists a constant $A_\delta>0$ depending only on $\delta$ such that 
    \begin{align}
    \label{eq:conv_nfac_div}
     \inf_{\bSigma^\star\in\cC(p, r, s, \lambda, \zeta)}\P_{\bSigma^\star}^{(n)}\sbr{\Pi\del{r\le \xi(\B)\le (1+\delta) r|\Y_{1:n}}}  \to 1 
    \end{align}
for the  prior distribution defined through \labelcref{eq:prior_ind}, \labelcref{eq:prior_loading} and \labelcref{eq:prior_noise} with $A>A_\delta$, and $\a\in\R_+^2.$ Furthermore, if $r\lesssim \log (p\vee n)/\log n$, then there exists an universal constant $A_0>0$ such that 
    \begin{align}
    \label{eq:conv_nfac_bdd}
     \inf_{\bSigma^\star\in\cC(p, r, s, \lambda, \zeta)}\P_{\bSigma^\star}^{(n)}\sbr{\Pi\del{ \xi(\B)= r|\Y_{1:n}}}  \to 1,
    \end{align}
whenever $A>A_0.$
\end{theorem}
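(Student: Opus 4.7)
The plan is to bound the posterior mass on $\{\xi(\B) < r\}$ and $\{\xi(\B) > (1+\delta)r\}$ separately via the evidence-to-posterior identity
\[
\Pi(\cA \mid \Y_{1:n}) \;=\; \frac{\int_{\cA}(L_n(\bSigma)/L_n(\bSigma^\star))\,\d\Pi(\bSigma)}{\int (L_n(\bSigma)/L_n(\bSigma^\star))\,\d\Pi(\bSigma)},
\]
where $L_n(\bSigma) := \prod_{i=1}^n \sp_{\bSigma}(\Y_i)$. I would lower-bound the common denominator by restricting to the true row/column sparsity pattern of $\B^\star$ together with an entrywise $\ell_1$-ball of radius $\lesssim \epsilon/\lambda$ around $(\B^\star,\psi^\star)$. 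The $\binom{p}{s}\binom{q}{r}$ factors in $Q_{A}$ cancel the combinatorial count of patterns, so the prior mass of that event equals $Z^{-1}\exp(-Asr\log(p\vee n))$ times a Laplace/inverse-Gamma density factor $\exp(-c' sr\log(p\vee n))$. On the ball the Gaussian KL divergence is of order $\epsilon^2$, and standard Gaussian concentration yields $L_n(\bSigma)/L_n(\bSigma^\star) \ge e^{-c n\epsilon^2}$ with $\P_{\bSigma^\star}^{(n)}$-probability tending to one. Since $n\epsilon^2 \asymp \lambda^2 sr\log(p\vee n)$, this gives the denominator bound $\exp(-C_1 sr\log(p\vee n))$.

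For overestimation, I would decompose $\{\xi(\B)=\xi_0\}$ with $\xi_0 > (1+\delta)r$ into blocks indexed by $(\omega,\xi_0)$, of prior mass $\exp(-A\omega\xi_0\log(p\vee n))/Z$. Inside each block the true covariance is nested in the candidate model, so one only needs tests against continuous misspecifications: a Hellinger test on a $\lambda$-envelope of Gaussian covariances with at most $\omega$ active rows and $\xi_0$ active columns has metric entropy of order $\omega\xi_0\log(p\vee n)$, with type-II error $e^{-c n\epsilon^2}$ on the complement of a local KL ball. Summing over blocks,
\[
\P_{\bSigma^\star}^{(n)}\int_{\cA}\frac{L_n(\bSigma)}{L_n(\bSigma^\star)}\,\d\Pi(\bSigma) \;\lesssim\; \sum_{\omega,\, \xi_0 > (1+\delta)r} e^{-(A - C_2)\omega\xi_0\log(p\vee n)} + e^{-c n \epsilon^2},
\]
which, for $A$ large enough depending on $\delta$, is $o(e^{-C_1 sr\log(p\vee n)})$; this delivers \labelcref{eq:conv_nfac_div}. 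The sharper statement \labelcref{eq:conv_nfac_bdd} comes from extending the sum to all $\xi_0 \ge r+1$ and using $\binom{q}{\xi_0}\le q^{\xi_0}$; this forces $(r+1)\log q \le \eta A r \log(p\vee n)$ for some small $\eta$, which under $q\asymp p\vee n$ reduces to the additional hypothesis $r \lesssim \log(p\vee n)/\log n$, and a universal constant $A_0$ then suffices.

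For underestimation, any $\bSigma = \B\B^\top + \psi\I$ with $\xi(\B) \le r-1$ has $\rank(\bSigma - \psi\I) \le r-1$, so by Weyl's inequality together with the eigengap $\lambda_r(\B^\star(\B^\star)^\top) \ge \zeta$ one has $\norm{\bSigma - \bSigma^\star} \gtrsim \zeta$ uniformly in $(\B,\psi)$. Using $\psi \ge \psi_0$ and $\lambda_1(\bSigma^\star) \le \lambda$ to convert this into a squared Hellinger lower bound of order $\zeta^2/\lambda^2$, and forming a union test across all sparsity patterns $(\omega,\xi)$ with $\xi < r$ (total entropy $\lesssim sr\log(p\vee n)$, absorbed by the $Q_A$ normalization), I obtain a numerator bound of the form $\exp(-c_4 n\zeta^2/\lambda^2 + C' sr\log(p\vee n))$; taking $c_0$ sufficiently large in $\zeta \ge c_0 \epsilon$ makes this dominate the denominator bound $\exp(-C_1 sr\log(p\vee n))$. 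I expect this underestimation step to be the main obstacle: the Hellinger lower bound has to accommodate the minimization over $\psi$ (the best $\psi$ shifts the entire spectrum) and the union over sparsity patterns without losing more than a constant factor, and the condition $\min_{j\in\supp(\B^\star)}\norm{\B^\star(\B^\star_{[j,:]})^\top}_2 \ge \zeta$ in the definition of $\cC$ enters precisely to rule out the degenerate case where an active row of $\B^\star$ contributes negligibly to the spectrum.
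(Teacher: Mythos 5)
Your overall architecture (lower bound the evidence, then control numerator separately for under- and over-estimation) matches the paper's strategy, but there are three substantive gaps.

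\paragraph{Overestimation with small row support.}
The paper's handling of overestimation is a two-way split that your sketch misses. The posterior event $\{\xi(\B)>(1+\delta)r\}$ is split into $T_1=\{\xi(\B)>(1+\delta)r,\ |\supp(\B)|\ge s\}$ and $T_2=\{|\supp(\B)|<s\}$. For $T_1$ \emph{no testing is needed at all}: since $\P_{\bSigma^\star}^{(n)}N_n(\cU)=\Pi(\cU)$, one simply checks that $\Pi(\|\u\|_0\ge s,\ \|\v\|_0>(1+\delta)r)\lesssim e^{-A(1+\delta)sr\log(p\vee n)}$ dominates the evidence lower bound when $A$ is large enough depending on $\delta$. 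Your version sums $e^{-(A-C_2)\omega\xi_0\log(p\vee n)}$ over \emph{all} $(\omega,\xi_0)$ with $\xi_0>(1+\delta)r$, including $\omega$ as small as $1$; for those blocks $\omega\xi_0\approx (1+\delta)r$, which is generally much smaller than $sr$, so that sum is \emph{not} $o(e^{-C_1 sr\log(p\vee n)})$ and the argument collapses. What rescues the small-$\omega$ regime in the paper is exactly that models with $|\supp(\B)|<s$ are necessarily misspecified — by the beta-min-type condition $\min_{j\in\supp(\B^\star)}\|\B^\star(\B^\star_{[j,:]})^\top\|_2\ge\zeta$, any such $\bSigma$ has $\|\bSigma-\bSigma^\star\|>\zeta/2$, so $T_2$ is absorbed into the covariance-concentration Theorem (\cref{thm:conv_cov}). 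Your final paragraph attributes this beta-min condition to the \emph{underestimation} step, but underestimation (fewer columns than $r$) only needs the eigengap $\lambda_r(\B^\star(\B^\star)^\top)\ge\zeta$, via a $\w_1\in\Span(\B)^\perp\cap\Span(\B^\star)$ argument. The beta-min condition is used precisely for the small-support \emph{overestimation} part, which is the part your sketch does not correctly handle.

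\paragraph{Nested-model testing is unnecessary and problematic.}
You propose Hellinger tests for the overfitted blocks and note that the true model is nested. But one cannot separate $\bSigma^\star$ from overfitted covariances arbitrarily close to it, so the type-II error cannot be made uniformly small across the block; your claim of a uniform $e^{-cn\epsilon^2}$ type-II error there is not attainable. The paper sidesteps this entirely by using the prior-mass bound for $T_1$ (no test needed) and delegating the misspecified $T_2$ to \cref{thm:conv_cov}, where the tests are against covariances that are genuinely far from $\bSigma^\star$. This is both simpler and correct.

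\paragraph{Where $r\lesssim\log(p\vee n)/\log n$ actually comes from.}
Your explanation via $\binom{q}{\xi_0}\le q^{\xi_0}$ and $q\asymp p\vee n$ does not match the mechanism. The binomial $\binom{q}{\xi}$ in $Q_A$ cancels against the number of $\v$'s with $\|\v\|_0=\xi$, so the marginal prior on $(\omega,\xi)$ is simply $\propto e^{-A\omega\xi\log(p\vee n)}$ and no $q$-entropy enters. The real bottleneck is the evidence lower bound from \cref{lem:d_n}: $D_n\gtrsim e^{-C_1 s(r\log n\vee\log p)-Asr\log(p\vee n)}$, where the $sr\log n$ contribution comes from the Laplace small-ball probability over $sr$ loadings. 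To prove exact consistency one bounds $\P_{\bSigma^\star}^{(n)}\Pi(\xi(\B)>r,|\supp(\B)|\ge s\mid\Y_{1:n})$ by $e^{C_1 s(r\log n\vee\log p)-As\log(p\vee n)}$, and this vanishes for a universal $A_0$ only if $r\log n\lesssim\log(p\vee n)$. Your argument would not deliver (\cref{eq:conv_nfac_bdd}) with a universal $A_0$.
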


\cref{thm:conv_nfac} implies that a posteriori the factor dimensionality $\xi$ is not much larger than the true one $r$, and $\xi$ concentrates on $r$  asymptotically if $r$ is not too large.
.0
\citet{gao2015rate} attains posterior consistency of the factor dimensionality of the orthogonal loading matrix under a mildly growing regime $r\lesssim \log p$. Our condition $r\lesssim \log ( p\vee n)/\log n$ for the posterior consistency is slightly stronger than \cite{gao2015rate}. However, note that we do not impose the orthogonality constraint on the loading matrix which makes posterior computation difficult.

\citet{ohn2020posterior} obtains  posterior consistency without any condition on the growth rate of the true factor dimensionality $r$ by using a prior that strongly regularizes $\xi.$ However, this strong regularization sacrifices the convergence rate of the covariance matrix by a factor of $\sqrt{s}$ compared to the optimal rate. Another critical drawback of their prior is that the knowledge of the true sparsity level  is required to select the hyper-parameters in the prior. In contrast, the AdaSS prior attains the posterior consistency of the factor dimensionality \textit{without knowing  the true sparsity level}.
 
On the frequentist side, \citet{cai2013sparse, cai2015optimal} proposed consistent estimators of the factor dimensionality for sparse factor models. However, \citet{cai2013sparse} requires a $\sqrt{s}$ times larger detection rate for the eigengap 
(i.e., the lower bound of $\zeta$) than ours and \citet{cai2015optimal} is nonadaptive to the true sparsity. Moreover, a known and fixed noise variance $\psi$ is required for the consistency of both estimators, while our consistency result is \textit{adaptive to the unknown noise level}.

\begin{remark}
\label{remark:q_choice}
One should set $q>r$ to correctly estimate $r,$ but $r$ is unknown. 
A naive strategy would be to set $q$ very large, e.g., $q=p-1$, so that $q\ge r$. However, unnecessarily large $q$ requires huge computation. A better strategy for choosing $q$ is to set $q=\sqrt{n}.$ This choice is based on our  posterior contraction rate $\lambda\sqrt{sr\log(p\vee n)/n}=o(1)$ of the covariance matrix given in \cref{thm:conv_cov} in the next section. Since $r=\rank(\B^\star(\B^\star)^\top)\le s$ for the true loading matrix $\B^\star$, if $\lambda \log (p\vee n)\gtrsim1$ we have $r=o(\sqrt{n})$. Therefore, asymptotically, the upper bound $q=\sqrt{n}$ does not underdetermine the true factor dimensionality. A similar problem occurs in \cite{gao2015rate}, where the authors assume $r\lesssim \log p$ and set $q\asymp p^{b}$ for some $b>0$ so that $q\gg r$.  When $p$ is large, our choice $q=\sqrt{n}$ is much smaller than that of \cite{gao2015rate}, which leads to more efficient computation
\end{remark}

\begin{remark}
In \cref{thm:conv_nfac}, we assume an upper bound for the largest eigenvalue such that $\lambda\lesssim s$. This bound is mild in view of the random matrix theory. Suppose that $\tilde{\B}\in\R^{s\times r}$ is a  random matrix whose entries are independent centered random variables with finite fourth moments. Then by Theorem 2 of \cite{latala2005some}, since $r\le s$, we have $\E\norm[0]{\tilde{\B}}\lesssim \sqrt{s}+\sqrt{r}\lesssim \sqrt{s}$. Therefore, $\E[\lambda_1(\tilde{\B}\tilde{\B}^\top)]=\E\norm[0]{\tilde{\B}\tilde{\B}^\top}\lesssim s$. \citet{pati2014posterior} and \citet{rovckova2016fast} assumed the same condition as ours, while other studies on the Bayesian covariance estimation \citep{gao2015rate, xie2018bayesian} used a stronger condition that the largest eigenvalue of the true covariance matrix is bounded.
\end{remark}
 
In \cref{thm:conv_nfac}, we show that the true factor dimensionality is almost consistently recovered whenever the eigengap $\zeta$ is larger than the \textit{detection rate}  $\lambda\sqrt{sr\log p/n}$ by a sufficiently large constant $c_0>0.$  As shown in the next proposition, this detection rate is optimal when $r\lesssim 1$ in the sense that any method cannot consistently estimate the factor dimensionality when the eigengap $\zeta$ is less than $a_0\lambda \sqrt{ s\log (p/s)/n}$ for some constant $a_0>0$. This result is an extension of Theorem 5 of \citet{cai2015optimal} for unknown $\psi$ and  diverging $\lambda.$ 

\begin{proposition}
\label{prop:nfac_lower_bound}
Assume that $s\log p/n=o(1)$. Then there exists a constant $a_0>0$ such that if $0\le\zeta\le a_0\lambda \sqrt{ s\log (p/s)/n}$,
    \begin{equation}
        \inf_{\hat{r}:\R^{p\times n}\mapsto \bN}\sup_{\bSigma\in\cC(p, r,s, \lambda, \zeta)}\P^{(n)}_{\bSigma}\del{\hat{r}(\Y_{1:n})\neq r}\ge \frac{1}{4}
    \end{equation}
for all but finitely many $n,$ where the infimum runs over all possible estimator $\hat{r}$ of $r.$
\end{proposition}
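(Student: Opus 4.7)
The plan is to establish \cref{prop:nfac_lower_bound} by a standard minimax lower bound argument---Le Cam's two-point method augmented with a Bayesian average over $\binom{p}{s}$ sparsity configurations---in the spirit of \citet{berthet2013optimal} and \citet{cai2015optimal}. The essence is to exhibit a pair of distributions whose factor dimensionalities differ by one but whose $n$-fold product measures are indistinguishable in total variation when $\zeta$ falls below the claimed threshold, so that no estimator can recover the true factor dimensionality with probability larger than $3/4$.

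First I reduce to a binary testing problem. Fix a ``base'' loading matrix $\B_\circ\in\R^{p\times(r-1)}$ of rank $r-1$ with well-conditioned spectrum of order $\lambda$, row support $\cS_\circ\subset[p]$ of size $s-1$, and set $\bSigma_\circ:=\B_\circ\B_\circ^\top+\psi_0\I$ (factor dimensionality $r-1$). For each size-$s$ subset $\pi$ in a suitable packing of $[p]$, construct a rank-one perturbation $\b_\pi\b_\pi^\top$ with $\b_\pi$ supported within $\pi$ and calibrated so that $\bSigma_\pi:=\bSigma_\circ+\b_\pi\b_\pi^\top\in\cC(p,r,s,\lambda,\zeta)$ has factor dimensionality $r$. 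The spike direction $\b_\pi$ is placed so as to be partially aligned with the top eigenspace of $\B_\circ$, which ensures that the effective noise masking the spike is of order $\lambda$ and is what will ultimately produce the $\lambda$ factor in the detection rate.

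For any estimator $\hat r$, a two-sided Le Cam inequality applied to $\bSigma_\circ$ and the uniform mixture $\bar P:=|\Pi|^{-1}\sum_\pi\P_{\bSigma_\pi}^{(n)}$ gives
\begin{equation*}
\sup_{\bSigma\in\cC(p,r,s,\lambda,\zeta)}\P_{\bSigma}^{(n)}(\hat r\neq r)\;+\;\P_{\bSigma_\circ}^{(n)}(\hat r=r)\;\ge\;1-d_{TV}\del{\bar P,\,\P_{\bSigma_\circ}^{(n)}},
\end{equation*}
so after a standard symmetrization it suffices to show $d_{TV}\le 1/2$. I use $d_{TV}\le\tfrac{1}{2}\sqrt{\chi^2(\bar P\,\|\,\P_{\bSigma_\circ}^{(n)})}$ and compute the chi-squared via the Gaussian chi-squared identity combined with the Sherman--Morrison--Woodbury formula applied to the rank-one perturbations $\bSigma_\pi-\bSigma_\circ=\b_\pi\b_\pi^\top$. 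The result is an explicit expectation of the form $\E_{\pi,\pi'}[\det(\I-M_{\pi,\pi'})]^{-n/2}-1$, where $M_{\pi,\pi'}$ depends only on $\zeta$ and on $\b_\pi^\top\bSigma_\circ^{-1}\b_{\pi'}$, and thus ultimately on the hypergeometric overlap $|\pi\cap\pi'|$. Standard moment bounds on $|\pi\cap\pi'|$ paired with Stirling estimates of $\binom{p}{s}$ then yield $\chi^2\lesssim 1$ whenever $\zeta\le a_0\lambda\sqrt{s\log(p/s)/n}$ with $a_0>0$ sufficiently small.

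The hard part will be the joint calibration of $\B_\circ$ and $\b_\pi$. The row-level condition $\min_{j\in\supp(\B)}\|\B\B_{[j,:]}^\top\|_2\ge\zeta$ in $\cC(p,r,s,\lambda,\zeta)$ tends to force the spike magnitude $\|\b_\pi\|_2^2$ to be much larger than $\zeta$, while the chi-squared bound wants the spike as small as possible. Resolving this tension---by blending the spike with the column space of $\B_\circ$, so that beta-min is carried by the ``big'' eigenvalues of $\B_\circ\B_\circ^\top$ rather than by the spike alone, exactly as in the $r=1$ case of \citet{cai2015optimal}---is what ultimately delivers the $\lambda$-factor in the detection rate. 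The nuisance noise variance $\psi$ is neutralized by fixing $\psi=\psi_0$ in both hypotheses, so the construction does not require any supremum over $\psi$ and the lower bound transfers to the full class that allows $\psi\ge\psi_0$.
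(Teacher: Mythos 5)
Your plan and the paper's proof both reduce the minimax risk to a testing problem against a family of rank-perturbed covariance matrices indexed by size-$s$ supports, but the route is genuinely different. The paper's construction is far more minimal: it takes a \emph{fixed} one-coordinate null $\bSigma_0=\tfrac{\lambda}{2}(\I+\e_p\e_p^\top)$ (so the spike direction $\e_p$ plays the role of your $\B_\circ$, but with rank exactly one rather than $r-1$) and alternatives $\bSigma(\w)=\bSigma_0+\tfrac{\zeta}{s}\w\w^\top$ with $\w\in\{0,1\}^p$, $\|\w\|_0=s$, $w_p=0$, i.e.\ a spike that is kept \emph{orthogonal} to the base with no blending. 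The dissimilarity measure is KL divergence, which admits a completely closed form here (the log-determinant of $\bSigma_0^{-1}\bSigma(\w)$ is a scalar function of $\rho s$), and the final step is a single application of Tsybakov's Proposition~2.3 (a Fano-type bound over the $M=\binom{p-1}{s-1}$ alternatives). No Sherman--Morrison--Woodbury, no hypergeometric overlap moments, no chi-squared. Your Le Cam + $\chi^2$ + mixture machinery is the approach more usually seen in the sparse PCA literature (Berthet--Rigollet, Cai--Ma--Wu, Cai~et~al.~2015) and would also work, but it buys nothing extra here and is considerably heavier; what it \emph{would} buy is a proof for general $r$, whereas the paper's argument is written specifically for the $r=1$ vs.\ $r=2$ instance.

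One thing your proposal gets right that the paper glosses over: your worry about the row-level condition $\min_{j\in\supp(\B^\star)}\|\B^\star(\B^\star_{[j,:]})^\top\|_2\ge\zeta$ is legitimate. In the paper's construction, for $j\in\supp(\w)$ one has $\B^\star(\B^\star_{[j,:]})^\top=\tfrac{\zeta}{s}\w$, whose norm is $\zeta/\sqrt{s}$, so the constructed $\bSigma(\w)$ really lies in $\cC(p,2,s+1,\lambda,\zeta/\sqrt{s})$ rather than $\cC(p,2,s,\lambda,\zeta)$ as asserted; the paper never addresses this $\sqrt{s}$ slack. Your ``blend the spike with the column space of $\B_\circ$'' idea is the natural fix (make the beta-min be carried by the $\lambda$-scale eigenvalues rather than the $\zeta$-scale spike), but as written it is only a sketch, and it introduces the competing issue that the combined support $\cS_\circ\cup\pi$ can exceed $s$; to make it work you would need $\cS_\circ\subset\pi$ or a similar nesting, at which point the packing size $M$ and the overlap distribution of $|\pi\cap\pi'|$ change and the $\chi^2$ calculation has to be redone. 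So: different and arguably more careful route, but the crucial calibration step you flag as ``the hard part'' is indeed the hard part and is not resolved in the proposal, whereas the paper's published proof simply sidesteps it by working with the orthogonal spike and (implicitly) a weaker version of the class constraint.
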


\subsection{Posterior concentration rate of the covariance matrix}

In the linear factor model, the covariance matrix $\B\B^\top + \psi\I$ determines its distribution. In this section, we prove that the posterior distribution of the covariance matrix in our Bayesian model concentrates  around the true covariance matrix at a near-optimal rate, which is summarized in the next theorem.

\begin{theorem} 
\label{thm:conv_cov}
Assume that $\lambda\lesssim s$. Then there exists a constant $M>0$ such that 
    \begin{align}
    \label{eq:conv_cov}
     \sup_{\bSigma^\star\in \cC(p, r, s, \lambda,0)}\P_{\bSigma^\star}^{(n)}\sbr{\Pi\del{\norm{\bSigma-\bSigma^\star} >M\lambda\sqrt{\frac{ sr\log (p\vee n)}{n}}\Big|\Y_{1:n}}} &\to0
    \end{align}
for the  prior distribution defined through \labelcref{eq:prior_ind}, \labelcref{eq:prior_loading} and \labelcref{eq:prior_noise} with $A>0$ and $\a\in\R_+^2.$
\end{theorem}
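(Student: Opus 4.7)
The plan is to apply the general posterior contraction theorem via the Hellinger distance, and then transfer the result to the spectral norm. Define $\tepsilon_n := \sqrt{sr\log(p\vee n)/n}$, so the target rate is $\epsilon_n = \lambda\tepsilon_n$. The key metric conversion is
\begin{equation*}
\|\bSigma_1 - \bSigma_2\| \le C\,\lambda_1(\bSigma_1) \cdot H(\P_{\bSigma_1}, \P_{\bSigma_2}),
\end{equation*}
which follows from the standard Gaussian identity $H^2(\P_{\bSigma_1}, \P_{\bSigma_2}) \asymp \fnorm{\bSigma_1^{-1/2}(\bSigma_1 - \bSigma_2)\bSigma_1^{-1/2}}^2$ combined with $\|\bSigma_1^{-1/2} M \bSigma_1^{-1/2}\| \ge \|M\|/\lambda_1(\bSigma_1)$ and $\fnorm{\cdot} \ge \|\cdot\|$. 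Thus Hellinger contraction at rate $\tepsilon_n$, on a sieve enforcing $\lambda_1(\bSigma) \lesssim \lambda$, delivers the spectral rate $\lambda\tepsilon_n = \epsilon_n$.

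For Hellinger contraction at rate $\tepsilon_n$, I would verify the three classical conditions. \emph{Prior mass.} Condition on the event $\{\u = \one_{\supp(\B^\star)},\ v_1 = \cdots = v_r = 1,\ v_{r+1} = \cdots = v_q = 0\}$; by \labelcref{eq:prior_ind} and a coarse bound on the normalizing constant $Z$, this event has prior probability at least $\exp(-c_1 sr\log(p\vee n))$. Given the support, the $sr$-dimensional product Laplace slab assigns mass at least $\exp(-\|\B^\star\|_1)\cdot\textup{vol}(B_{\delta_n})$ to a Frobenius ball $B_{\delta_n}$ of radius $\delta_n = c\tepsilon_n/\sqrt{\lambda}$ around $\B^\star$. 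Using $\|\B^\star\|_1 \le r\sqrt{s\lambda}$ (from $\lambda_1(\B^\star(\B^\star)^\top)\le \lambda$ plus Cauchy--Schwarz), the assumption $\lambda \lesssim s$, and the standard Euclidean ball-volume estimate, this conditional mass is $\ge \exp(-c_2 sr\log(p\vee n))$. The inverse-Gamma prior contributes mass $\gtrsim \tepsilon_n/\sqrt p$ to $\{|\psi-\psi^\star|\le \tepsilon_n/\sqrt p\}$. Since $\fnorm{\bSigma-\bSigma^\star} \le 2\sqrt{\lambda}\fnorm{\B-\B^\star} + \fnorm{\B-\B^\star}^2 + |\psi-\psi^\star|\sqrt p \lesssim \tepsilon_n$ on this neighbourhood, and $\kl(\P_{\bSigma^\star}\|\P_\bSigma) \lesssim \fnorm{\bSigma-\bSigma^\star}^2/\psi_0^2$ (using $\lambda_p(\bSigma^\star)\ge \psi_0$), we obtain $\Pi(B_{\kl}(\P_{\bSigma^\star}, \tepsilon_n)) \ge \exp(-c_3 n\tepsilon_n^2)$. \emph{Sieve and tests.} Take $\cF_n = \{(\B, \psi): \|\u\|_0 \le Ks,\ \|\v\|_0 \le Kr,\ \max_{j,k}|\beta_{jk}| \le L_n,\ \psi \in [\psi_L, \psi_U]\}$ with $K$ large and $L_n = \textup{poly}(n)$; the factor $\exp(-A\omega\xi\log(p\vee n))$ in $Q_A$ ensures $\Pi(\cF_n^c)$ is exponentially small in $n\tepsilon_n^2$, while Laplace and inverse-Gamma tails handle the continuous constraints. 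The Hellinger entropy of $\cF_n$ is $\lesssim sr\log(p\vee n) \asymp n\tepsilon_n^2$ (combining the $\log\binom{p}{Ks}\binom{q}{Kr}$ support count with a Lipschitz $\delta$-net over the continuous parameters), so standard likelihood-ratio constructions yield exponentially powerful Hellinger tests.

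The main obstacle will be the spectral--Hellinger transfer: the naive chain $\|\bSigma-\bSigma^\star\| \le \fnorm{\bSigma-\bSigma^\star}$ followed by a Frobenius--Hellinger bound loses a $\sqrt p$ factor via the diagonal $(\psi-\psi^\star)\I$ summand in $\bSigma-\bSigma^\star$, giving a suboptimal spectral rate inflated by $\sqrt p$. The refined lower bound $H \gtrsim \|\bSigma-\bSigma^\star\|/\lambda_1(\bSigma)$ circumvents this loss, but requires the sieve to enforce $\lambda_1(\bSigma)\lesssim \lambda$, which must be built into $\cF_n$ and shown to have prior mass tending to one. The assumption $\lambda \lesssim s$ enters non-trivially in the prior-mass step through both $\|\B^\star\|_1 \le r\sqrt{s\lambda}$ and the volume factor $(\tepsilon_n/\sqrt\lambda)^{sr}$, and cannot be dropped without enlarging the target rate; the constant $M$ in the conclusion absorbs the dependence on the prior hyperparameter $A$ accumulated through $c_3$.
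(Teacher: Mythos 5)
Your proposal is correct in outline but takes a genuinely different route from the paper. The paper does not go through Hellinger distance at all. Its proof decomposes the posterior into a numerator and denominator, lower-bounds the denominator by bounding the prior mass of a Frobenius-norm ball around $\bSigma^\star$ and controlling likelihood-ratio fluctuations via Markov's inequality (\cref{lem:d_n} and \cref{lem:d_n0}); then shows $\Pi(|\supp(\B)| > C_2 sr \mid \Y_{1:n}) \to 0$ as a preliminary reduction; and finally constructs tests \emph{directly in the spectral norm}, restricted to sparse sub-blocks, by invoking Lemma~5.7 of \citet{gao2015rate} (our \cref{lem:test,lem:test_general}). Because the Gao--Zhou test separates in operator norm, no metric conversion is needed and the entropy-plus-test machinery of the general posterior contraction framework is bypassed entirely. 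Your approach---GGV prior mass, sieve, entropy control, Hellinger contraction, then a Hellinger-to-spectral conversion---is more textbook but shifts the difficulty onto the conversion step, which is the place to be careful. Two remarks there. First, the relation $H^2(\P_{\bSigma^\star}, \P_{\bSigma}) \asymp \fnorm{(\bSigma^\star)^{-1/2}(\bSigma-\bSigma^\star)(\bSigma^\star)^{-1/2}}^2$ is not a ``standard Gaussian identity'' globally; $H$ is bounded by one whereas the right side is not. What you actually have is the one-sided local bound $H^2 \gtrsim \min\bigl(\|D - \I\|^2, 1\bigr)$ with $D := (\bSigma^\star)^{-1/2}\bSigma(\bSigma^\star)^{-1/2}$, which is enough once $H \le M'\tilde\epsilon_n \to 0$, but this needs to be stated and proved rather than asserted. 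Second, your worry that the sieve must enforce $\lambda_1(\bSigma) \lesssim \lambda$ is unfounded and arises from an inconsistency in your own write-up: your first formulation (using $\lambda_1(\bSigma_1)$ with $\bSigma_1 = \bSigma^\star$ as the base) is the right one, and it gives $\|\bSigma - \bSigma^\star\| \le \lambda_1(\bSigma^\star)\,\|D - \I\| \lesssim \lambda\, H$, requiring only the stipulated bound $\lambda_1(\bSigma^\star) \le \lambda$ on the truth, not on the posterior variable. Enforcing $\lambda_1(\bSigma) \lesssim \lambda$ on the sieve would actually be delicate (the inverse-gamma tail is only polynomial, so $\Pi(\psi > C\lambda)$ is not exponentially small), so dropping that requirement is important. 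With these corrections, the two routes yield the same spectral rate $\lambda\sqrt{sr\log(p\vee n)/n}$, with the paper's being shorter because the test is already tailored to the operator norm.
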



Note that the lower bound of the eigengap $\zeta$ is set to 0 in \cref{thm:conv_cov} while it should be larger than a certain rate in \cref{thm:conv_nfac}, that is, the eigengap condition is required 
only for consistent estimation of the factor dimensionality but not  for the contraction of the covariance matrix. This difference implies that the optimal estimation of the covariance matrix does not require the consistent estimation of the factor dimensionality.

Our posterior concentration rate in \labelcref{eq:conv_cov}  is near optimal when $r\lesssim 1$ as shown in the following proposition, which is a direct consequence of Theorem 5.4 of \citet{pati2014posterior}.

\begin{proposition}
\label{prop:cov_lower_bound}
Assume that $s\log p/n=o(1)$ and $r\lesssim 1$. Then 
    \begin{equation}
        \inf_{\hat{\bSigma}:\R^{p\times n}\mapsto \bS_{++}^p}  \sup_{\bSigma^\star\in \cC(p, r, s, \lambda,0)}\P_{\bSigma^\star}^{(n)}\sbr[2]{\norm[1]{\hat{\bSigma}-\bSigma^\star}}
        \gtrsim \lambda\sqrt{\frac{ s\log(p/s)}{n}},
    \end{equation}
for all but finitely many $n,$  where the infimum runs over all possible estimator $\hat{\bSigma}$ of $\bSigma^\star.$
\end{proposition}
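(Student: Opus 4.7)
The plan is to derive this minimax lower bound by reducing to the rank-one regime and invoking Theorem~5.4 of \citet{pati2014posterior}. Since $r\lesssim 1$, the subclass $\cC(p,1,s,\lambda,0)$ of rank-one sparse covariance models is contained in $\cC(p,r,s,\lambda,0)$, so it suffices to establish the lower bound over this smaller class. Concretely, every $\bSigma^\star = \bbeta\bbeta^\top + \psi^\star\I$ with $\bbeta\in\R^p$ satisfying $\norm{\bbeta}_0\le s$, $\norm{\bbeta}_2^2\le\lambda$, and $\psi^\star\ge \psi_0$ belongs to both classes and is precisely the sparse rank-one subfamily used in the cited reference.

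Over this subclass I would then execute a standard Fano-type testing argument. Using the Varshamov--Gilbert bound, construct a packing $\{\bSigma_m = \bbeta_m\bbeta_m^\top + \psi^\star\I\}_{m\in\cM}$ with $\log|\cM|\asymp s\log(p/s)$, where each $\bbeta_m$ is $s$-sparse with coordinate magnitudes of order $\sqrt{\lambda/s}$ and with supports calibrated so that the rank-one outer products separate in operator norm at the target rate, $\norm{\bSigma_m-\bSigma_{m'}}\gtrsim \lambda\sqrt{s\log(p/s)/n}$. The pairwise KL divergence is then computed via the Gaussian formula applied to the rank-two difference $\bSigma_m-\bSigma_{m'}$; the lower bound $\psi^\star\ge\psi_0$ controls $\norm{\bSigma_m^{-1}}$, so that the $n$-sample KL stays of order $s\log(p/s)$. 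Fano's inequality then yields the in-probability minimax lower bound for identifying $m$, and Markov's inequality converts this into the expected-spectral-norm lower bound stated in the proposition.

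The main obstacle lies in calibrating the packing so that the spectral separation scales linearly in $\lambda$, rather than only as $\sqrt{\lambda}$, which is the naive rate obtained from a direct product perturbation $\bbeta_m\bbeta_m^\top-\bbeta_{m'}\bbeta_{m'}^\top \approx \bbeta_m(\bbeta_m-\bbeta_{m'})^\top + (\bbeta_m-\bbeta_{m'})\bbeta_{m'}^\top$. This requires carefully engineering the supports and magnitudes of the $\bbeta_m$ to saturate the upper bound $\lambda_1(\bSigma^\star)\le\lambda$ and to arrange differences of rank-one projectors whose operator norm scales with $\lambda$. Since this delicate book-keeping is already carried out in Theorem~5.4 of \citet{pati2014posterior}, the cleanest presentation verifies that the class $\cC(p,r,s,\lambda,0)$ contains their rank-one sparse subfamily (with matching constraints on $\norm{\bbeta}_0$, $\norm{\bbeta}_2^2$, and $\psi^\star$) and invokes the cited theorem directly.
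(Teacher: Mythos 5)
Your proposal matches the paper's approach: the paper's entire proof is the one-line observation that the claim is "a direct consequence of Theorem 5.4 of \citet{pati2014posterior}," which you correctly identify and justify by embedding the rank-one sparse subfamily of Pati et al.\ into $\cC(p,r,s,\lambda,0)$ (padding a $p\times1$ loading with zero columns, which is licit because $\zeta=0$ makes the eigengap constraint vacuous). Your extra commentary on the Fano construction and the $\lambda$-versus-$\sqrt{\lambda}$ calibration is accurate context but not needed once the cited theorem is invoked; the only small slip is that the constraint $\lambda_1(\bSigma^\star)\le\lambda$ gives $\norm{\bbeta}_2^2\le\lambda-\psi^\star$ rather than $\norm{\bbeta}_2^2\le\lambda$, a harmless constant adjustment.
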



\section{Numerical examples}
\label{sec:numerical}

We evaluate the empirical performance of the proposed Bayesian model with the AdaSS prior through simulation studies and real data analysis. For each posterior computation, we run the MCMC sampler described in \cref{subsec:mcmc} for 3,000 iterations discarding the first 500 as burn-in, and by thinning every 5, we obtain the final 500 MCMC samples from the posterior. We give the convergence diagnostics via trace, autocorrelation and partial autocorrelation plots of some randomly selected parameters in \cref{sec:conv} in the supplementary material, which confirm that the MCMC sampler converges well.

\subsection{Simulation study}
\label{subsec:simulation}

In this section, we conduct an extensive numerical study to compare the performance of the AdaSS prior for estimating the factor dimensionality and the covariance matrix  with various competitors. Throughout the simulation study, we set the number of columns of the loading matrix  $q=\ceil{\sqrt{n}}$ for a sample size $n$ and the hyperparameters $A=0.1$ and $\a=(0.01, 0.01)$.

\subsubsection{Posterior distribution of the factor dimensionality}

We first compare the AdaSS prior and the spike and slab  with the two-parameter IBP prior of \cite{ohn2020posterior} for evaluating the concentration behaviors of their posterior distributions of the factor dimensionality.  We only consider the prior of \cite{ohn2020posterior} since other Bayesian models either do not infer the factor dimensionality \citep{ning2021spike,xie2018bayesian} or do not achieve posterior consistency of the factor dimensionality \citep{rovckova2016fast,bhattacharya2011sparse, srivastava2017expandable} or are purely theoretical (i.e., do not have a posterior computation algorithm) \citep{gao2015rate}.

There are two hyperparameters of the spike and slab with the two-parameter IBP prior of \cite{ohn2020posterior}. The first hyperparameter denoted by $\alpha$ controls the factor dimensionality and the second hyperparameter denoted by $\kappa$ does the sparsity of the loading matrix. For $\kappa$, we choose $p^{1.1}$  as recommended by \cite{ohn2020posterior}. For $\alpha$, we consider three values: $p^{-30}$, $p^{-25}$ and $p^{-20}$.  \cite{ohn2020posterior} proved that using $\alpha=p^{-As}$ for a constant $A>0$ and the true sparsity $s$ can lead to the posterior consistency.
Assuming that $s=30,$ these choices of $\alpha$ correspond to the choices of $A\in \{4/6, 5/6,1\}.$ We use the MCMC sampler used in  \cite{knowles2011nonparametric, ohn2020posterior} for approximating the posterior.

We generate a data set consisting of $n$ synthetic random vectors from the multivariate normal distribution with mean $\zero$ and variance $\B^\star(\B^\star)^\top+2\I$ independently, where  $\B^\star$ is a $s$-sparse $p\times r$ loading matrix. For the true loading matrix, we randomly select the location of $s$ nonzero rows and sample the elements in the nonzero rows from $\{-2,2\}$ randomly. We set $(n,p)=(100,1000)$ and let the sparsity $s$ and factor dimensionality $r$ vary among $s\in\{10, 30, 50\}$ and $r\in\{1,3,5\}$, respectively.

\cref{fig:nfactor} presents the posterior distribution of the factor dimensionality for the AdaSS prior and spike and slab with the two-parameter IBP prior with different $\alpha$.  The posterior distribution under the AdaSS prior concentrates at the true factor dimensionality quite well for all nine cases, while the performance of the two-parameter IBP prior depends heavily on the choice of the hyperparameter $\alpha$. If $\alpha$ is not sufficiently small, the resulting posterior distribution apparently overestimates the true factor dimensionality. The smallest choice of $\alpha=p^{-30}$ estimates the factor dimensionality consistently for some cases, but also severe underestimation occurs in other cases. The results of this simulation show that there is no universally good choice of the hyperparameter $\alpha$ in the two-parameter IBP across different levels of the sparsity, while the AdaSS prior performs consistently well with a single choice of the hyperparameter.

\begin{figure}
    \centering
    \makebox[\textwidth][c]{
    \includegraphics[scale=0.15]{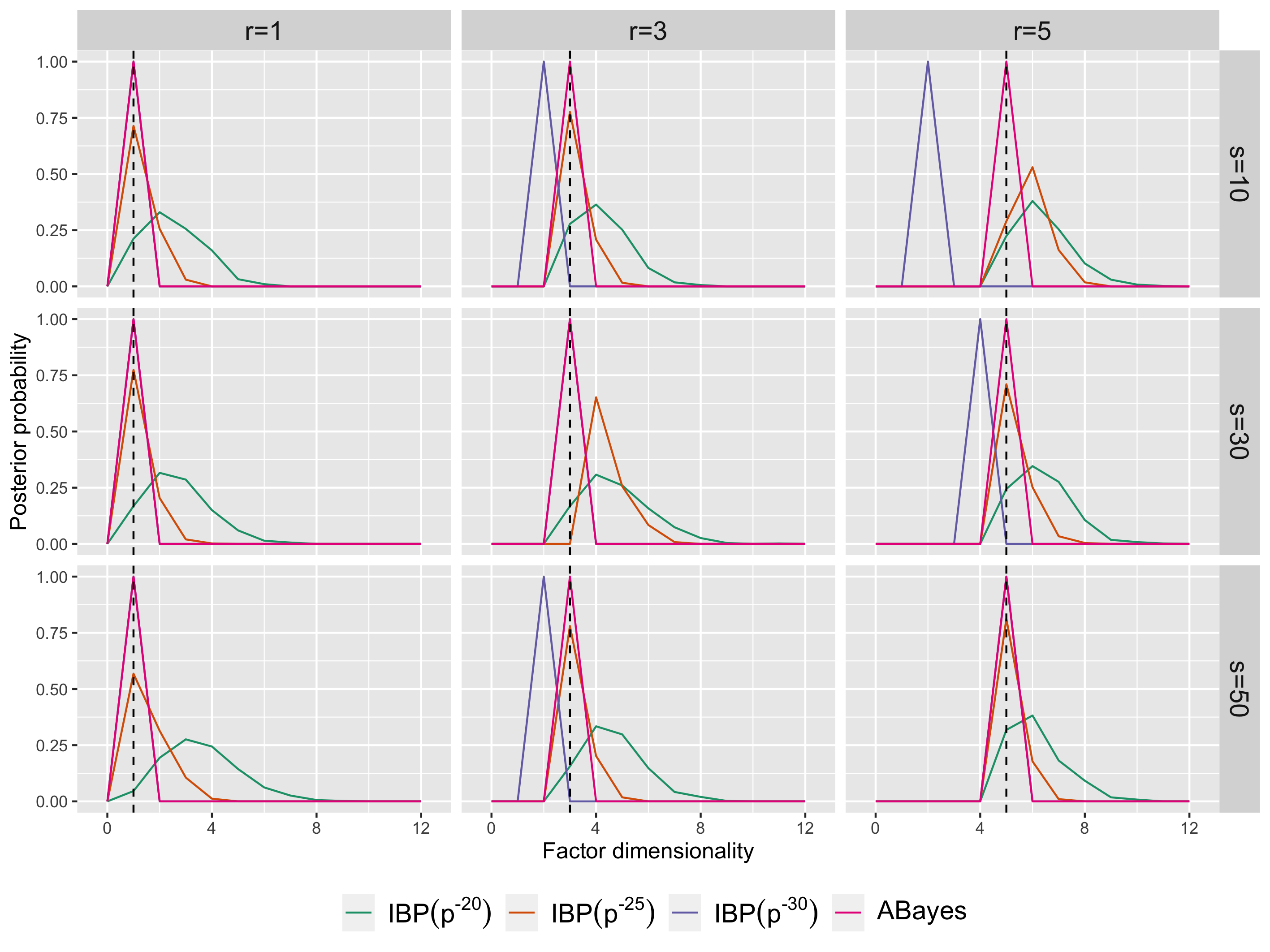}
    }
    \caption{Posterior distributions of the factor dimensionality for the proposed adaptive Bayesian (AdaSS) prior and spike and slab with the two-parameter IBP prior with various $\alpha$ (IBP($\alpha$)). The black dashed vertical lines indicate the true factor dimensionality $r$.}
    \label{fig:nfactor}
\end{figure}

\subsubsection{Comparison with frequentist estimators} 

In this simulation, we compare the performance of the AdaSS prior with some frequentist estimators for point estimation of the factor dimensionality. For our Bayesian model, we use the mode of the posterior distribution of the factor dimensionality as a point estimator. We consider the following five frequentist estimators as competitors: with $\S$ and  $\mathbf{R}$ being the sample covariance and correlation matrices, respectively, and $r_{\max}\in[p]$ pre-specified, 
    \begin{itemize}
        \item Eigenvalue threshold estimator (ET, \cite{onatski2010determining})
            \begin{equation*}
                \hat{r}_{\textsc{ET}}:=\sum_{j=1}^p\ind\del{\lambda_j(\S)>w\lambda_{r_{\max}+1}(\S)+(1-w)\lambda_{2r_{\max}+1}(\S)},
            \end{equation*}
        where $w:=2^{2/3}(2^{2/3}-1).$
        \item Eigenvalue ratio estimator (ER, \cite{ahn2013eigenvalue,lam2012factor}):
            \begin{equation*}
                \hat{r}_{\textsc{ER}}:=\argmax_{j\in[r_{\max}]}\lambda_j(\S)/\lambda_{j+1}(\S).
            \end{equation*}
        \item Growth ratio estimator (GR, \cite{ahn2013eigenvalue}):
            \begin{equation*}
                \hat{r}_{\textsc{GR}}:=\argmax_{j\in[r_{\max}]}\frac{\log(L_{j-1}/L_{j})}{\log(L_{j}/L_{j+1})},
            \end{equation*}
            where $L_j:=\sum_{t=j+1}^p\lambda_t(\S)$.
        \item Adjusted correlation threshold estimator (ACT, \cite{fan2020estimating}):
            \begin{equation*}
                \hat{r}_{\textsc{ACT}}:=\max\cbr{j\in[r_{\max}]:\lambda_j^\dag>1+\sqrt{p/(n-1)}},
            \end{equation*}
            where
                \begin{equation*}
                    \lambda_j^\dag:=\sbr{\frac{1-w_j}{\hat\lambda_j}-\frac{w_j}{p-j}\cbr{\sum_{t=j+1}^p(\hat\lambda_t-\hat\lambda_j)^{-1}+ 4(\hat\lambda_{j+1}-\hat\lambda_j)^{-1}}}^{-1}
                \end{equation*}
            with $\hat\lambda_j:=\lambda_j(\mathbf{R})$ and $w_j:=(p-j)/(n-1)$ for $j\in[p].$
    \item Diagonal thresholding estimator (DT, \cite{cai2013sparse}):
                \begin{equation*}
                \hat{r}_{\textsc{DT}}:=\max\cbr{j\in[r_{\max}]:\lambda_j(\check\S_{[J, J]})>2\del{1 + \sqrt{|J|/n}+\sqrt{(2(1+|J|)\log(\e p) + 6\log n)/n}}^2},
            \end{equation*}
        where $\check\S:=n^{-1}\sum_{i=1}^n(\Y_i+\Z^0_i)(\Y_i+\Z^0_i)^\top$ with $\Z^0_1,\dots, \Z^0_n\iidsim \N(\zero, \I)$ and 
            \begin{align*}
                J:=\cbr{j\in[p]:\check\S_{[j,j]}\ge 2+6\sqrt{\log p/n}}.
            \end{align*}
    \end{itemize}
We fix $r_{\max}=10$ throughout the simulation study.

A synthetic data set of size $n$ is generated from the $p$-dimensional normal distribution with mean $\zero$ and covariance $\B^\star(\B^\star)^\top+\I$, where the true loading matrix $\B^\star\in\R^{p\times r}$ is $s$-sparse. The true loading matrix $\B^\star$ is generated as follows: we first select $s$ nonzero rows and sample the elements in the nonzero rows from the uniform distribution on $[-4/\sqrt{s},-3/\sqrt{s}]\cup[3/\sqrt{s},4/\sqrt{s}]$. We take two sample sizes $n=50$ and $n=100$. We fix the dimension $p=1,000$ and let the sparsity $s$ and factor dimensionality $r$ vary among  $s\in\{10, 30, 50\}$ and $r\in\{1,3,5\}$, respectively. 

The simulation results based on 100 synthetic data sets of size $n=50$ and $n=100$ are summarized in Tables \labelcref{tab:nfac_c1} and  \labelcref{tab:nfac_c2}, respectively. We see that the proposed Bayesian model with the AdaSS prior outperforms the other competitors in the estimation of the factor dimensionality. The AdaSS prior has the highest proportion of the correct estimation for 17 out of the total 18 simulation setups. In particular, there are considerable performance gaps between the AdaSS prior and the  other competitors when sparsity is small ($s=10$) or the factor dimensionality is large ($r=5$).

\begin{table}[]
\centering
\caption{Proportions of correct estimation (``True''), overestimation (``Over'') and underestimation (``Under'') of the estimated factor dimensionalities for various sparsity $s$ and true factor dimensionality $r$ obtained on 100 synthetic data sets of size $n=50$. ``Ave'' is the average of the estimated factor dimensionalities.}
\label{tab:nfac_c1}
\begin{tabular}{ccccccccc}
\hline
 $s$ & $r$ &   & ET & ER & GR & ACT & DT & AdaSS  \\\hline
 \multirow{12}{*}{10} & \multirow{4}{*}{1} & True & 85 & 97 & 97 & 96 & 100 & 100 \\ 
   &  & Over & 14 & 3 & 3 & 4 & 0 & 0 \\ 
   &  & Under & 1 & 0 & 0 & 0 & 0 & 0 \\ 
   &  & Ave & 1.14 & 1.03 & 1.03 & 1.04 & 1 & 1 \\ 
  \cline{2-9} & \multirow{4}{*}{3} & True & 24 & 3 & 3 & 0 & 0 & 97 \\ 
   &  & Over & 1 & 0 & 0 & 0 & 0 & 0 \\ 
   &  & Under & 75 & 97 & 97 & 100 & 100 & 3 \\ 
   &  & Ave & 2.16 & 1.35 & 1.37 & 1.01 & 1 & 2.97 \\ 
  \cline{2-9} & \multirow{4}{*}{5} & True & 0 & 0 & 0 & 0 & 0 & 77 \\ 
   &  & Over & 0 & 0 & 0 & 0 & 0 & 15 \\ 
   &  & Under & 100 & 100 & 100 & 100 & 100 & 8 \\ 
   &  & Ave & 2.97 & 1.54 & 1.55 & 1.01 & 1.1 & 5.06 \\ 
   \hline
\multirow{12}{*}{30} & \multirow{4}{*}{1} & True & 86 & 96 & 97 & 97 & 100 & 100 \\ 
   &  & Over & 12 & 4 & 3 & 3 & 0 & 0 \\ 
   &  & Under & 2 & 0 & 0 & 0 & 0 & 0 \\ 
   &  & Ave & 1.1 & 1.1 & 1.07 & 1.03 & 1 & 1 \\ 
  \cline{2-9} & \multirow{4}{*}{3} & True & 46 & 15 & 15 & 0 & 0 & 94 \\ 
   &  & Over & 0 & 0 & 0 & 0 & 0 & 0 \\ 
   &  & Under & 54 & 85 & 85 & 100 & 100 & 6 \\ 
   &  & Ave & 2.43 & 1.57 & 1.57 & 1.09 & 1 & 2.93 \\ 
  \cline{2-9} & \multirow{4}{*}{5} & True & 0 & 1 & 1 & 0 & 0 & 87 \\ 
   &  & Over & 0 & 0 & 0 & 0 & 0 & 0 \\ 
   &  & Under & 100 & 99 & 99 & 100 & 100 & 13 \\ 
   &  & Ave & 3.2 & 1.72 & 1.77 & 1.06 & 1 & 4.86 \\ 
   \hline
\multirow{12}{*}{50} & \multirow{4}{*}{1} & True & 80 & 98 & 97 & 95 & 100 & 100 \\ 
   &  & Over & 17 & 2 & 3 & 5 & 0 & 0 \\ 
   &  & Under & 3 & 0 & 0 & 0 & 0 & 0 \\ 
   &  & Ave & 1.17 & 1.04 & 1.05 & 1.05 & 1 & 1 \\ 
  \cline{2-9} & \multirow{4}{*}{3} & True & 58 & 9 & 10 & 6 & 0 & 73 \\ 
   &  & Over & 2 & 0 & 0 & 0 & 0 & 0 \\ 
   &  & Under & 40 & 91 & 90 & 94 & 100 & 27 \\ 
   &  & Ave & 2.62 & 1.52 & 1.56 & 1.4 & 1 & 2.67 \\ 
  \cline{2-9} & \multirow{4}{*}{5} & True & 2 & 2 & 2 & 0 & 0 & 61 \\ 
   &  & Over & 0 & 0 & 0 & 0 & 0 & 0 \\ 
   &  & Under & 98 & 98 & 98 & 100 & 100 & 39 \\ 
   &  & Ave & 3.39 & 1.94 & 1.99 & 1.25 & 1 & 4.54 \\ 
   \hline
\end{tabular}
\end{table}

\begin{table}[]
\centering
\caption{Proportions of correct estimation (``True''), overestimation (``Over'') and underestimation (``Under'') of the estimated factor dimensionalities for various sparsity $s$ and true factor dimensionality $r$ obtained on 100 synthetic data sets of size $n=100$. ``Ave'' is the average of the estimated factor dimensionalities.}
\label{tab:nfac_c2}
\begin{tabular}{ccccccccc}
\hline
 $s$ & $r$ &   & ET & ER & GR & ACT & DT & AdaSS \\ \hline
 \multirow{12}{*}{10} & \multirow{4}{*}{1} & True & 78 & 100 & 100 & 93 & 100 & 100 \\ 
   &  & Over & 22 & 0 & 0 & 7 & 0 & 0 \\ 
   &  & Under & 0 & 0 & 0 & 0 & 0 & 0 \\ 
   &  & Ave & 1.24 & 1 & 1 & 1.07 & 1 & 1 \\ 
  \cline{2-9} & \multirow{4}{*}{3} & True & 66 & 9 & 10 & 0 & 0 & 86 \\ 
   &  & Over & 2 & 0 & 0 & 0 & 0 & 13 \\ 
   &  & Under & 32 & 91 & 90 & 100 & 100 & 1 \\ 
   &  & Ave & 2.7 & 1.55 & 1.57 & 1.02 & 1.25 & 3.12 \\ 
  \cline{2-9} & \multirow{4}{*}{5} & True & 6 & 0 & 0 & 0 & 0 & 20 \\ 
   &  & Over & 0 & 0 & 0 & 0 & 0 & 77 \\ 
   &  & Under & 94 & 100 & 100 & 100 & 100 & 3 \\ 
   &  & Ave & 3.63 & 1.83 & 1.86 & 1 & 2.13 & 5.93 \\ 
   \hline
\multirow{12}{*}{30} & \multirow{4}{*}{1} & True & 83 & 100 & 100 & 93 & 100 & 100 \\ 
   &  & Over & 17 & 0 & 0 & 7 & 0 & 0 \\ 
   &  & Under & 0 & 0 & 0 & 0 & 0 & 0 \\ 
   &  & Ave & 1.18 & 1 & 1 & 1.09 & 1 & 1 \\ 
  \cline{2-9} & \multirow{4}{*}{3} & True & 92 & 33 & 34 & 5 & 0 & 99 \\ 
   &  & Over & 4 & 0 & 0 & 2 & 0 & 1 \\ 
   &  & Under & 4 & 67 & 66 & 93 & 100 & 0 \\ 
   &  & Ave & 3 & 1.96 & 1.98 & 1.56 & 1 & 3.01 \\ 
  \cline{2-9} & \multirow{4}{*}{5} & True & 46 & 3 & 4 & 0 & 0 & 68 \\ 
   &  & Over & 1 & 0 & 0 & 0 & 0 & 32 \\ 
   &  & Under & 53 & 97 & 96 & 100 & 100 & 0 \\ 
   &  & Ave & 4.45 & 2 & 2.1 & 1.43 & 1 & 5.32 \\ 
   \hline
\multirow{12}{*}{50} & \multirow{4}{*}{1} & True & 85 & 100 & 100 & 94 & 100 & 100 \\ 
   &  & Over & 15 & 0 & 0 & 6 & 0 & 0 \\ 
   &  & Under & 0 & 0 & 0 & 0 & 0 & 0 \\ 
   &  & Ave & 1.16 & 1 & 1 & 1.06 & 1 & 1 \\ 
  \cline{2-9} & \multirow{4}{*}{3} & True & 99 & 59 & 61 & 64 & 0 & 98 \\ 
   &  & Over & 1 & 0 & 0 & 2 & 0 & 0 \\ 
   &  & Under & 0 & 41 & 39 & 34 & 100 & 2 \\ 
   &  & Ave & 3.01 & 2.36 & 2.38 & 2.67 & 1 & 2.98 \\ 
  \cline{2-9} & \multirow{4}{*}{5} & True & 68 & 15 & 20 & 2 & 0 & 91 \\ 
   &  & Over & 0 & 0 & 0 & 0 & 0 & 6 \\ 
   &  & Under & 32 & 85 & 80 & 98 & 100 & 3 \\ 
   &  & Ave & 4.68 & 2.36 & 2.65 & 2.61 & 1 & 5.03 \\ 
   \hline
\end{tabular}
\end{table}

\subsubsection{Covariance matrix estimation} 

In this simulation study, we compare the AdaSS prior with  other competitors  for covariance matrix estimation. For competitors, we consider the principal orthogonal complement thresholding method (POET, \cite{fan2013large}), the variational inference method for Bayesian sparse PCA (SPCA-VI, \cite{ning2021spike}), the Bayesian sparse factor models with  multiplicative gamma process shrinkage prior (MGPS, \cite{bhattacharya2011sparse}) and two maximum a posteriori estimators that employ the multi-scale generalized double Pareto prior (MDP, \citep{srivastava2017expandable}) and the spike-and-slab lasso with Indian buffet process prior (SSL-IBP, \citep{rovckova2016fast}), respectively. For the POET and SPCA-VI, the factor dimensionality must be selected in advance and we use the true factor dimensionality for this. We use the posterior mean of the covariance matrix as the point estimator for the MGPS and  AdaSS priors. 

We generate  100 synthetic data sets with sample size $n=50$ and $n=100$, respectively, and we report the averages of the scaled spectral norm losses $\norm[0]{\hat\bSigma-\bSigma^\star}/\|\bSigma^\star\|$ between the point estimate $\hat\bSigma$ of each estimator and the true covariance matrix $\bSigma^\star$ obtained over 100 synthetic data sets in Tables \ref{tab:cov_c1} and \ref{tab:cov_c2}. The AdaSS prior performs generally well, while the POET, MGPS and MDP are significantly inferior. SSL-IBP is not much worse and performs best for the setups with $s=50.$

\begin{table}[ht]
\centering
\caption{The averages and standard errors of the scaled spectral norm losses of the estimators of the covariance matrix obtained on 100 synthetic data sets with $n=50$.} 
\label{tab:cov_c1}
\begin{tabular}{cccccccc} \hline
 $s$ & $r$ & POET & SPCA-VI & MGPS & MDP & SSL-IBP & ABayes \\  \hline
 \multirow{3}{*}{10} & 1 & 2.366 (0.151) & 0.245 (0.108) & 2.591 (2.04) & 2.103 (0.135) & 0.689 (0.062) & \textbf{0.233 (0.103)} \\ 
   & 3 & 1.83 (0.254) & 0.398 (0.12) & 1.45 (0.863) & 1.583 (0.212) & 0.646 (0.094) & \textbf{0.301 (0.113)} \\ 
   & 5 & 1.59 (0.232) & 0.422 (0.111) & 1.195 (0.575) & 1.271 (0.191) & 0.696 (0.101) & \textbf{0.335 (0.107)} \\ 
   \hline
\multirow{3}{*}{30} & 1 & 2.375 (0.155) & 0.772 (0.102) & 1.945 (1.292) & 2.104 (0.14) & 0.699 (0.067) & \textbf{0.624 (0.152)} \\ 
   & 3 & 2.073 (0.202) & 0.674 (0.117) & 2.078 (1.309) & 1.839 (0.184) & 0.696 (0.063) & \textbf{0.609 (0.172)} \\ 
   & 5 & 1.868 (0.192) & 0.644 (0.086) & 1.551 (0.87) & 1.649 (0.175) & 0.684 (0.052) & \textbf{0.631 (0.138)} \\ 
   \hline
\multirow{3}{*}{50} & 1 & 2.345 (0.146) & 0.901 (0.039) & 2.018 (1.52) & 2.072 (0.13) & \textbf{0.759 (0.065)} & 0.847 (0.134) \\ 
   & 3 & 2.145 (0.194) & 0.762 (0.102) & 1.996 (1.168) & 1.901 (0.175) & \textbf{0.695 (0.069)} & 0.966 (0.208) \\ 
   & 5 & 2.013 (0.2) & 0.744 (0.098) & 1.519 (0.752) & 1.786 (0.182) & \textbf{0.709 (0.067)} & 1.049 (0.228) \\ 
   \hline
\end{tabular}
\end{table}

\begin{table}[ht]
\centering
\caption{The averages and standard errors of the scaled spectral norm losses of the estimators of the covariance matrix obtained on 100 synthetic data sets with $n=100$.} 
\label{tab:cov_c2}
\begin{tabular}{cccccccc} \hline
 $s$ & $r$ & POET & SPCA-VI & MGPS & MDP & SSL-IBP & ABayes \\  \hline
 \multirow{3}{*}{10} & 1 & 1.437 (0.108) & \textbf{0.164 (0.064)} & 2.374 (1.123) & 1.322 (0.101) & 0.523 (0.048) & 0.17 (0.066) \\ 
   & 3 & 1.136 (0.146) & 0.274 (0.074) & 1.565 (1.129) & 1.044 (0.141) & 0.483 (0.066) & \textbf{0.218 (0.074)} \\ 
   & 5 & 0.999 (0.151) & 0.281 (0.069) & 1.051 (0.407) & 0.881 (0.135) & 0.464 (0.057) & \textbf{0.244 (0.075)} \\ 
   \hline
\multirow{3}{*}{30} & 1 & 1.437 (0.09) & 0.366 (0.105) & 2.489 (1.124) & 1.317 (0.085) & 0.534 (0.053) & \textbf{0.317 (0.088)} \\ 
   & 3 & 1.28 (0.138) & 0.449 (0.083) & 2.187 (0.946) & 1.175 (0.133) & 0.533 (0.055) & \textbf{0.353 (0.105)} \\ 
   & 5 & 1.178 (0.116) & 0.457 (0.079) & 1.723 (1.618) & 1.084 (0.109) & 0.538 (0.059) & \textbf{0.4 (0.087)} \\ 
   \hline
\multirow{3}{*}{50} & 1 & 1.428 (0.095) & 0.707 (0.112) & 2.357 (1.226) & 1.306 (0.09) & 0.57 (0.065) & \textbf{0.536 (0.133)} \\ 
   & 3 & 1.328 (0.102) & 0.628 (0.102) & 2.236 (1.072) & 1.219 (0.1) & \textbf{0.547 (0.058)} & 0.587 (0.133) \\ 
   & 5 & 1.261 (0.119) & 0.57 (0.072) & 1.729 (0.97) & 1.161 (0.112) & \textbf{0.564 (0.059)} & 0.652 (0.123) \\ 
   \hline
\end{tabular}
\end{table}

\subsection{Real data analysis}
\label{subsec:real_data}

In this section, we analyze gene expression data on aging in mice from the AGEMAP (Atlas of Gene Expression in Mouse Aging Project) database \cite{zahn2007agemap}. We obtained this data from the online website \href{http://statweb.stanford.edu/~owen/data/AGEMAP/}{http://statweb.stanford.edu/$\sim$owen/data/AGEMAP}. There are 5 female and 5 male mice in each age group, where there are 4 age groups of 1, 6, 16 and 24 months. Thus there  are $40$ mice in total.  From each of 40 mice, 16 microarrays obtained from 16 different tissues were prepared, and from each microarray, gene expression levels of $8,932$ probes were measured. In this paper,  we focus only on the microarray data from the cerebrum tissue, for which the rotation test of \cite{perry2010rotation} provided strong evidence for the presence of the latent factor. We will call this one tissue data set with sample size $n=40$ and dimension $p=8,932$ the AGEMAP data for simplicity.

We preprocessed the AGEMAP data following the regression model of \cite{perry2010rotation}. We obtained the mean-centered data by regressing out an intercept, sex, and age effects on each of the 8,932 outcomes. Then the factor dimensionality is estimated based on the mean-centered data set. We consider the factor model with heterogeneous noise variances and impose the AdaSS prior presented in \cref{rmk:hetero_noise}. We set $q=10\ge \sqrt{n}= \sqrt{40}$, $A=0.1$ and $\a_j=(0.01, 0.01)$ for every $j\in[p]$ in the prior. Then we take the posterior mode of the factor dimensionality as the point estimate. For comparison, we also considered the five frequentist methods described in \cref{subsec:simulation}, i.e., ET, ER, GR, ACT and DT.

\cref{tab:agemap_nfac} provides the factor dimensionality estimates by the proposed Bayesian model and the five competing frequentist methods. The four methods including the AdaSS prior estimate the factor dimensionality by 1. The presence of the one-dimensional latent factor was advocated by the rotation test of \cite{perry2010rotation}.

\cref{fig:agemap_hist_z} shows the histogram of the posterior means of the latent factors $\E(Z_{ik^*}|\Y_{1:n})$ for $i\in[n]$ obtained under the AdaSS prior, where $k^*$ denotes the index of the nonzero column of the loading matrix, i.e., $\B_{[:,k^*]}\neq\zero$ under the posterior distribution. The bimodality of the histogram is clearly shown, which is also confirmed by \cite{perry2010rotation}.  \cref{fig:agemap_hist_spar} presents the posterior distribution of the sparsity $|\supp(\B)|$ of the loading matrix, which ranges from 79.4\% to 82\%. A similar 78\% sparsity of the estimated factor model was reported by \cite{rovckova2016fast}. 

\begin{table}[t]
\centering
\caption{Estimated factor dimensionality for the  AGEMAP data.}
\label{tab:agemap_nfac}
\begin{tabular}{cccccc}
  \hline
ET & ER & GR & ACT & DT & AdaSS \\ 
  \hline
  8 &   1 &   1 &  10 &   1 &   1 \\ 
   \hline
\end{tabular}
\end{table}

\begin{figure}[t]
    \centering   
    \begin{subfigure}[c]{0.45\textwidth}
        \includegraphics[scale=0.115]{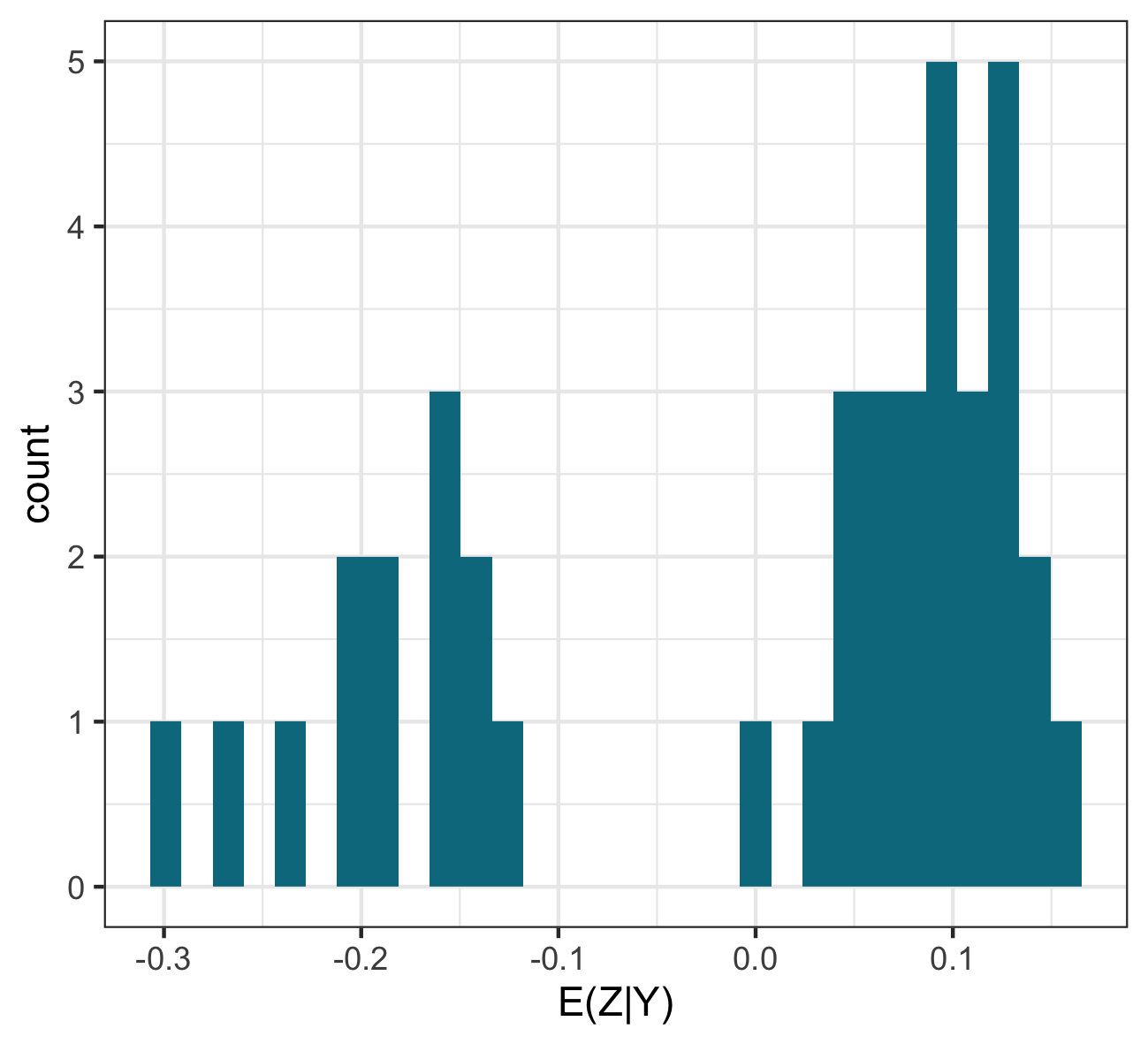}
        \subcaption{}
        \label{fig:agemap_hist_z}
        \end{subfigure}\quad
    \begin{subfigure}[c]{0.45\textwidth}
        \includegraphics[scale=0.115]{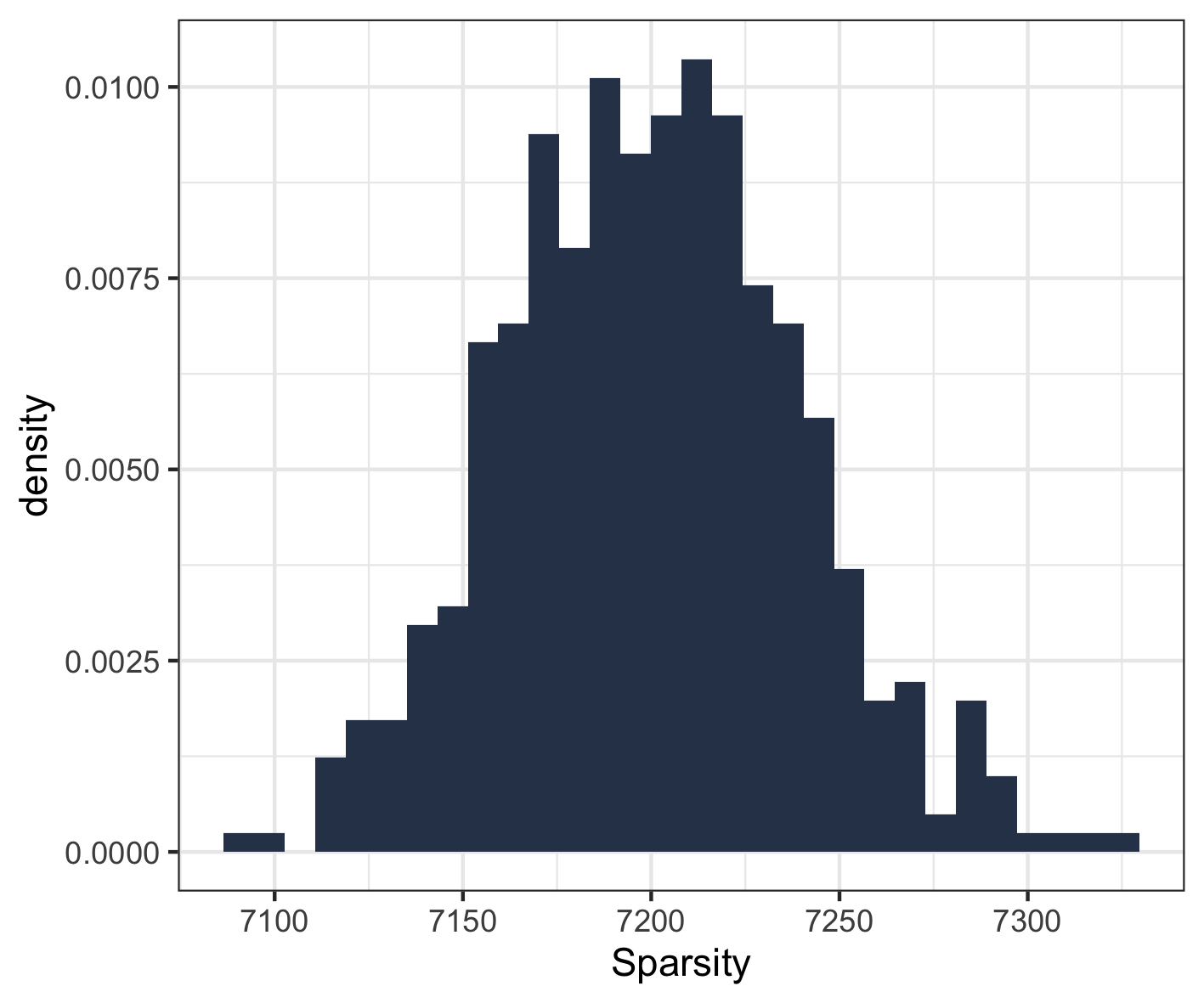}
        \subcaption{}
        \label{fig:agemap_hist_spar}
        \end{subfigure}
\caption{(a) Histogram of the posterior means of the latent factors for each observation; (b) Posterior distribution of the sparsity of the loading matrix for the AGEMAP data.}
    \label{fig:agemap_hist}
\end{figure}

\section{Concluding remarks}
\label{sec:conclusion}

In this paper, we proposed a novel prior distribution, called AdaSS, to infer high-dimensional sparse factor models. We proved that the resulting posterior distribution asymptotically concentrates at the true factor dimensionality without knowing the sparsity level of the true loading matrix. 
This adaptivity to the sparsity is a remarkable advantage of the proposed method over other theoretically consistent estimators such as the point estimator in \cite{cai2015optimal} and Bayesian posterior distribution in \cite{ohn2020posterior}. We also showed that the proposed model attained the optimal detection rate of the eigengap for consistent estimation of the factor dimensionality. Moreover, the concentration rate of the posterior distribution of the covariance matrix is optimal when the true factor dimensionality is bounded and equal or faster than those of other Bayesian models otherwise. Our numerical studies amply confirmed the theoretical results and  provided strong empirical support to the proposed AdaSS prior.

With our prior, nonzero row vectors of the loading matrix $\B$ are not sparse. That is, when $s'$-many  nonzero row vectors and the factor dimensionality are given, all the entries of the corresponding $s' \times \xi$ sub-matrix are all nonzero. In practice, one may want to have sparsity in nonzero row vectors. Our prior can be modified easily to ensure such sparsity without hampering the asymptotic properties, which we will report somewhere else.

There are several promising directions for future work. In this paper, we consider the \textit{static} factor model where the observations are assumed to be identically distributed. However, this static factor model may be inadequate to capture the dependence structure in some types of data, e.g., time series data. As an alternative, we may consider a dynamic factor model, where the covariance matrix as well as the factor dimensionality can be different at each time point. It would of interest to study the posterior consistency of the factor dimensionality which possibly varies over time. Another promising avenue of research is to develop the Bayesian factor model which deals  with non-Gaussian or mixed-type observed variables. We believe that the proposed Bayesian model can be easily extended to those types of data using the Gaussian copula factor model developed by \cite{murray2013bayesian}. It would be interesting to investigate the theoretical properties of such a non-Gaussian extension of the proposed Bayesian model.

\bibliographystyle{plainnat}
\bibliography{_references}

\clearpage

\newpage

\setcounter{page}{1}
\renewcommand{\thepage}{S-\arabic{page}}

\crefalias{section}{appendix}

	\begin{center}
		{\Large \textsc{Supplement to ``A Bayesian sparse factor model with adaptive posterior concentration''}} \\
		\medskip 
		{Ilsang Ohn, Lizhen Lin and Yongdai Kim}
		\medskip
	\end{center}

In this supplementary material, we provide proofs of the main results and additional numerical results.

\appendix

\section{Proofs}
\label{sec:proof}

Before giving the proofs, we introduce some additional notations. We define
    \begin{align*}
        \bar\cC(p,q):=\cbr{\bSigma=\B\B^\top+\psi\I:\B\in\R^{p\times q}, \psi>0}
    \end{align*}
for $q\in\bN$. 
Note that the set $\bar\cC(p,q)$ includes the support of the AdaSS prior.

\subsection{Proofs of \cref{thm:conv_nfac,thm:conv_cov}}
\label{sec:proof:main}

Note that
    \begin{align*}
        \P_{\bSigma^\star}^{(n)}\Pi(\cU|\Y_{1:n})
        =\P_{\bSigma^\star}^{(n)}\frac{N_n(\cU)}{D_n}
    \end{align*}
for any measurable subset $\cU\subset\bar\cC(p)$, where we denote
    \begin{align*}
        N_n(\cU)&:=\int_{\cU}\frac{\sp_{\bSigma}^{(n)}}{\sp_{\bSigma^\star}^{(n)}}(\Y_{1:n})\Pi(\d\bSigma),\\
        D_n&:=\int\frac{\sp_{\bSigma}^{(n)}}{\sp_{\bSigma^\star}^{(n)}}(\Y_{1:n})\Pi(\d\bSigma).
    \end{align*}
We will prove \cref{thm:conv_nfac,thm:conv_cov} by deriving a lower bound of $D_n$ and an upper bound
of $N_n(\cU)$ for $\cU$ related to Theorems.

The following lemma provides a high-probability lower bound of $D_n$. The proof is deferred to \cref{subsec:proof_d_n} in the supplementary material.

\begin{lemma}
\label{lem:d_n}
Suppose that the prior distribution is defined through \labelcref{eq:prior_ind}, \labelcref{eq:prior_loading} and \labelcref{eq:prior_noise} with $A>0$ and $\a\in\R_+^2$. Assume that  $\lambda\lesssim s$. Then for any $\bSigma^\star\in\cC(p, r, s, \lambda, 0)$, we have 
    \begin{equation}
        \P_{\bSigma^\star}^{(n)}\del{D_n\ge \e^{-C_1s(r\log n\vee\log p) - Asr\log (p\vee n)}}\ge 1-\frac{1}{\log n}
    \end{equation}
for some universal constant $C_1>0$.
\end{lemma}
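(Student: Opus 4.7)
The plan is to apply the standard Kullback--Leibler prior-mass lemma (e.g., Lemma 8.1 of Ghosal and van der Vaart, \emph{Fundamentals of Nonparametric Bayesian Inference}, 2017), which guarantees that for any $\epsilon_n>0$ and any measurable $\cB_n$ on which both $\kl(\sp_{\bSigma^\star}\|\sp_{\bSigma})$ and $\var_{\bSigma^\star}[\log(\sp_{\bSigma^\star}/\sp_{\bSigma})]$ are at most $\epsilon_n^2$, one has $D_n \ge \Pi(\cB_n)\e^{-2n\epsilon_n^2}$ with $\P_{\bSigma^\star}^{(n)}$-probability at least $1-1/(n\epsilon_n^2)$. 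I set $\epsilon_n^2 := (\log n)/n$, so the probability is $\ge 1 - 1/\log n$ and the penalty $\e^{-2n\epsilon_n^2}=\e^{-2\log n}$ is harmless in the target exponent. It then remains to exhibit $\cB_n$ satisfying the KL/variance bound together with $\Pi(\cB_n) \ge \exp\cbr{-C'\,s(r\log n \vee \log p) - A sr\log(p\vee n)}$.

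I build $\cB_n$ factor by factor of the prior. Freeze $\u=\u^\star$ equal to the row-support indicator of $\B^\star$ (padded arbitrarily if $|\supp(\B^\star)|<s$) and $\v=\v^\star$ indicating some fixed set of $r$ columns of $[q]$, and view $\B^\star$ as a $p\times q$ matrix by zero-padding. On the $sr$ matching positions I impose $|\beta_{jk}-\beta^\star_{jk}|\le\delta_n$ with $\delta_n := c_1 \epsilon_n/(s\sqrt{r})$, and separately $|\psi-\psi^\star|\le c_2 \epsilon_n/\sqrt{p}$, for small $c_1,c_2>0$. The triangle-type bound
\begin{equation*}
\fnorm{\bSigma-\bSigma^\star} \le \fnorm{\B-\B^\star}\del{\norm{\B}+\norm{\B^\star}} + \sqrt{p}\,\abs{\psi-\psi^\star}
\end{equation*}
combined with $\fnorm{\B-\B^\star}\le\sqrt{sr}\,\delta_n$ and $\norm{\B}\vee\norm{\B^\star}\lesssim\sqrt{\lambda}\lesssim\sqrt{s}$ (using $\lambda\lesssim s$) gives $\fnorm{\bSigma-\bSigma^\star}\lesssim\epsilon_n$. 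Since $\psi\ge \psi_0>0$ keeps $\lambda_p(\bSigma)$ uniformly bounded below on $\cB_n$, the standard multivariate-Gaussian inequalities $\kl \vee \var_{\bSigma^\star}[\log(\sp_{\bSigma^\star}/\sp_{\bSigma})]\lesssim \psi_0^{-2}\fnorm{\bSigma-\bSigma^\star}^2$ deliver the required KL and variance control.

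For the prior-mass piece, note that the normalizing constant in (\ref{eq:prior_ind}) is $\sum_{\omega,\xi\ge 1}(p\vee n)^{-A\omega\xi}\lesssim (p\vee n)^{-A}$, so $\Pi(\u^\star,\v^\star)\gtrsim \binom{p}{s}^{-1}\binom{q}{r}^{-1}(p\vee n)^{A(1-sr)}$, whence $-\log\Pi(\u^\star,\v^\star)\lesssim s\log(ep/s)+r\log(eq/r)+Asr\log(p\vee n)$. The Laplace slab on the chosen box has mass at least $(\delta_n/2)^{sr}\exp\del{-\sum_{jk}|\beta^\star_{jk}|-sr\delta_n}$; the entry sum is bounded by $\sqrt{sr}\fnorm{\B^\star}\le r\sqrt{s\lambda}\lesssim sr$ (from $\fnorm{\B^\star}^2\le r\lambda$), and $\log(1/\delta_n)=O(\log n)$, so the Laplace contribution gives $-\log \lesssim sr\log n$. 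The inverse-gamma density at $\psi^\star\in[\psi_0,\lambda]$ is bounded below by a constant times $\lambda^{-a_1-1}$, so the $\psi$-interval contributes $-\log\lesssim \log n + \log\lambda$. Summing and using $r\le s$, $q\lesssim p\vee n$ and the elementary $s\log p + sr\log n \le 2 s(r\log n \vee \log p)$ yields the claimed prior-mass bound.

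The main subtlety is the uniform KL/variance estimate on $\cB_n$: the scaling $\delta_n\asymp\epsilon_n/(s\sqrt{r})$ is chosen exactly so that $\sqrt{\lambda sr}\,\delta_n\lesssim \epsilon_n$ under $\lambda\lesssim s$, and the lower bound $\psi\ge\psi_0$ is what keeps $\norm{\bSigma^{-1}}$ bounded as the neighborhood shrinks. Every other step is a routine prior-mass or Gaussian calculation.
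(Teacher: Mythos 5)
Your proof is correct but takes a genuinely different route from the paper. The paper establishes and applies a tailored lower-bound lemma (its Lemma~\ref{lem:d_n0}), which it proves via Jensen's inequality applied to $\log D_n$, a log-determinant inequality from Pati et al.\ (paying an extra $\log\rho(\bSigma^\star)$ factor that is then absorbed using $\lambda\lesssim s$), and Chebyshev applied to the centered quadratic-form average. You instead invoke the off-the-shelf Ghosal--van der Vaart Kullback--Leibler prior-mass lemma with $\epsilon_n^2=(\log n)/n$ and verify the KL and second-moment conditions directly via $\kl \vee \var \lesssim \psi_0^{-2}\fnorm{\bSigma-\bSigma^\star}^2$, which is legitimate on your $\cB_n$ once one notes $\lambda_p(\bSigma)\ge\psi\gtrsim\psi_0$ and that $\norm{\bSigma^{-1/2}(\bSigma^\star-\bSigma)\bSigma^{-1/2}}\to0$, so the quadratic Taylor bound on $x-\log(1+x)$ applies. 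Both routes construct the small ball $\cB_n$ the same way (freeze $\u,\v$ to the true support patterns, entry-wise box for the nonzero loadings, short interval for $\psi$) and arrive at Laplace-slab and inverse-gamma mass estimates of the same order, so the final exponent matches; the $\e^{-2n\epsilon_n^2}=\e^{-2\log n}$ penalty is harmless since $2\log n\le 2s(r\log n\vee\log p)$. Your approach is arguably cleaner in that it avoids the $\log\rho(\bSigma^\star)$ bookkeeping; the paper's custom lemma pays that factor but uses a somewhat larger radius $\eta=\sqrt{sr/n}$, which makes the resulting prior-mass neighborhood easier to lower-bound. Two small points you should make explicit: the step $r\log(eq/r)\lesssim s(r\log n\vee\log p)$ uses $q\lesssim p\vee n$, which holds for the paper's recommended choices but is a standing assumption on $q$; and the padding of $\u^\star$ to cardinality $s$ is unnecessary (the paper just takes $\u^\star$ to be the true row-support indicator, with $\|\u^\star\|_0\le s$, and the bound $\binom{p}{s^\star}^{-1}\ge\binom{p}{s}^{-1}$ for $s^\star\le s\le p/2$ still gives the claimed estimate).
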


We now prove  \cref{thm:conv_nfac,thm:conv_cov}. Since the proof of \cref{thm:conv_nfac} uses the result of \cref{thm:conv_cov}, we first prove \cref{thm:conv_cov}.

\subsubsection{Proof of \cref{thm:conv_cov}}

\begin{proof}[Proof of \cref{thm:conv_cov}]
Fix $\bSigma^\star=\B^\star(\B^\star)^\top+\psi^\star\I\in\cC(p, r, s, \lambda, 0)$. Recall that we let $\epsilon:=\lambda\sqrt{sr\log  (p\vee n)/n}$. By \cref{lem:d_n}, there exists an universal constant $C_1>0$  such that $\P^{(n)}_{\bSigma^\star}(\fD_n)\ge 1-\log^{-1}n$ where
    \begin{equation}
    \label{eq:set_denom}
        \fD_n:=\cbr{\Y_{1:n}\in\R^{p\times n}:D_n\ge \e^{-C_1s(r\log n\vee\log p) - Asr\log  (p\vee n)}}.
    \end{equation}
Note that
    \begin{equation*}
        \Pi(|\supp(\B)|>t)= \Pi(\|\u\|_0>t)\lesssim \sum_{t=\ceil{t}}^{p} \sum_{k=1}^{\infty}  (p\vee n)^{-Atk}\lesssim \e^{-At\log (p\vee n)}
    \end{equation*}
for any $t\in \mathbbm{N}.$
Hence, for a sufficiently large constant $C_2>0$, we have
    \begin{align*}
        \P_{\bSigma^\star}^{(n)}\sbr{\Pi\del{|\supp(\B)|>C_2sr|\Y_{1:n}}}
        &\le \P^{(n)}_{\bSigma^\star}(\fD_n^c)
         +\frac{\Pi\del{|\supp(\B)|>C_2sr}}{ \e^{-C_1s(r\log n\vee\log p) - Asr\log (p\vee n)}}\\
        &\le \log^{-1}n + \e^{C_1s(r\log n\vee\log p)-A(C_2-1)sr\log (p\vee n)}
        \to0.
    \end{align*}
Therefore it suffices to show that $\P_{\bSigma^\star}^{(n)}\sbr{\Pi(\cU|\Y_{1:n})}$ converges to zero, where
  \begin{equation*}
        \cU:=\cbr{\bSigma\in\bar\cC(p,q):|\supp(\B)|\le C_2sr, \norm[0]{\bSigma-\bSigma^\star}\ge M\epsilon}.
    \end{equation*}
We use the test function given in the next lemma to bound the posterior probability of $\cU$.  The proof of the lemma is provided in \cref{subsec:proof_test}  in the supplementary material.

\begin{lemma}
\label{lem:test}
Let $\bSigma^\star:=\B^\star(\B^\star)^\top+\psi^\star\I\in\cC(p,r,s,\lambda,0)$ and define a set
    \begin{equation*}
        H_1:=\cbr{\bSigma\in\bar\cC(p,q): |\supp(\B)|\le t,  \norm[0]{\bSigma-\bSigma^\star}\ge M\tepsilon},
    \end{equation*}
where $t\in\bN$, $q\in\bN$,  $M>2^{4/3}$ and $\tepsilon>0$. Then there exists a test function $\phi:\R^{p\times n}\mapsto[0,1]$ such that
   \begin{align*}
    \P^{(n)}_{\bSigma^\star}\phi(\Y_{1:n})&\le 3\exp\del{2t\log p+(C_3+1)(t+s)-\frac{C_3M^{1/2}}{(1+\psi_0^{-2})\lambda^2}n\tepsilon^2}, \\
    \sup_{\bSigma:\bSigma\in H_1} \P^{(n)}_{\bSigma}(1-\phi(\Y_{1:n})) &\le \exp\del{C_3(t+s)-\frac{C_3M}{4}n\tepsilon^2}
    \end{align*}
for some universal constant $C_3>0$.
\end{lemma}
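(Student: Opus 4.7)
I would construct $\phi$ as the maximum of a family of component tests indexed by a finite covering of $H_1$, with Type I and Type II errors controlled via a union bound. The key structural observation is that for any $\bSigma=\B\B^\top+\psi\I\in H_1$, the difference $\bSigma-\bSigma^\star$ decomposes as a low-rank piece $\B\B^\top-\B^\star(\B^\star)^\top$ supported on the $(t+s)$-dimensional index set $\supp(\B)\cup\supp(\B^\star)$ plus the diagonal perturbation $(\psi-\psi^\star)\I$. Hence $\norm{\bSigma-\bSigma^\star}\ge M\tepsilon$ forces at least one of the following to hold: (i) the restricted spectral norm of $\B\B^\top-\B^\star(\B^\star)^\top$ on $\supp(\B)\cup\supp(\B^\star)$ is $\ge M\tepsilon/2$, or (ii) $|\psi-\psi^\star|\ge M\tepsilon/2$.

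For case (i), I would enumerate all candidate row supports $\cS\subseteq[p]$ with $|\cS|\le t$ (at most $\binom{p}{t}\le p^t$ of them, contributing a factor $t\log p$ to the entropy) and, for each $\cS$, base the component test on $\sup_{\v\in\cN_\cK}|\v^\top(\hat\bSigma-\bSigma^\star)\v|$, where $\cK:=\cS\cup\supp(\B^\star)$, $\hat\bSigma$ is the sample covariance, and $\cN_\cK$ is a $1/4$-net on the unit sphere of $\R^{|\cK|}$ of cardinality at most $9^{|\cK|}\le 9^{t+s}$; this net produces the $(t+s)$ constant exponent. For each fixed $\v\in\cN_\cK$, $n\v^\top\hat\bSigma\v/(\v^\top\bSigma^\star\v)$ is $\chi_n^2$-distributed under $\P_{\bSigma^\star}^{(n)}$, so a Bernstein-type concentration yields tails whose variance proxy scales with $\norm{\bSigma^\star}^2\le\lambda^2$, producing the $\lambda^{-2}$ factor in the Type I bound. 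Case (ii) is handled separately by a trace-based test on rows outside $\supp(\B^\star)$: its null expectation equals $\psi^\star\ge\psi_0$, and sub-exponential concentration provides the bound with the $(1+\psi_0^{-2})^{-1}$ factor that reflects the lower bound on the null noise variance.

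Taking $\phi$ as the maximum of the two families of tests and applying a union bound over the $\binom{p}{t}$ supports and the $9^{t+s}$ net vectors yields the Type I bound with exponent $2t\log p+(C_3+1)(t+s)-\frac{C_3 M^{1/2}}{(1+\psi_0^{-2})\lambda^2}n\tepsilon^2$; the extra factor of $2$ in $2t\log p$ arises from handling the quadratic form $\v^\top(\cdot)\v$ bilinearly (i.e.\ by treating row and column supports through independent coverings). The factor of $3$ in front of the Type I bound is absorbed from combining the two case tests plus a deterministic event that controls the approximation between the net supremum and the true restricted spectral norm. The Type II bound is immediate: for any $\bSigma\in H_1$, the matching test in the family has expected value at least $M\tepsilon/2$ under $\P_{\bSigma}^{(n)}$, and Bernstein concentration around this shifted mean yields $\exp(C_3(t+s)-C_3 M n\tepsilon^2/4)$ without the $\lambda^{-2}$ factor, because under the alternative the variance is dominated by the mean shift itself.

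The main obstacle will be calibrating the test threshold so that the asymmetric rates---$M^{1/2}/\lambda^2$ in Type I versus $M/4$ in Type II---fall out cleanly from the two-regime (sub-Gaussian/sub-exponential) Bernstein tail $\exp(-cn\min(u^2/\lambda^2,u/\lambda))$ of the chi-square statistic. Placing the threshold at roughly $M\tepsilon/2$ and balancing the quadratic and linear terms produces the $M^{1/2}$ exponent when $M\tepsilon\ll\lambda$, while on the alternative the mean shift directly gives $M/4$. Careful bookkeeping of the $1/4$-net approximation---which requires inflating the threshold by a constant factor to recover the full spectral norm from the discretization---together with the standard conversion between quadratic forms and restricted spectral norms, completes the argument.
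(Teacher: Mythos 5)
Your proposal is correct in spirit but takes a genuinely different route. The paper does the same opening reduction you propose --- writing $H_1 \subset \bigcup_{\cS : |\cS| \le t} H_{1,\cS}$, defining $\bar\cS := \cS \cup \supp(\B^\star)$ of size $\le t+s$, and observing that $\bSigma-\bSigma^\star$ is essentially carried by the $\bar\cS$-block (the paper asserts $\norm{\bSigma_{\bar\cS}-\bSigma^\star_{\bar\cS}}=\norm{\bSigma-\bSigma^\star}$, whereas you cleanly split into the low-rank and $(\psi-\psi^\star)\I$ pieces via a triangle inequality; both are workable). Where you diverge is the core test: the paper treats the $(t+s)$-dimensional restricted problem entirely as a black box, invoking Lemma 5.7 of Gao and Zhou (2015) to produce, for each fixed $\cS$, a test $\phi^\cS$ with the two-term Type I bound and the $M/4$-rate Type II bound; it then sets $\phi := \max_\cS \phi^\cS$ and union-bounds. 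You instead build the test from scratch via a $1/4$-net of the unit sphere in $\R^{|\bar\cS|}$, chi-square concentration of the quadratic forms $\v^\top\hat\bSigma\v$, and Bernstein two-regime tails, and you then handle the $\psi$-perturbation by a separate trace-type test. Your construction is essentially a self-contained re-derivation of what Gao-Zhou's Lemma 5.7 packages; what the paper's approach buys is brevity and avoidance of the threshold-calibration arithmetic you correctly flag as the main delicacy, while your approach buys self-containedness and makes explicit where the $M^{1/2}$ vs.\ $M/4$ asymmetry originates.

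Two small calibration points. First, the factor $2t\log p$ in the Type I bound does not come from any ``bilinear'' treatment of the quadratic form; it is simply absorption slack from the union bound over supports: $\sum_{t'=1}^{t}\binom{p}{t'} \lesssim (t+1) p^{t}$, and the polynomial factor $t+1$ plus the extra $\log p$ are folded into $2t\log p + (C_3+1)(t+s)$. Second, the factor of $3$ in front of the Type I bound is inherited directly from the two additive terms in the cited Lemma 5.7 (one sub-Gaussian, two sub-exponential), not from combining separate case tests plus a net-approximation event --- though your proposed accounting can of course also land at an absolute constant and is not wrong, just different bookkeeping. Neither of these affects the validity of your plan.
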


For the test function $\phi$ given in \cref{lem:test} and the set $\fD_n$ defined in \labelcref{eq:set_denom}, we have
    \begin{align*}
    \P^{(n)}_{\bSigma^\star}\sbr{\Pi\del{\mathcal{U}|\Y_{1:n}}}
    &\le \P^{(n)}_{\bSigma^\star}\phi +\P^{(n)}_{\bSigma^\star}\sbr{(1-\phi)\Pi\del{\cU|\Y_{1:n}}\ind(\fD_n)} + \P_{\bSigma^\star}(\fD_n^c) \\
    &\le 3\e^{C_4sr\log p -C_5M^{1/2}n\epsilon^2/\lambda} + \e^{C_6sr\log (p\vee n) -C_7Mn\epsilon^2} +  \log^{-1}n  \\
    &\le 3 \e^{(C_8-C_{9}M^{1/2})sr\log (p\vee n)}+  \log^{-1}n
     \end{align*}
for some universal positive constants $C_4,\dots, C_{9}$. Hence, we obtain the desired result by choosing sufficiently large $M>0$.
\end{proof}

\subsubsection{Proof of \cref{thm:conv_nfac}}
\label{subsec:proof_nfac}

\begin{proof}[Proof of \labelcref{eq:conv_nfac_div}]
Fix $\bSigma^\star = \B^\star(\B^\star)^\top+\psi^\star\I\in\cC(p, r, s, \lambda, \zeta)$.  We first show that the expectation of the posterior probability of the event $\cbr{\bSigma\in\bar{\cC}(p,q):\xi(\B)<r}$ converges to zero. converges to zero. For this, it suffices to show that
    \begin{equation}
    \label{eq:rank_def}
        \cbr{\bSigma\in\bar{\cC}(p,q):\xi(\B)<r}\subset\cbr{\bSigma\in\bar{\cC}(p,q):\norm{\bSigma-\bSigma^\star} >\zeta/2}
    \end{equation}
due to \cref{thm:conv_cov}.
Suppose that $\xi(\B)<r$. Since $\B^\star$ is of full rank,  there exists $\w_1\in \Span(\B)^\perp \cap \Span(\B^\star)$ with $\norm{\w_1}_2=1$. For such $\w_1$, we have
	\begin{align*}
	\norm{\B\B^\top-\B^\star(\B^\star)^\top} 
	&\ge \norm{\del{\B\B^\top-\B^\star(\B^\star)^\top}\w_1}_2 \\
	&= \norm{\B^\star(\B^\star)^\top\w_1}_2\ge \zeta.
	\end{align*}
Let $\cJ:=\Span(\B)\cup \Span(\B^\star)$. Since $\rank(\cJ)< 2r\le p$, there exists  $\w_2\in \cJ^\perp$  with $\norm{\w_2}_2=1$. Then $\norm{\bSigma-\bSigma^\star}\ge \norm{\del{\bSigma-\bSigma^\star}\w_2}_2=\abs{\psi-\psi^\star}$. By the triangular inequality, we obtain
	\begin{equation}
	\label{eq:tri_ineq}
	\norm{\B\B^\top-\B^\star(\B^\star)^\top}\le \norm{\bSigma-\bSigma^\star}+\norm[0]{\del[0]{\psi-\psi^\star}\I} \le  2\norm{\bSigma-\bSigma^\star},
\end{equation}
which proves \labelcref{eq:rank_def}. By the assumption that $\zeta\ge c_0\epsilon$ for sufficiently large $c_0>0$, it follows that $\P_{\bSigma^\star}^{(n)}\Pi(\xi(\B)<r|\Y_{1:n})\to 0$ from \cref{thm:conv_cov}.

For the event  $\cbr{\bSigma\in\bar{\cC}(p,q):\xi(\B)>(1+\delta)r}$, we use the bound
    \begin{equation}
        \P_{\bSigma^\star}^{(n)}\Pi\del{\xi(\B)>(1+\delta)r|\Y_{1:n}}\le T_1+ T_2,
    \end{equation}
where we denote
    \begin{align*}
        T_1 &:=\P_{\bSigma^\star}^{(n)}\Pi\del{\xi(\B)>(1+\delta)r, |\supp(\B)|\ge s|\Y_{1:n}}\\
        T_2 &:=\P_{\bSigma^\star}^{(n)}\Pi\del{|\supp(\B)|< s|\Y_{1:n}}.
    \end{align*}
For $T_1$, note that
    \begin{align*}
     \Pi\del{\xi(\B)>(1+\delta)r, |\supp(\B)|\ge s} 
     &\le \Pi(\|\u\|_0\ge s,  \|\v\|_0>(1+\delta)r)\\
    &\lesssim \sum_{k\in[q]:k>(1+\delta)r}\sum_{t\in[p]:t\ge s} \e^{-Atk\log (p\vee n)}\\
    &\lesssim \e^{-A(1+\delta)sr\log (p\vee n)}.
    \end{align*}
Hence, by \cref{lem:d_n}, there exist constants $C_1>0$ and $C_2>0$ such that 
    \begin{equations}
    \label{eq:t1_bound}
        T_1&\le \P^{(n)}_{\bSigma^\star}(\fD_n^c) + \P_{\bSigma^\star}^{(n)}\sbr{\Pi\del{\xi(\B)>(1+\delta)r, |\supp(\B)|\ge s|\Y_{1:n}}\ind(\fD_n)}\\
        &\le\log^{-1} n + \frac{\Pi\del{\xi(\B)>(1+\delta)r, |\supp(\B)|\ge s} }{\exp\del{C_1s(r\log n\vee\log p) + Asr\log (p\vee n)}}\\
        &\le \log^{-1} n + \exp\del{C_2sr\log (p\vee n) - A\delta sr\log (p\vee n)},
    \end{equations}
where $\fD_n:=\cbr{\Y_{1:n}\in\R^{p\times n}:D_n\ge \e^{-C_1s(r\log n\vee\log p) - Asr\log (p\vee n)}}.$
Therefore, $T_1$ converges to zero if $A>C_2/\delta$. 
For $T_2$, we will show that
    \begin{equation}
    \label{eq:supp_def}
        \cbr{\bSigma\in\bar{\cC}(p,q):|\supp(\B)|< s}\subset\cbr{\bSigma\in\bar{\cC}(p,q):\norm{\bSigma-\bSigma^\star} >\zeta/2}.
    \end{equation}
Suppose that $|\supp(\B)|<s$. Then $\cS^-:=\supp(\B^\star)\setminus \supp(\B)$ is not empty, and so we can find $j^*\in[p]$ such that $j^*\in\cS^-$. Let $\w_3:=(v_{3j})_{j\in[p]}$ be a vector such that $w_{3j^*}=1$ and $w_{3j}=0$ for $j\neq j^*$. Then since $\B_{[j^*,:]}=\zero$, we have
    \begin{align*}
    \norm{\B\B^\top-\B^\star(\B^\star)^\top} 
    &\ge \norm{(\B\B^\top-\B^\star(\B^\star)^\top)\w_3}_2 \\
    & = \norm[1]{\B^\star\del[1]{\B^\star_{[j^*,:]}}^\top}_2\ge\zeta
    \end{align*}
which,  together with \labelcref{eq:tri_ineq}, proves \labelcref{eq:supp_def}. Thus \cref{thm:conv_cov} with the assumption  $\zeta\ge c_0\epsilon$ implies that $T_2=o(1),$ which completes the proof.
\end{proof}

\begin{proof}[Proof of  \labelcref{eq:conv_nfac_bdd}]
It is sufficient to prove that 
    \begin{equation*}
        T_1^*:=\P_{\bSigma^\star}^{(n)}\Pi\del{\xi(\B)>r, |\supp(\B)|\ge s|\Y_{1:n}}\to0.
    \end{equation*}
Using a similar argument used in \labelcref{eq:t1_bound}, if $r\lesssim  \log (p\vee n)/\log n$, we have that 
    \begin{align*}
        T_1^*
        &\le \log^{-1} n + \exp\del{C_1s(r\log n\vee\log p)  + Asr\log (p\vee n)-As(r+1)\log (p\vee n)}\\
        &\le \log^{-1} n + \exp\del{C_3s\log (p\vee n) - As\log(p\vee n)},
    \end{align*}
for some $C_3>0$. Thus if $A>C_3$, we obtain the desired result.
\end{proof}

\subsection{Proof of \cref{prop:nfac_lower_bound}}

\begin{proof}[Proof of \cref{prop:nfac_lower_bound}]
Let $\cW(p,s):=\cbr{\w\in\{0,1\}^{p}:\|\w\|_0=s, w_p=0}$. For a given $\w\in\cW(p,s)$, we define
    \begin{equation*}
        \bSigma(\w):=\frac{\lambda}{2}\I +\frac{\lambda}{2}\mathbf{e}_p\mathbf{e}_p^\top + \frac{\zeta}{s}\w\w^\top,
    \end{equation*}
where $\mathbf{e}_p$ is the vector of zeros except the  $p$-th element which is equal to 1 and $\zeta>0$ is a constant such that $ \zeta\le \lambda/2$. Then, $\bSigma(\w)\in C(p,2,s,\lambda, \zeta)$. Thus, letting $\bSigma_0:=\frac{\lambda}{2}(\I+\mathbf{e}_p\mathbf{e}_p^\top)$ and
    \begin{equation*}
        \cH(p, s, \lambda, \zeta) :=\cbr{\bSigma(\w): \w\in\cW(p,s)},
    \end{equation*}
we have
    \begin{align*}
         &\inf_{\hat{r}:\R^{p\times n}\mapsto \bN}\sup_{\bSigma\in\cC(p, r, s, \lambda, \zeta)}\P^{(n)}_{\bSigma}\del{\hat{r}(\Y_{1:n})\neq r}\\
         &\ge E_{p,s,\lambda,\zeta}:= \inf_{\phi:\R^{p\times n}\mapsto \{0,1\}}
        \sbr{\P^{(n)}_{\bSigma_0}\del{\phi(\Y_{1:n})=1}+ \sup_{\bSigma\in\cH(p, s, \lambda, \zeta)} \P^{(n)}_{\bSigma}\del{\phi(\Y_{1:n})=0}}.
    \end{align*}
To obtain a lower bound of $E_{p,s,\lambda,\zeta}$, we follow a standard argument of minimax theory. Let $\rho:=2\zeta/(s\lambda)$.
Since $|\bSigma(\w)|=\lambda\del{\frac{\lambda}{2}+\zeta}\del{\frac{\lambda}{2}}^{p-2}$ and $|\bSigma_0|=\lambda\del{\frac{\lambda}{2}}^{p-1},$ the Kullback-Leibler (KL) divergence from $\sp_{\bSigma_0}^{(n)}$ to $\sp_{\bSigma(\w)}^{(n)}$ is given by
    \begin{align*}
    \kl(\sp_{\bSigma(\w)}^{(n)}, \sp_{\bSigma_0}^{(n)})
        &=n\int \log (\frac{\sp_{\bSigma(\w)}(\y)}{\sp_{\bSigma_0}(\y)} )\sp_{\bSigma(\w)}(\y)\d\y\\
        &=\frac{n}{2}\tr\del{\bSigma(\w)\bSigma_0^{-1}-\I}-\frac{n}{2}\log \abs[1]{\bSigma(\w)\bSigma_0^{-1}}\\
        &=\frac{n}{2}\del{\tr\del{\rho\w\w^\top}-\log\frac{\lambda/2+\zeta}{\lambda/2}}\\
        &=\frac{n}{2}\del{\rho s-\log(1+\rho s)}\\
        &\le \alpha:=\frac{n}{4}\rho^2s^2=n\zeta^2/\lambda^2
    \end{align*}
for any $\w\in\cW(p,s)$. Moreover, we have
    \begin{equation*}
        M:=|\cH(p, s, \lambda, \zeta)|=\binom{p-1}{s-1}\ge \del{\frac{p-1}{s-1}}^{s-1}\ge\del{\frac{p}{s}}^{s-1}\ge\del{\frac{p}{s}}^{s/2}.
    \end{equation*}
Thus, by Proposition 2.3 of \cite{tsybakov2008introduction} with $\tau=1/M$,
    \begin{align*}
        E_{p,s,\lambda,\zeta}&\ge \frac{1}{2}\del{1-\frac{\alpha + \sqrt{\alpha/2}}{\log M}}\\
        &\ge \frac{1}{2}\del{1-(2+\sqrt{2})\frac{\alpha\vee1}{s\log (p/s)}}.
    \end{align*}
We take $\zeta:=a\lambda \sqrt{s\log (p/s)/n}$ with $a\le a_0:=(4+2\sqrt{2})^{-1/2}$ so that $\alpha\le a_0^2s\log(p/s)\le  s\log(p/s)/(4+2\sqrt{2}) $ for sufficiently large $p$. Also, $\zeta\le \lambda /2$ for all sufficiently large $n$ since $s\log p=o(n)$. Hence $E_{p,s,\lambda,\zeta}$ is further bounded below by $1/4$, which is the desired result.
\end{proof}

\section{Proofs of lemmas in \cref{sec:proof}}
\label{subsec:proof_lemmas}

\subsection{Proof of \cref{lem:d_n}}
\label{subsec:proof_d_n}

We first introduce two lemmas that will be used in the proofs.

\begin{lemma}
\label{lem:d_n0}
 Let $\Pi$ be a distribution on $\bS_{++}^p$ and $\bSigma^\star\in\bS_{++}^p$. For $n$ iid $p$-dimensional random variables $\Y_{1:n}:=(\Y_1,\dots, \Y_n)$,  let us denote $D_n:=\int\frac{\sp_{\bSigma}^{(n)}}{\sp_{\bSigma^\star}^{(n)}}(\Y_{1:n})\Pi(\d\bSigma)$. Then for any $\eta\in(0, \lambda_p(\bSigma^\star)\wedge 1)$, 
	\begin{align*}
	\P^{(n)}_{\bSigma^\star}\del{D_n\ge \exp\del{-C_1 \frac{n\eta^2\log \rho(\bSigma^\star)}{\lambda_{p}^2(\bSigma^\star)}\vee \frac{\eta\sqrt{n\log n}}{\lambda_{p}(\bSigma^\star)}}\Pi\del{\fnorm{\bSigma-\bSigma^\star} <\eta}}\ge 1-\frac{1}{\log n}
	\end{align*}
for some universal constant $C_1>0$, where $\rho(\bSigma^\star):=2\lambda_1(\bSigma^\star)/\lambda_p(\bSigma^\star)$
\end{lemma}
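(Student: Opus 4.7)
The plan is to apply the classical evidence lower bound via a Kullback--Leibler neighborhood of $\bSigma^\star$, in the spirit of Ghosal--van der Vaart. First I would restrict the integral defining $D_n$ to the Frobenius ball $U:=\{\bSigma : \fnorm{\bSigma-\bSigma^\star}<\eta\}$, factor out the prior mass $\Pi(U)$, and apply Jensen's inequality with respect to the renormalized prior $\Pi_U:=\Pi(\cdot\cap U)/\Pi(U)$, which gives
\begin{align*}
\log D_n \ge \log\Pi(U) + \int \log\frac{\sp_\bSigma^{(n)}}{\sp_{\bSigma^\star}^{(n)}}(\Y_{1:n})\,\Pi_U(\d\bSigma).
\end{align*}
Thus it suffices to lower-bound the random integrand on the right on an event of $\P_{\bSigma^\star}^{(n)}$-probability at least $1-1/\log n$.

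Next, writing $\ell_n(\bSigma):=\log(\sp_\bSigma^{(n)}/\sp_{\bSigma^\star}^{(n)})(\Y_{1:n})$, the Gaussian density yields
\begin{align*}
\ell_n(\bSigma) = -\tfrac{n}{2}\log\tfrac{|\bSigma|}{|\bSigma^\star|} - \tfrac{1}{2}\sum_{i=1}^n \Y_i^\top(\bSigma^{-1}-\bSigma^{\star-1})\Y_i,
\end{align*}
whose mean and variance under $\P_{\bSigma^\star}^{(n)}$ are $-n\,\kl(\sp_{\bSigma^\star},\sp_\bSigma)$ and $\tfrac{n}{2}\tr\bigl((\bSigma^{-1}\bSigma^\star-\I)^2\bigr)$ respectively, the latter by the standard identity $\var(\Y^\top A\Y)=2\tr((A\bSigma^\star)^2)$ for $\Y\sim\N(\zero,\bSigma^\star)$. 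For $\bSigma\in U$, I would diagonalize the whitened perturbation $\bSigma^{\star-1/2}(\bSigma-\bSigma^\star)\bSigma^{\star-1/2}$, whose eigenvalues $\{\nu_j\}$ satisfy $|\nu_j|\le \eta/\lambda_p(\bSigma^\star)<1$. Taylor-expanding $x\mapsto\log(1+x)-x/(1+x)$ near the origin then gives
\begin{align*}
\kl(\sp_{\bSigma^\star},\sp_\bSigma) \lesssim \frac{\eta^2\log\rho(\bSigma^\star)}{\lambda_p^2(\bSigma^\star)}, \qquad \tfrac{1}{2}\tr\bigl((\bSigma^{-1}\bSigma^\star-\I)^2\bigr) \lesssim \frac{\eta^2}{\lambda_p^2(\bSigma^\star)},
\end{align*}
where the $\log\rho(\bSigma^\star)$ factor accommodates the log-determinant contribution away from the purely local regime.

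To pass from these moment bounds to the desired high-probability statement, I would apply Chebyshev's inequality at level $\delta=1/\log n$ to the quantity $X:=\int \ell_n\,\d\Pi_U$. By Fubini its variance is dominated by $\int \var(\ell_n(\bSigma))\,\Pi_U(\d\bSigma)\lesssim n\eta^2/\lambda_p^2(\bSigma^\star)$, so with $\P_{\bSigma^\star}^{(n)}$-probability at least $1-1/\log n$,
\begin{align*}
X \ge -n\int\kl(\sp_{\bSigma^\star},\sp_\bSigma)\,\Pi_U(\d\bSigma) - \sqrt{\log n\cdot \int\var(\ell_n(\bSigma))\,\Pi_U(\d\bSigma)}.
\end{align*}
Substituting the bounds of the previous paragraph collapses the two error terms into a single expression bounded below by $-C_1\bigl(n\eta^2\log\rho(\bSigma^\star)/\lambda_p^2(\bSigma^\star) \,\vee\, \eta\sqrt{n\log n}/\lambda_p(\bSigma^\star)\bigr)$, which combined with the Jensen lower bound on $\log D_n$ gives precisely the claimed inequality.

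The main obstacle is the spectral step: producing matching upper bounds on both $\kl(\sp_{\bSigma^\star},\sp_\bSigma)$ and the per-observation variance in terms of $\fnorm{\bSigma-\bSigma^\star}^2/\lambda_p^2(\bSigma^\star)$, while correctly tracking how the condition number $\rho(\bSigma^\star)$ enters. The $\log\rho$ factor is the signature of the less-refined expansion of the log-determinant used in bounding the KL; pinning down that a single logarithmic power suffices is where the bookkeeping is most delicate. Once these spectral estimates are in hand, the Jensen-plus-Chebyshev skeleton is entirely routine.
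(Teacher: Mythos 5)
Your proposal follows essentially the same route as the paper: Jensen applied to the prior restricted and renormalized to the Frobenius ball, the log-likelihood ratio decomposed into a deterministic KL term plus a centered quadratic-form remainder, the standard variance identity for Gaussian quadratic forms, and a Chebyshev bound at level $1/\log n$. The only deviations are cosmetic — the paper invokes Supplement Lemma 1.3 of Pati et al.\ (2014) for the spectral bound $\log\abs{\bSigma^\star\bSigma^{-1}}-\tr(\bSigma^\star\bSigma^{-1}-\I)\ge -C\eta^2\log\rho(\bSigma^\star)/\lambda_p^2(\bSigma^\star)$ where you propose to re-derive it by whitening and expanding $\log(1+x)-x/(1+x)$ (which, as you correctly flag, is the only step needing real care), and your Fubini argument for $\var\bigl(\int\ell_n\,\d\Pi_U\bigr)$ is actually slightly more explicit than the paper's ``for all $\bSigma\in\cE$'' phrasing.
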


\begin{proof}
Fix $\eta\in(0, \lambda_p(\bSigma^\star)\wedge 1)$ and let $\cE:=\cbr{\fnorm{\bSigma-\bSigma^\star} <\eta}$. Let $\Pi(\cdot|\cE):=\Pi(\cdot\cap\cE)/\Pi(\cE)$, which is the re-normalized restriction of $\Pi$ on $\cE.$ 
By Jensen's inequality,
    \begin{equation*}
        \log \del{\frac{D_n}{\Pi(\cE)}}
        \ge \int_{\cE} \cbr{\frac{n}{2}\log |\bSigma^\star\bSigma^{-1}|-\frac{1}{2}\sum_{i=1}^n\Y_i^\top(\bSigma^{-1}-(\bSigma^\star)^{-1})\Y_i } \Pi(\d\bSigma|\cE).
    \end{equation*}
Let $Q_i:=\Y_i^\top(\bSigma^{-1}-(\bSigma^\star)^{-1})\Y_i$, $W_i:=Q_i-\P_{\bSigma^\star}Q_i$ and $\bar{W}:=n^{-1} \sum_{i=1}^nW_i$. Note that $\P_{\bSigma^\star}Q_i=\tr(\bSigma^\star\bSigma^{-1}-\I)$ and that, by Supplement Lemma 1.3 of \cite{pati2014posterior}, for any $\eta\in(0,1)$ such that $\fnorm{\bSigma-\bSigma^\star}\le \eta$ and $\eta<2\lambda_p(\bSigma^\star)$, we have
    \begin{equation*}
        \log(|\bSigma^\star\bSigma^{-1}|)-\tr(\bSigma^\star\bSigma^{-1}-\I)\ge -C_2\frac{\eta^2\log \rho(\bSigma^\star)}{\lambda_p^2(\bSigma^\star)}
    \end{equation*}
for some constant $C_2>0$. Therefore, 
    \begin{align*}
        \log \del{\frac{D_n}{\Pi(\cE)}}
        &\ge \int_{\cE} \cbr{\frac{n}{2}\log |\bSigma^\star\bSigma^{-1}|-\tr(\bSigma^\star\bSigma^{-1}-\I) -\bar{W} } \Pi(\d\bSigma|\cE)\\
        &\ge -C_2\frac{n\eta^2\log \rho(\bSigma^\star)}{2\lambda_p^2(\bSigma^\star)}
        - \frac{n}{2}\int \bar{W} \Pi(\d\bSigma|\cE).
    \end{align*}
It remains to derive the high-probability upper bound of $|\bar{W}|$. By the variance formula for quadratic forms of multivariate normal random vectors, we have
    \begin{align*}
        \P_{\bSigma^\star}(W_i^2)
        =2\tr\del{(\bSigma^{-1}\bSigma^\star-\I)^2}
        =2\norm[0]{\bSigma^{-1}\bSigma^\star-\I}_{\textup{F}}^2.
    \end{align*}
Since $\fnorm{\bSigma-\bSigma^\star}^2=\fnorm{\bSigma(\bSigma^{-1}\bSigma^\star-\I)}^2\ge\lambda_p^2(\bSigma^\star)\fnorm{\bSigma^{-1}\bSigma^\star-\I}^2$, by Markov's inequality, we have  
    \begin{equation*}
        \P^{(n)}_{\bSigma^\star}\del{|\bar{W}|>\frac{\eta \sqrt{\log n}}{\sqrt{n}\lambda_p(\bSigma^\star)}} \le \frac{1}{\log n}
    \end{equation*}
for all $\bSigma\in \cE,$
which completes the proof.
\end{proof}

\begin{lemma}
\label{lem:lap_concent}
Assume that $\bbeta:=(\beta_1, \dots, \beta_s)^\top$ is distributed as $\beta_j\iidsim\Lap(1)$ for $j\in[s]$. Then for any $\bbeta_0\in \R^s$ and any $\epsilon>0$, 
	\begin{eqnarray*}
	\P_{\Lap(1)}^{(s)}(\norm{\bbeta-\bbeta_{0}}_2\le \epsilon) 
	\ge \e^{-\norm{\bbeta_0}_1-\epsilon-s\log(s/\epsilon)}.
	\end{eqnarray*}
\end{lemma}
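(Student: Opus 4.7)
The plan is to reduce the $\ell_2$-small-ball probability to a product of one-dimensional interval probabilities via a box inclusion, and then apply a crude pointwise lower bound on the Laplace density.

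First I would use the norm inequality $\|\x\|_2\le \sqrt{s}\|\x\|_\infty$ on $\R^s$: if $\|\bbeta-\bbeta_0\|_\infty\le \epsilon/s$, then $\|\bbeta-\bbeta_0\|_2\le \sqrt{s}\cdot\epsilon/s=\epsilon/\sqrt{s}\le\epsilon$. Hence the axis-aligned box of half-width $\epsilon/s$ around $\bbeta_0$ is contained in the $\ell_2$-ball of radius $\epsilon$, and by independence of the components under $\Lap(1)^{\otimes s}$,
\begin{equation*}
\P_{\Lap(1)}^{(s)}(\|\bbeta-\bbeta_0\|_2\le \epsilon)\ge \prod_{j=1}^s \P_{\Lap(1)}\!\del{|\beta_j-\beta_{0j}|\le \epsilon/s}.
\end{equation*}

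For each one-dimensional factor, I would exploit that the Laplace log-density is $1$-Lipschitz: for any $\beta$ with $|\beta-\beta_{0j}|\le \epsilon/s$, the triangle inequality gives $|\beta|\le |\beta_{0j}|+\epsilon/s$, so $\sp_{\Lap(1)}(\beta)=\tfrac{1}{2}\e^{-|\beta|}\ge \tfrac{1}{2}\e^{-|\beta_{0j}|-\epsilon/s}$. Integrating this pointwise bound over an interval of length $2\epsilon/s$ yields $\P_{\Lap(1)}(|\beta_j-\beta_{0j}|\le \epsilon/s)\ge (\epsilon/s)\e^{-|\beta_{0j}|-\epsilon/s}$. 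Multiplying across $j\in[s]$ then gives $(\epsilon/s)^s\e^{-\|\bbeta_0\|_1-\epsilon}=\e^{-\|\bbeta_0\|_1-\epsilon-s\log(s/\epsilon)}$, which is the claimed bound.

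There is no substantive obstacle here; the only choice to be made is the radius $\delta$ of the $\ell_\infty$-box. Taking $\delta=\epsilon/s$ rather than the larger but still admissible $\delta=\epsilon/\sqrt{s}$ is what produces a linear $\epsilon$ term in the exponent (instead of $\sqrt{s}\epsilon$) at the price of the $s\log(s/\epsilon)$ volume term, and this is precisely the tradeoff encoded in the target inequality.
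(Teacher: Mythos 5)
Your proof is correct, and it takes a genuinely different route from the paper's. You reduce the $\ell_2$ small ball to a product of one-dimensional interval probabilities by inscribing an $\ell_\infty$-box of half-width $\epsilon/s$, then lower-bound each factor pointwise using the $1$-Lipschitz log-density of $\Lap(1)$ together with the triangle inequality. The paper instead inscribes the $\ell_1$-ball $\{\|\bbeta-\bbeta_0\|_1\le\epsilon\}$ inside the $\ell_2$-ball, shifts variables and pulls out $\e^{-\|\bbeta_0\|_1}$ via the subadditivity of $\|\cdot\|_1$, and then observes that the residual integral equals $\P(\sum_{i=1}^s E_i\le\epsilon)$ for i.i.d.\ $E_i\sim\EXP(1)$, i.e.\ a $\GAMMA(s,1)$ lower tail, which it bounds by factoring $\e^{-x}\ge\e^{-\epsilon}$ out of $\frac{1}{(s-1)!}\int_0^\epsilon x^{s-1}\e^{-x}\d x$ and using $s!\le s^s$. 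Both routes are elementary and land on exactly the same constant. The paper's has the small aesthetic advantage of performing the shift step once at the level of the joint density (so the $\|\bbeta_0\|_1$ term pops out cleanly before any integration) and then invoking a recognizable distributional fact; your approach is more bare-hands but arguably more transparent about where the $s\log(s/\epsilon)$ volume term comes from and why $\delta=\epsilon/s$ rather than $\epsilon/\sqrt{s}$ is the right box radius to match the claimed form. Either is a perfectly good proof of the lemma.
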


\begin{proof}
By a change of variables $\bbeta-\bbeta_0\to \bbeta^*$, we have
	\begin{align*}
	\P_{\Lap(1)}^{(s)}\del{\norm{\bbeta-\bbeta_0}_1\le\epsilon} 
	&=\int_{\norm{\bbeta-\bbeta_0}_1\le\epsilon} \frac{1}{2}\e^{-\|\bbeta\|_1} \d\bbeta\\
	&\ge \e^{-\norm{\bbeta_0}_1}\int_{\norm{\bbeta^*}_1\le\epsilon} \frac{1}{2}\e^{-\|\bbeta^*\|_1} \d\bbeta^* \\
	&=\e^{-\norm{\bbeta_0}_1}\P_{\EXP(1)}^{(s)}\del{\sum_{i=1}^s E_i\le\epsilon},
	\end{align*}
where $E_1,\dots, E_s$ are iid random variables following exponential distribution with scale $1$. Since the random variable $\sum_{i=1}^s E_i$ follows the gamma distribution with shape $n$ and scale $1$, we have that
	\begin{align*}
	\P_{\EXP(1)}^{(s)}\del{\sum_{i=1}^s E_i\le\epsilon}
	&=\frac{1}{\Gamma(s)}\int_{0}^{\epsilon}x^{s-1}\e^{-x}\d x \\
	&\ge \frac{1}{(s-1)!} \e^{-\epsilon} \int_{0}^{\epsilon}x^{s-1}\d x
	= \frac{\epsilon^s}{s!} \e^{-\epsilon}.
	\end{align*}
The inequality $s!\le s^s$ completes the proof.
\end{proof}

Now we are ready to prove \cref{lem:d_n}. 

\begin{proof}[Proof of \cref{lem:d_n}]
Since $\lambda(\bSigma^\star)\ge\psi_0$ and $\rho(\bSigma^\star)\lesssim \lambda\lesssim n,$  \cref{lem:d_n0} with $\eta:=\sqrt{sr/n}$ implies that
    \begin{equation*}
        D_n\ge \exp\del{-C_1sr\log n}\Pi\del{\fnorm{\bSigma-\bSigma^\star}\le \eta}
    \end{equation*}
for some $C_1>0$ with $\P_{\bSigma^\star}^{(n)}$-probability at least $1-\log^{-1} n$. Since $\B$ and $\psi$ are independent under the prior distribution, 
	\begin{align*}
	&\Pi\del{\fnorm{\bSigma-\bSigma^\star}\le \eta} \ge
	\Pi\left(\fnorm{\B\B^\top-\B^\star(\B^\star)^\top}\le\frac{\eta}{2}\right)
	\Pi\left(\sqrt{p}|\psi-\psi^\star| \le\frac{\eta}{2}\right). 
	\end{align*}
Since $\lambda\ge\psi^\star\ge \psi_0>0$, it follows that
	\begin{align*}
	\Pi\del{|\psi-\psi^\star|\le\frac{\eta}{2\sqrt{p}}}
	&\ge\frac{\eta}{2\sqrt{p}}
	\min_{\psi\in[\psi^\star/2,3\psi^\star/2]}\frac{a_2^{a_1}}{\Gamma(a_1)}\psi^{-a_1-1}\e^{-a_2/\psi} \\
	&\gtrsim  (np)^{-1/2} \lambda^{-a_1-1}\e^{-2a_2/\psi_0} \\
	&\gtrsim \e^{-C_2\log (p\vee n)}
	\end{align*}
for some $C_2>0$. 
Let $\cS^\star:=\supp(\B^\star)$ and let $\cK(\v):=\cbr{k\in[q]:v_k=1}$ for $\v\in\Delta_{q}$.
If $\supp(\B)=\cS^\star$ and $\|\v\|_0=r$, we have
	\begin{align*}
	\fnorm{\B\B^\top-\B^\star(\B^\star)^\top} 
	&=	\fnorm{\B_{[:,\cK(\v)}\B_{[:,\cK(\v)]}^\top-\B^\star(\B^\star)^\top} \\
	&=\fnorm{(\B_{[:,\cK(\v)]}-\B^\star)(\B_{[:,\cK(\v)]}-\B^\star)^\top+2\B^\star(\B_{[:,\cK(\v)]}-\B^\star)^\top}\\
	&\le \fnorm{\B_{[:,\cK(\v)]}-\B^\star}^2 + 2\fnorm{\B^\star(\B_{[:,\cK(\v)]}-\B^\star)^\top}\\
	&\le \fnorm{\B_{[\cS^\star,\cK(\v)]}-\B^\star_{[\cS^\star,:]}}^2 + 2\norm{\B^\star}\fnorm{\B_{[\cS^\star,\cK(\v)]}-\B^\star_{[\cS^\star,:]}}.
	\end{align*}
Therefore, since $\norm[0]{\B^\star}=\lambda_1^{1/2}\del[1]{\B^\star(\B^\star)^\top}\le \lambda^{1/2}\lesssim \sqrt{s}$ by assumption, we have
    \begin{align*}
     \Pi&\del{\fnorm{\B\B^\top-\B^\star(\B^\star)^\top} \ge \eta/2}\\
     &\ge \sum_{\v:\|\v\|_0=r}\Pi(\u^\star, \v)
        \Pi\del{\fnorm{\B_{[\cS^\star,\cK(\v)]}-\B^\star_{[\cS^\star,:]}}^2\ge C_3\sqrt{\frac{r}{n}} \big|\u=\u^\star,\v}
    \end{align*}
for some constant $C_3>0$, where $\u^\star:=(u_j^\star)_{j\in[p]}$ is the binary vector such that $u_j^\star=1$ if $j\in\cS^\star$ and $u_j^\star=0$ otherwise. By \cref{lem:lap_concent}, for $\|\v\|_0=r$, we have
    \begin{align*}
       \Pi&\del{\fnorm{\B_{[\cS^\star,\cK(\v)]}-\B^\star_{[\cS^\star,:]}}^2\ge C_3\sqrt{\frac{r}{n}} \big|\u=\u^\star,\v}\\
        &\ge \exp\del{-C_4sr\log n-\norm[1]{\B^\star_{[\cS^\star,:]}}_1}
        \ge\e^{-C_5sr\log n}.
    \end{align*}
for some constants $C_4>0$ and $C_5>0$, where the second inequality follows from that
        \begin{align*}
        \norm[1]{\B^\star_{[\cS^\star, :]}}_1= &\norm[1]{\B^\star}_1
        \le\sqrt{sr}\norm[1]{\B^\star}_{\textup{F}}\\
        &\le \sqrt{s}r\norm[1]{\B^\star}
        =\sqrt{s}r\lambda_1^{1/2}\del{\B^\star(\B^\star)^\top}
        \le \sqrt{s\lambda}r\lesssim sr.
    \end{align*}
Lastly, we note that
    \begin{align*}
         \sum_{\v:\|\v\|_0=r}\Pi(\u^\star, \v)
        \gtrsim \e^{-Asr\log (p\vee n)} \frac{1}{\binom{p}{s}}
        \ge \e^{-Asr\log  (p\vee n) -s\log p},
    \end{align*}
which completes the proof.
\end{proof}

\subsection{Proof of \cref{lem:test}}
\label{subsec:proof_test}

\begin{proof}[Proof of \cref{lem:test}]
We decompose $H_1$ as
    \begin{equation*}
        H_1\subset \bigcup_{\cS:|\cS|\le  t}\cbr{\bSigma\in\bar\cC(p,q):\norm{\bSigma-\bSigma^\star}\ge M\epsilon, \supp(\B)=\cS}.
    \end{equation*}
For a given loading matrix $\B$ with the support $\cS,$ let $\bar{\cS}:=\cS\cup \cS^\star$, where $\cS^\star:=\supp(\B^\star)$.
Define 
    \begin{align*}
        \bSigma_{\bar{\cS}}&:=\B_{[\bar{\cS},:]}\B_{[\bar{\cS},:]}^\top +\psi\I,\\
        \bSigma^\star_{\bar{\cS}}&:=\B^\star_{[\bar{\cS},:]}\del{\B^\star_{[\bar{\cS},:]}}^\top +\psi^\star\I.
    \end{align*}
Note that $\norm[0]{\bSigma_{\bar{\cS}}-\bSigma^\star_{\bar{\cS}}}=\norm{\bSigma-\bSigma^\star}$ and so  $H_1\subset\cup_{\cS:|\cS|\le  t}H_{1, \cS},$ where we define
    \begin{equation*}
        H_{1, \cS}:=\cbr{\bSigma_{\bar{\cS}}\in\bS_+^{|\bar\cS|}:\norm[0]{\bSigma_{\bar{\cS}}-\bSigma^\star_{\bar{\cS}}}\ge M\tepsilon}.
    \end{equation*}
We  construct a test for $H_0:=\cbr[1]{\bSigma_{\bar{\cS}}^\star}$ versus $H_{1,\cS}$. To do this, we use the following lemma from \cite{gao2015rate}.

\begin{lemma}[Lemma 5.7 of \cite{gao2015rate}]
\label{lem:test_general}
Let $\bSigma_0\in\bS_+^d$. Then for any $M>0$ and $\tepsilon>0$, there is a test function $\tilde{\phi}:\R^{d\times n}\mapsto[0,1]$ such that
    \begin{align*}
    \P_{\bSigma_0}\tilde{\phi}(\Y_{1:n})
    \le \exp\del{C_1d-\frac{C_1M^2}{4\norm{\bSigma_0}^2}n\tepsilon^2}
    + 2\exp(C_1d-C_1M^{1/2}n\tepsilon^2)
    \end{align*}
and
    \begin{align*}
    \sup_{\bSigma_1:\norm{\bSigma_1-\bSigma_0}>M\tepsilon}
    \P_{\bSigma_1}^{(n)}(1-\tilde{\phi}(\Y_{1:n}))
    \le \exp\del{C_1d-\frac{C_1M}{4}\del{1\vee \frac{M}{(M^{1/2}+2)^2\norm{\bSigma_0}^2}}n\tepsilon^2}
    \end{align*}
for some universal constant $C_1>0$. 
\end{lemma}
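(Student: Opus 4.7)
The plan is to build a test based on the sample covariance $\hat{\bSigma}_n := n^{-1}\sum_{i=1}^n \Y_i \Y_i^\top$, reducing the spectral-norm event to a supremum of linear-and-quadratic forms over a discretization of the unit sphere. Explicitly, I would take
\begin{equation*}
\tilde{\phi}(\Y_{1:n}) := \ind\Big(\sup_{v\in\cN}\big|v^\top \hat{\bSigma}_n v - v^\top \bSigma_0 v\big| > \tfrac{M\tepsilon}{4}\Big),
\end{equation*}
where $\cN\subset S^{d-1}$ is a $1/4$-net of the unit sphere. Standard covering bounds give $|\cN|\le 9^d$, and for any symmetric $\A\in\R^{d\times d}$ the identity $\norm{\A}\le 2\sup_{v\in\cN}|v^\top \A v|$ holds, so the test compares with $\norm{\hat\bSigma_n-\bSigma_0}$ up to a factor of $2$.

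Next I would handle the Type I error. For a fixed $v\in\cN$, under $\P_{\bSigma_0}$ the random variable $v^\top\hat\bSigma_n v$ is a rescaled sum of $n$ iid squared Gaussians with mean $v^\top\bSigma_0 v$ and sub-exponential tails of Orlicz norm proportional to $v^\top\bSigma_0 v\le \norm{\bSigma_0}$. Bernstein's inequality (equivalently, Hanson--Wright for rank-one $vv^\top$) yields
\begin{equation*}
\P_{\bSigma_0}\Big(\big|v^\top(\hat\bSigma_n-\bSigma_0)v\big|>t\Big)\le 2\exp\Big(-c\min\Big(\tfrac{nt^2}{\norm{\bSigma_0}^2},\tfrac{nt}{\norm{\bSigma_0}}\Big)\Big).
\end{equation*}
A union bound over $\cN$ together with the choice $t=M\tepsilon/4$ splits into the two regimes appearing in the statement: the Gaussian tail contributes the first term $\exp(C_1 d-C_1 M^2 n\tepsilon^2/(4\norm{\bSigma_0}^2))$, while the exponential tail contributes the second term. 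I expect the unusual $M^{1/2}$ in the second term to come from calibrating the threshold slightly differently (e.g.\ replacing $M\tepsilon/4$ with $M^{1/2}\tepsilon$ once $M$ is large), so that the subexponential part of the deviation is active; this is the routine but fiddly part.

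For the Type II error, suppose $\norm{\bSigma_1-\bSigma_0}>M\tepsilon$. There exists $v^\ast\in S^{d-1}$ with $|v^{\ast\top}(\bSigma_1-\bSigma_0)v^\ast|>M\tepsilon$, and choosing $v\in\cN$ within $1/4$ of $v^\ast$ gives $|v^\top(\bSigma_1-\bSigma_0)v|\ge M\tepsilon/2$ (using $\norm{\bSigma_1-\bSigma_0}$ controls the bilinear perturbation). Under $\P_{\bSigma_1}^{(n)}$ the same Bernstein bound with $\norm{\bSigma_0}$ replaced by $\norm{\bSigma_1}\le \norm{\bSigma_0}+M\tepsilon$ shows $|v^\top(\hat\bSigma_n-\bSigma_1)v|\le M\tepsilon/4$ with high probability, so on this event the test rejects and the Type II bound follows after taking the supremum over $\bSigma_1$. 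The factor $M/(M^{1/2}+2)^2\norm{\bSigma_0}^2$ in the statement emerges from balancing the sub-Gaussian and sub-exponential regimes of this second Bernstein bound when $\norm{\bSigma_1}$ is itself growing like $M^{1/2}\tepsilon$.

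The main obstacle is bookkeeping: the two-regime structure of Bernstein's inequality must be tracked through both error analyses, and the specific constants ($M^{1/2}$ in Type I, the $(M^{1/2}+2)^2$ denominator in Type II) demand a careful choice of threshold that is not simply $\tepsilon$ times a constant multiple of $M$. Everything else (net construction, symmetric-matrix-via-net inequality, Bernstein's inequality for Gaussian quadratic forms) is textbook.
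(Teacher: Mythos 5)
The paper does not prove this lemma; it is quoted verbatim from \citet{gao2015rate} (their Lemma~5.7), so there is no in-paper proof to compare against. I therefore assess your sketch on its own terms, and there are two substantive problems.

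First, the Type~I bound you derive has the wrong shape. A Bernstein/Hanson--Wright bound for $v^\top(\hat\bSigma_n-\bSigma_0)v$ at threshold $cM\tepsilon$, together with a union over a $9^d$-sized net, yields
$\exp\bigl(C d - c\,n \min\bigl(M^2\tepsilon^2/\norm{\bSigma_0}^2,\ M\tepsilon/\norm{\bSigma_0}\bigr)\bigr)$,
so the sub-exponential branch is $\exp(Cd - c\,nM\tepsilon/\norm{\bSigma_0})$. The lemma's second term is $\exp(C_1 d - C_1 M^{1/2} n\tepsilon^2)$: the exponent is quadratic in $\tepsilon$ (not linear), carries $M^{1/2}$ (not $M$), and has no $\norm{\bSigma_0}$. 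These are different orders in both $\tepsilon$ and $M$, so this is not a matter of ``calibrating the threshold slightly differently''; the test in \citet{gao2015rate} is simply not a spectral-norm threshold on $\hat\bSigma_n-\bSigma_0$, and you cannot recover their bound by tweaking the cutoff in yours.

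Second, the Type~II argument contains a genuine error: you bound $\norm{\bSigma_1}\le\norm{\bSigma_0}+M\tepsilon$, but the alternative set is $\{\bSigma_1 : \norm{\bSigma_1-\bSigma_0} > M\tepsilon\}$, which is unbounded, so $\norm{\bSigma_1}$ can be arbitrarily large. You would need to argue that for large $\norm{\bSigma_1}$ the separation $\norm{\bSigma_1-\bSigma_0}$ grows too and the test still rejects --- that is true in spirit, but it requires a uniform-in-$\bSigma_1$ bound that your sketch does not supply, and it does not produce the stated denominator $(M^{1/2}+2)^2\norm{\bSigma_0}^2$, which notably contains no $\tepsilon$ at all (your ``balancing'' gives $(\norm{\bSigma_0}+M\tepsilon)^2$, a different parametric form). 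The clean, $\tepsilon$-free form $(M^{1/2}+2)^2\norm{\bSigma_0}^2$ is a strong hint that the actual test in \citet{gao2015rate} is constructed on a different scale (roughly, a $\bSigma_0^{-1/2}$-whitened or likelihood-ratio-type statistic) that handles the unbounded alternative automatically. The net construction, the covering bound $|\cN|\le 9^d$, and the inequality $\norm{\A}\le 2\sup_{v\in\cN}|v^\top\A v|$ in your sketch are all fine; the gap is in the choice of test statistic and the resulting two error analyses.
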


By the above lemma together with the fact that $\norm[0]{\bSigma^\star_{\bar{\cS}}}=\norm{\bSigma^\star}\le \lambda$, there exists a test function $\phi^{\cS}$ which is a function of $\Y_{1[\bar{\cS}]}, \dots, \Y_{n[\bar{\cS}]}$, such that for any $M>4^{2/3}$
    \begin{align*}
    \P^{(n)}_{\bSigma^\star_{\bar{\cS}}}\phi^{\cS}
    &\le \exp\del{C_1|\bar{S}|-\frac{C_1M^2}{4\lambda^2}n\tepsilon^2}
    + 2\exp(C_1|\bar{\cS}|-C_1M^{1/2}n)\\
    &\le 3\exp\del{C_1|\bar{\cS}|-\frac{C_1M^{1/2}}{\lambda^2\vee1}n\epsilon^2}
    \end{align*}
and
    \begin{align*}
    \sup_{\bSigma_{\bar{\cS}}:\bSigma_{\bar{\cS}}\in H_{1, \cS}}
    \P^{(n)}_{\bSigma^{\bar{\cS}}}(1-\phi^{\cS}) &\le \exp\del{C_1|\bar{\cS}|-\frac{C_1M}{4}n\tepsilon^2}.
    \end{align*}
We combine the tests by $\phi:=\max_{\cS:|\cS|\le  t}\phi^\cS$. Since the test function $\phi^\cS$ depends on the data $\Y_1, \dots, \Y_n$ only through ($j: j\in \bar\cS$)-th coordinates, we have $\P^{(n)}_{\bSigma^\star}\phi^{\cS}=\P^{(n)}_{\bSigma^\star_{\bar{\cS}}}\phi^{\cS}$. Therefore, since $|\bar{\cS}|\le  t+s$ for any $\cS$ with $|\cS|\le  t$, and $\lambda^2\vee1\le (1+1/\psi_0^2)\lambda^2,$ we have
    \begin{align*}
    \P^{(n)}_{\bSigma^\star}\phi&\le \sum_{t=1}^{\floor{ t}}\binom{p}{t} 3\exp\del{C_1(  t+s)-\frac{C_1M^{1/2}}{\lambda^2\vee1}n\tepsilon^2}\\
    &\le 3(  t+1)\e^{(  t+1)\log p} \exp\del{C_1( t+s)-\frac{C_1M^{1/2}}{ (1+\psi_0^{-2})\lambda^2}n\tepsilon^2} \\
      &\le  3\exp\del{2  t\log p+(C_1+1)( t+s)-\frac{C_1M^{1/2}}{(1+\psi_0^{-2})\lambda^2}n\tepsilon^2} 
    \end{align*}
and
    \begin{align*}
    \sup_{\bSigma:\bSigma\in H_1}\P^{(n)}_{\bSigma}(1-\phi) 
    &\le \sup_{\cS:|\cS|\le  t}\sup_{\bSigma_{\bar{\cS}}:\bSigma_{\bar{\cS}}\in H_{1, \cS}}
    \P^{(n)}_{\bSigma_{\bar{\cS}}}(1-\phi^{\cS})\\
    &\le\exp\del{C_1( t+s)-\frac{C_1M}{4}n\tepsilon^2},
    \end{align*}
which completes the proof.
\end{proof}

\section{Additional numerical results}

\subsection{Convergence diagnostics}
\label{sec:conv}

We assess the convergence of our MCMC samples via trace, autocorrelation and partial autocorrelation plots for some randomly selected nonzero loadings. \cref{fig:mcmc_conv1} presents such plots from the Bayesian factor model under the AdaSS prior for the AGEMAP data considered in \cref{subsec:real_data}.   The results show that the MCMC chain converges well. 

\begin{figure}
    \centering
    \makebox[\textwidth][c]{
        \includegraphics[scale=0.24]{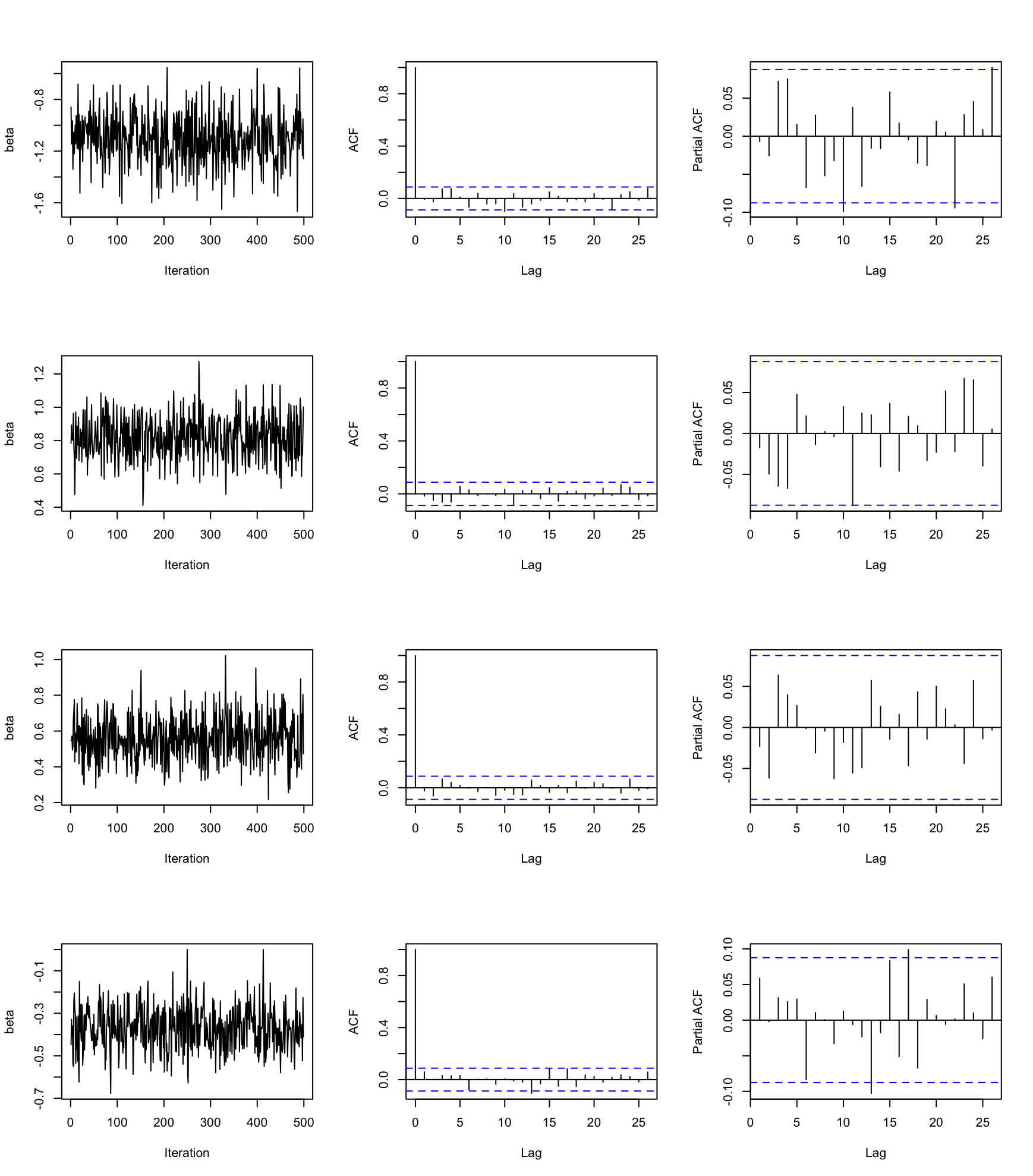}
    }
    \caption{The trace, autocorrelation and partial autocorrelation plots of the posterior samples of some randomly selected nonzero loading for the AGEMAP data.}
    \label{fig:mcmc_conv1}
\end{figure}

\subsection{Application of the post-processing algorithm of \citet{papastamoulis2022identifiability}}
\label{sec:pp}

In this section, we consider the estimation of the loading matrix from the MCMC samples of the proposed Bayesian model using the post-processing scheme of  \citet{papastamoulis2022identifiability}. To illustrate this approach, we consider the following toy example, where we generate a dataset of size $n=100$ from the factor model with the sparse loading matrix $\B^\star=(\beta_{jk}^\star)_{j\in[50],k\in[5]}\in\R^{50\times 5}$ having a ``diagonal pattern'' for non-sparse rows such that $\beta_{jk}^\star=1$ if $5(k-1)< j\le 5k$ for $k\in[5]$ and   $\beta_{jk}^\star=0$ otherwise (see \cref{fig:loading_true}). \cref{fig:loading_raw,fig:loading_pp} present the posterior means of the loading matrix computed before and after post-processing the MCMC samples, respectively. We can see that, without post-processing, the true pattern of the loading matrix is poorly recovered. For instance, the first and 6th columns are overlapped. But after post-processing, we are able to capture the true pattern up to a permutation of columns and signs of loadings.

\begin{figure}[t]
    \centering   
    \begin{subfigure}[c]{0.22\textwidth}
        \includegraphics[width=\textwidth,height=2.9\textwidth]{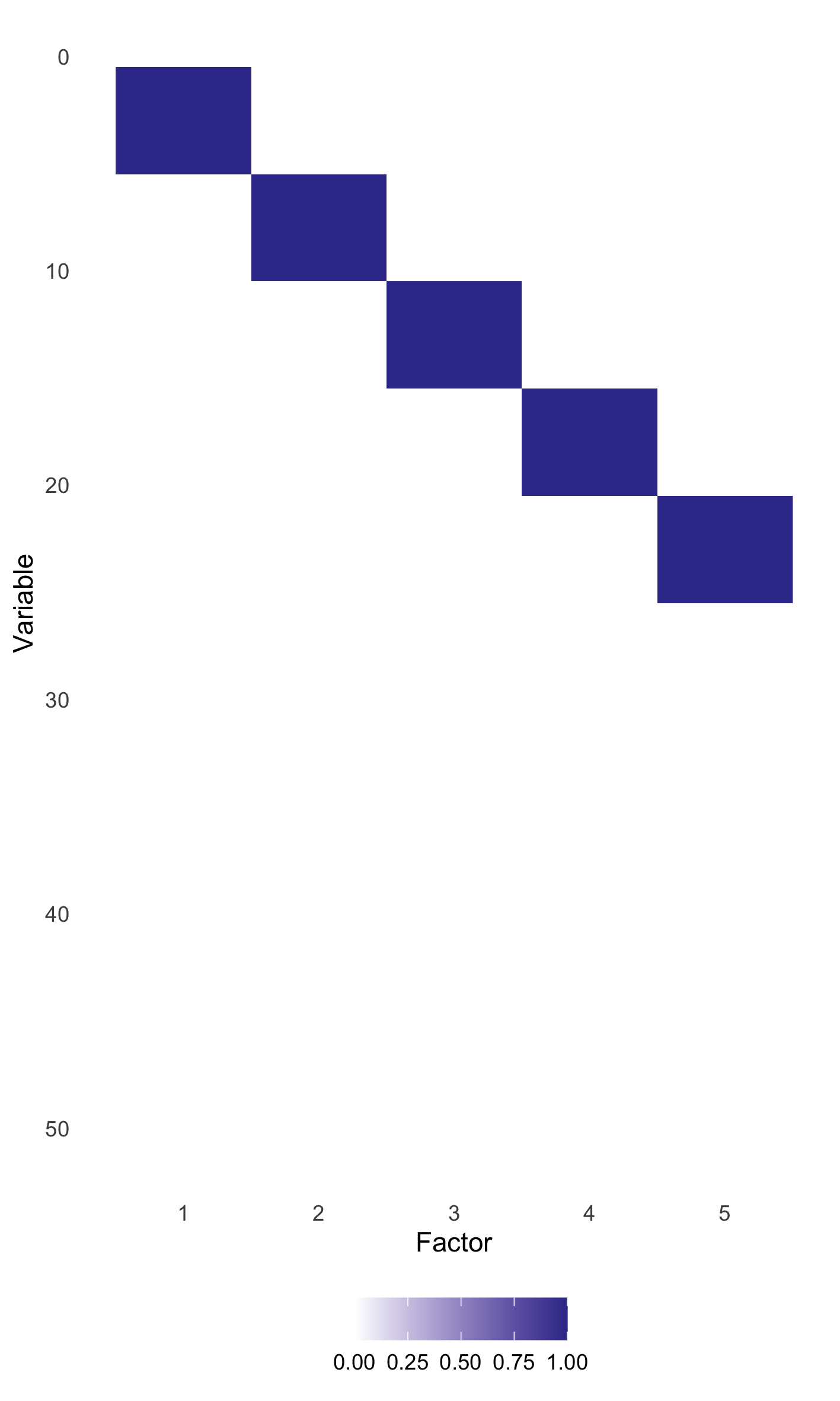}
        \subcaption{True}
        \label{fig:loading_true}
    \end{subfigure}
    \begin{subfigure}[c]{0.38\textwidth}
        \includegraphics[width=\textwidth]{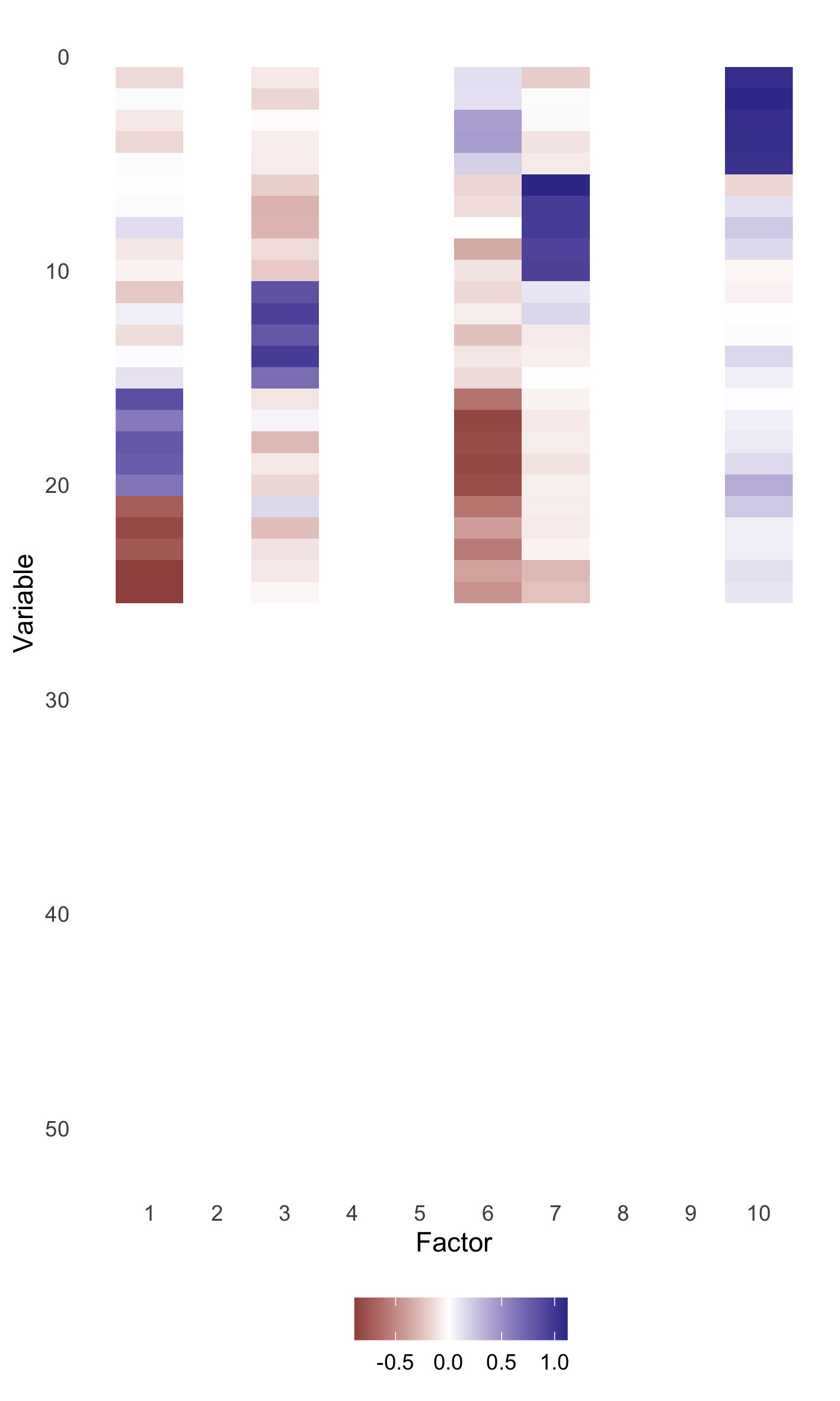}
        \subcaption{Raw}
         \label{fig:loading_raw}
    \end{subfigure}   
    \begin{subfigure}[c]{0.38\textwidth}
        \includegraphics[width=\textwidth]{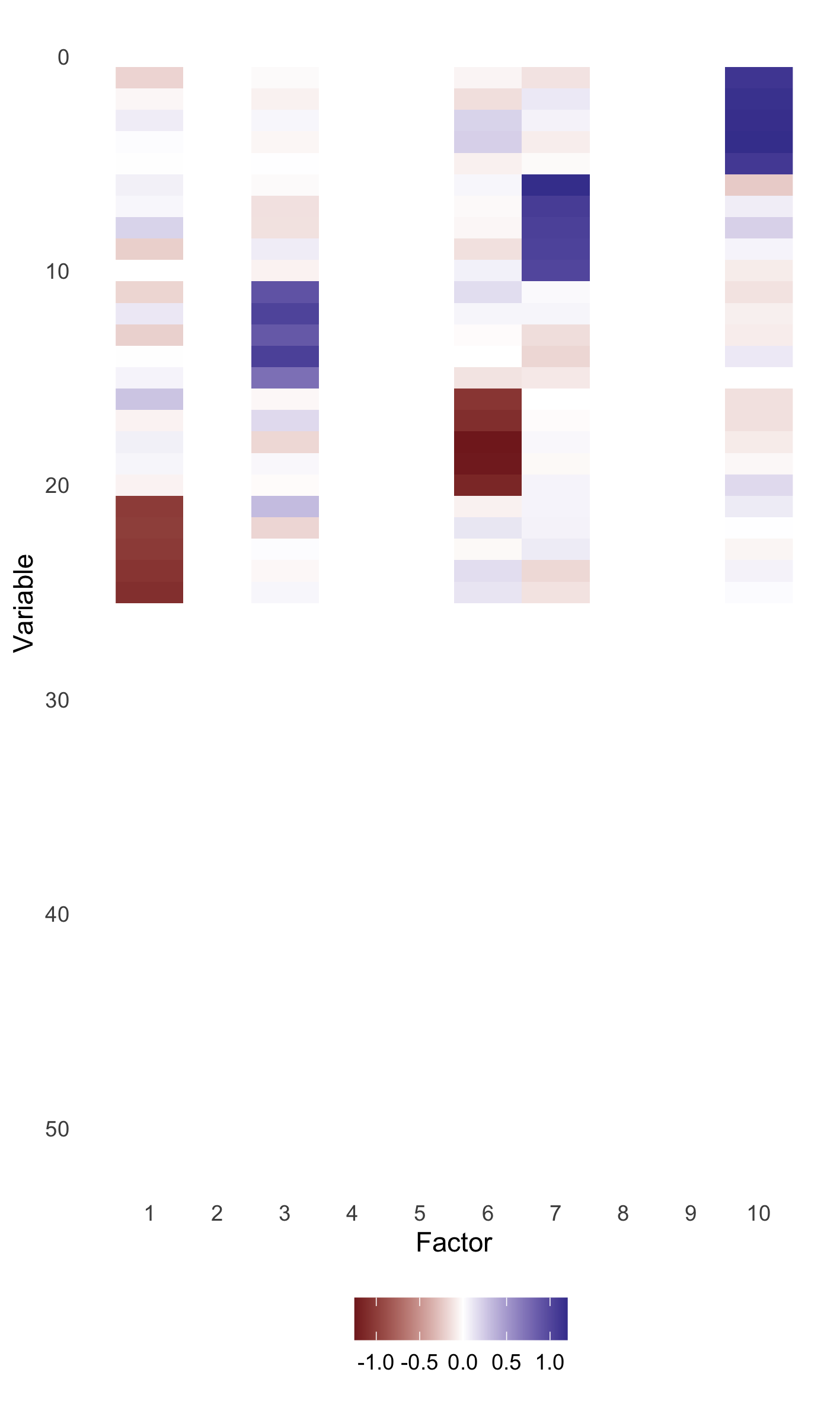}
        \subcaption{Post-processed}
        \label{fig:loading_pp}
    \end{subfigure}
    \caption{The true pattern of nonzero values in the loading matrix (a), the posterior means of the loading matrix without  post-processing (b) and with post-processing (c).}
    \label{fig:loading}
\end{figure}

\subsection{Sensitivity analysis of the choice of $q$ }
\label{sec:sens}

In this section, we conduct a sensitivity analysis regarding to the choice $q$. The setup for the synthetic dataset is the same as that in the \cref{subsec:simulation}. We only consider the case where the sparsity $s=30$ for simplicity. We compare the posterior distributions of the factor dimensionality $\xi$ for the three values of $q\in\{10, 20, 50\}$. \cref{fig:sensitivity} presents the trace plots of  the factor dimensionality. In every case, the trace of the factor dimensionality converges to the true value regardless of the choice of $q$, which suggests that the posterior distribution is insensitive to the choice of $q$.

\begin{figure}
    \centering
    \makebox[\textwidth][c]{
        \includegraphics[scale=0.14]{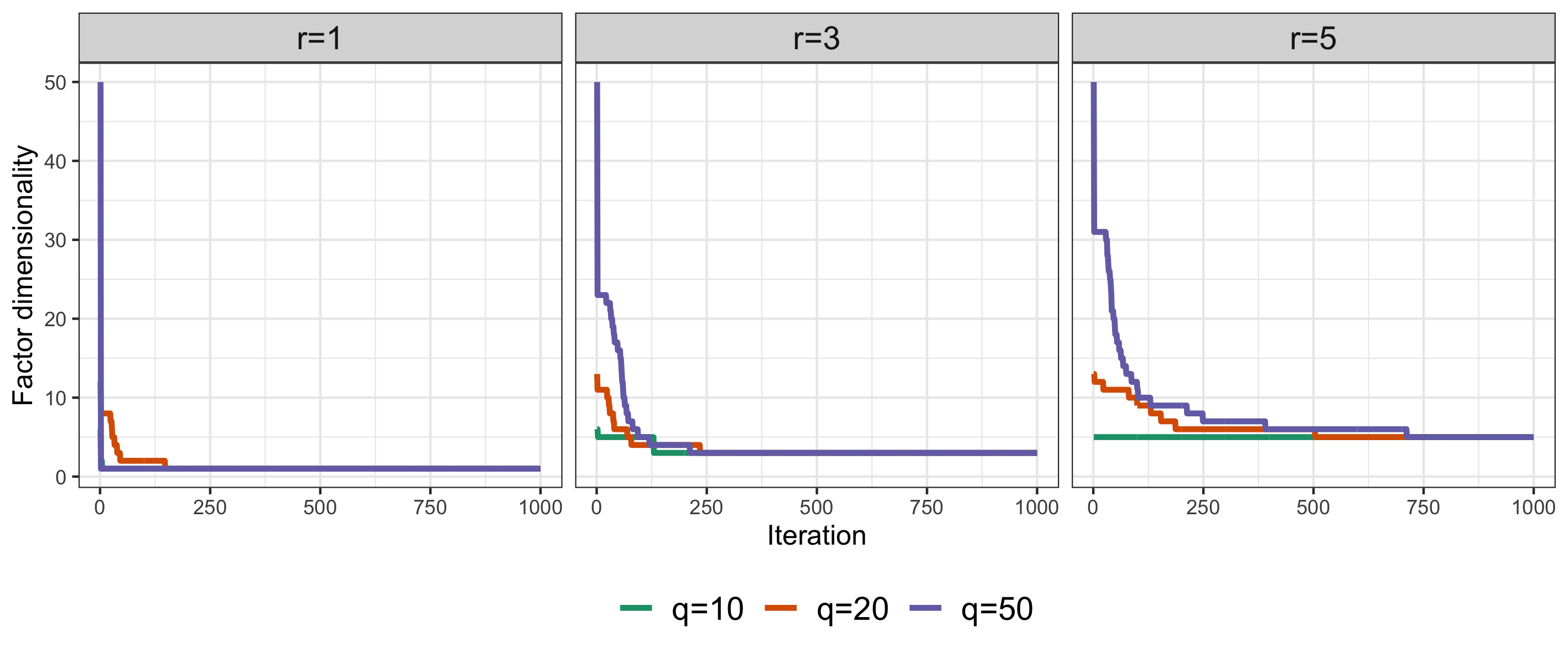}
    }
    \caption{The trace plots of the factor dimensionality for the three values of $q\in\{10, 20, 50\}$.}
    \label{fig:sensitivity}
\end{figure}

\subsection{Simulation for large samples}
\label{sec:large}

In this section, we conduct a simulation study with synthetic data with $n=500>p=300$. The setup for generating the $s$-sparse true loading matrix $\B^\star\in \R^{p\times r}$ is the same as that in the \cref{subsec:simulation}. We let the sparsity $s$ and factor dimensionality $r$ vary among  $s\in\{10, 30, 50\}$ and $r\in\{1,3,5\}$, respectively. \cref{tab:nfac_large,tab:cov_large} present the simulation results for the  factor dimensionality and covariance matrix estimation, respectively, which show that the AdaSS  performs well.

\begin{table}[ht]
\centering
\caption{Proportions of correct estimation (``True''), overestimation (``Over'') and underestimation (``Under'') of the estimated factor dimensionalities for various sparsity $s$ and true factor dimensionality $r$ obtained on 100 synthetic data sets with $n=500$ and $p=300$. ``Ave'' is the average of the estimated factor dimensionalities.}
\label{tab:nfac_large}
\begin{tabular}{ccccccccc} \hline
 $s$ & $r$ &   & ET & ER & GR & ACT & DT & AdaSS \\ \hline
 \multirow{12}{*}{10} & \multirow{4}{*}{1} & True & 83 & 100 & 100 & 94 & 100 & 100 \\ 
   &  & Over & 17 & 0 & 0 & 6 & 0 & 0 \\ 
   &  & Under & 0 & 0 & 0 & 0 & 0 & 0 \\ 
   &  & Ave & 1.19 & 1 & 1 & 1.06 & 1 & 1 \\ 
  \cline{2-9} & \multirow{4}{*}{3} & True & 96 & 73 & 74 & 25 & 81 & 97 \\ 
   &  & Over & 3 & 0 & 0 & 1 & 0 & 2 \\ 
   &  & Under & 1 & 27 & 26 & 74 & 19 & 1 \\ 
   &  & Ave & 3.02 & 2.69 & 2.71 & 2.25 & 2.81 & 3.01 \\ 
  \cline{2-9} & \multirow{4}{*}{5} & True & 91 & 28 & 29 & 0 & 22 & 78 \\ 
   &  & Over & 0 & 0 & 0 & 0 & 0 & 15 \\ 
   &  & Under & 9 & 72 & 71 & 100 & 78 & 7 \\ 
   &  & Ave & 4.9 & 3.76 & 3.86 & 2.05 & 4.09 & 5.1 \\ 
   \hline
\multirow{12}{*}{30} & \multirow{4}{*}{1} & True & 80 & 100 & 100 & 97 & 100 & 100 \\ 
   &  & Over & 20 & 0 & 0 & 3 & 0 & 0 \\ 
   &  & Under & 0 & 0 & 0 & 0 & 0 & 0 \\ 
   &  & Ave & 1.23 & 1 & 1 & 1.03 & 1 & 1 \\ 
  \cline{2-9} & \multirow{4}{*}{3} & True & 98 & 99 & 99 & 94 & 80 & 100 \\ 
   &  & Over & 2 & 0 & 0 & 6 & 0 & 0 \\ 
   &  & Under & 0 & 1 & 1 & 0 & 20 & 0 \\ 
   &  & Ave & 3.02 & 2.99 & 2.99 & 3.06 & 2.8 & 3 \\ 
  \cline{2-9} & \multirow{4}{*}{5} & True & 100 & 97 & 97 & 86 & 21 & 96 \\ 
   &  & Over & 0 & 0 & 0 & 0 & 0 & 2 \\ 
   &  & Under & 0 & 3 & 3 & 14 & 79 & 2 \\ 
   &  & Ave & 5 & 4.97 & 4.97 & 4.86 & 4.12 & 5 \\ 
   \hline
\multirow{12}{*}{50} & \multirow{4}{*}{1} & True & 78 & 100 & 100 & 98 & 100 & 100 \\ 
   &  & Over & 22 & 0 & 0 & 2 & 0 & 0 \\ 
   &  & Under & 0 & 0 & 0 & 0 & 0 & 0 \\ 
   &  & Ave & 1.26 & 1 & 1 & 1.02 & 1 & 1 \\ 
  \cline{2-9} & \multirow{4}{*}{3} & True & 99 & 100 & 100 & 91 & 0 & 98 \\ 
   &  & Over & 1 & 0 & 0 & 9 & 0 & 0 \\ 
   &  & Under & 0 & 0 & 0 & 0 & 100 & 2 \\ 
   &  & Ave & 3.01 & 3 & 3 & 3.09 & 1.12 & 2.98 \\ 
  \cline{2-9} & \multirow{4}{*}{5} & True & 100 & 100 & 100 & 100 & 0 & 97 \\ 
   &  & Over & 0 & 0 & 0 & 0 & 0 & 1 \\ 
   &  & Under & 0 & 0 & 0 & 0 & 100 & 2 \\ 
   &  & Ave & 5 & 5 & 5 & 5 & 3.23 & 4.99 \\ 
   \hline
\end{tabular}
\end{table}

\begin{table}[ht]
\centering
\caption{The averages and standard errors of the scaled spectral norm losses of the estimators of the covariance matrix obtained on 100 synthetic data sets with $n=500$ and $p=300$.} 
\label{tab:cov_large}
\begin{tabular}{cccccccc} \hline
 $s$ & $r$ & POET & SPCA-VI & MGPS & MDP & SSL-IBP & ABayes \\  \hline
 \multirow{3}{*}{10} & 1 & 0.43 (0.046) & \textbf{0.084 (0.026)} & 3.451 (1.311) & 0.343 (0.033) & 0.238 (0.022) & 0.087 (0.029) \\ 
   & 3 & 0.35 (0.049) & 0.114 (0.035) & 1.683 (1.173) & 0.292 (0.039) & 0.218 (0.03) & \textbf{0.1 (0.029)} \\ 
   & 5 & 0.316 (0.053) & 0.132 (0.044) & 0.94 (0.614) & 0.275 (0.045) & 0.227 (0.035) & \textbf{0.125 (0.044)} \\ 
   \hline
\multirow{3}{*}{30} & 1 & 0.432 (0.046) & \textbf{0.119 (0.036)} & 5.511 (2.464) & 0.342 (0.035) & 0.25 (0.032) & 0.131 (0.039) \\ 
   & 3 & 0.41 (0.044) & 0.223 (0.049) & 3.73 (1.341) & 0.33 (0.037) & 0.243 (0.024) & \textbf{0.164 (0.081)} \\ 
   & 5 & 0.379 (0.04) & 0.217 (0.034) & 2.83 (1.931) & 0.314 (0.034) & 0.247 (0.026) & \textbf{0.18 (0.042)} \\ 
   \hline
\multirow{3}{*}{50} & 1 & 0.442 (0.047) & \textbf{0.166 (0.036)} & 5.003 (2.097) & 0.348 (0.036) & 0.264 (0.029) & 0.178 (0.046) \\ 
   & 3 & 0.416 (0.044) & 0.298 (0.054) & 4.346 (1.849) & 0.333 (0.037) & 0.266 (0.029) & \textbf{0.201 (0.042)} \\ 
   & 5 & 0.412 (0.039) & 0.284 (0.036) & 3.799 (2.776) & 0.338 (0.035) & 0.258 (0.024) & \textbf{0.248 (0.087)} \\ 
   \hline
\end{tabular}
\end{table}

\end{document}